\definecolor{C0}{HTML}{3182ce}
\newtheorem{constraint}{Constraint}
\newtheorem{remark}{Remark}
\newtheorem{definition}{Definition}
\newtheorem{corollary}{Corollary}
\newtheorem{proposition}{Proposition}
\newtheorem{lemma}{Lemma}
\newtheorem{example}{Example}
\newenvironment{customprop}[1]{\innercustomprop}
{\endinnercustomprop}
\newcommand{\relu}{\mathrm{ReLU}}
\newcommand{\acto}{\mathrm{o}}
\newcommand{\actp}{\mathrm{p}}
\newcommand{\actop}{\mathrm{o}/\mathrm{p}}
\title{Gaussian Pre-Activations in Neural Networks: \\ Myth or Reality?}
\author{\name Pierre Wolinski \email pierre.wolinski@dauphine.psl.eu \\
	\addr LAMSADE, Paris-Dauphine University, PSL University, CNRS, 75016 Paris, France
	\AND
	\name Julyan Arbel \email julyan.arbel@inria.fr \\
	\addr Univ. Grenoble Alpes, Inria, CNRS, Grenoble INP, LJK, 38000 Grenoble, France}
\begin{document}\sloppy
	
\maketitle

\begin{abstract}
	The study of feature propagation at initialization in neural networks lies at the root of numerous initialization 
	designs. A very common assumption is that the pre-activations are Gaussian. 
	Although this convenient \emph{Gaussian hypothesis} can be justified when the number of neurons per layer tends to 
	infinity, it is challenged by both theoretical and experimental work for finite-width 
	neural networks.
	Our main contribution is to construct a family of pairs of activation functions and initialization distributions that ensure
	that the pre-activations remain Gaussian throughout the network depth, even in narrow neural networks, 
	under the assumption that the pre-activations are independent. 
	In the process, we discover a set of constraints that a neural network should satisfy to ensure Gaussian 
	pre-activations.
	In addition, we provide a critical review of the claims of the Edge of Chaos line of work and construct a 
	non-asymptotic Edge of Chaos analysis.
	We also propose a unified view on the propagation of pre-activations, encompassing the framework of several well-known 
	initialization procedures. 
	More generally, our work provides a principled framework for addressing the much-debated question: is it desirable to initialize the training of 
	a neural network whose pre-activations are guaranteed to be Gaussian? 
	Our code is available on GitHub: \url{https://github.com/p-wol/gaussian-preact/}.
\end{abstract}



\section*{Notations and vocabulary}

Bold letters such as $\mathbf{Z}$, $\mathbf{W}$, $\mathbf{B}$, or $\mathbf{x}$, represent tensors of order larger or equal to $1$. 
For a tensor $\mathbf{W} \in \mathbb{R}^{n \times p}$, we denote by $W_{ij} \in \mathbb{R}$ its 
component at the 
intersection of the $i$-th row and $j$-th column, $\mathbf{W}_{i\cdot} \in \mathbb{R}^{1 \times p}$ its $i$-th 
row and $\mathbf{W}_{\cdot j} \in \mathbb{R}^{n}$ its $j$-th column. 
Upper-case letters such as $\mathbf{W}$, $X$, $Y$, $Z$, or $G$, represent random variables. 
For a random variable $Z$, the function $f_Z$ represents its density, $F_Z$ its cumulative distribution function (CDF), $S_Z$ its survival function, and $\psi_Z$ its characteristic function. 
The \emph{depth} of a neural network is its number of layers. The \emph{width} of one layer is its number
of neurons or convolutional units. The \emph{infinite-width limit} of a neural network is the limiting case where
the width of each layer tends to infinity.

\section{Introduction}

Let us take a neural network with $L$ layers, in which every layer $l \in [1, L]$ performs the following 
operation:
\begin{align}
\mathbf{X}^{l + 1} &:= \phi(\mathbf{Z}^{l + 1}) , \label{eqn1:act} \\
\text{with: } \quad \mathbf{Z}^{l + 1} &:= \frac{1}{\sqrt{n_l}} \mathbf{W}^{l} \mathbf{X}^{l} + 
\mathbf{B}^{l} , \label{eqn1:preact}
\end{align}
where $\mathbf{X}^{l + 1} \in \mathbb{R}^{n_{l + 1}}$ is its \emph{activation}, $\mathbf{Z}^{l + 1} \in 
\mathbb{R}^{n_{l + 
1}}$ its \emph{pre-activation}, $\phi$ is the coordinate-wise \emph{activation function}, 
$\mathbf{W}^{l} \in \mathbb{R}^{n_{l + 1} \times n_l}$ is the \emph{weight matrix} of the layer, 
$\mathbf{B}^{l} \in \mathbb{R}^{n_{l + 1}}$ its \emph{vector of biases}, and $\mathbf{X}^{l} \in \mathbb{R}^{n_l}$ its 
\emph{input} (also the preceding layer activation). This paper focuses on the distribution of the pre-activations 
$\mathbf{Z}^{l}$ as $l$ grows, for a fixed input 
$\mathbf{x}$, and weights $\mathbf{W}^{l}$ and biases $\mathbf{B}^{l}$ randomly sampled from known distributions.

Recurring questions arise in both Bayesian deep learning and in parameter initialization procedures: How to choose the 
distribution of the parameters $\mathbf{W}^l, \mathbf{B}^l$ and according to which criteria, and how should the 
distribution 
of the pre-activations $\mathbf{Z}^l$ look like? Answering these questions is fundamental to finding efficient ways of 
initializing neural networks, that is, appropriate 
distributions for $\mathbf{W}^l$ and $\mathbf{B}^l$ at initialization. In Bayesian deep learning, this question 
is related to the search for a suitable prior, which is still a topic of intense research 
\citep{wenzel2020good,fortuin2022bayesian,fortuin2022priors,arbel2024primer,papamarkou2024position}.

\paragraph{Initialization strategies.}
A whole line of work in the field of initialization strategies for neural networks is based on  
preserving the statistical characteristics of the pre-activations as they propagate into a network. 
In short, the input of the neural network is assumed to be fixed, while all the parameters are considered to be 
randomly drawn, according to a candidate initialization distribution. Then, by using heuristics, some 
statistical characteristics of the pre-activations are deemed desirable. Finally, these statistical 
characteristics are propagated to the initialization distribution, which indicates how to choose it.

For instance, one of the first results of this kind, proposed by \cite{glorot2010understanding}, is based on the preservation of the variance of both the pre-activations and the backpropagated gradients across the layers of the neural network. The resulting constraint on the initialization distribution of the weights $\mathbf{W}^l$ is about its variance: $\mathrm{Var}(W^l_{ij}/\sqrt{n_l}) = 2/(n_l + n_{l + 1})$.%
\footnote{In the original paper, the considered weight matrix is $\tilde{\mathbf{W}}^l := \mathbf{W}^l / \sqrt{n_l}$, 
so $\mathrm{Var}(\tilde{W}^l_{ij}) = 2/(n_l + n_{l + 1})$.}
Then, \cite{he2015delving} have refined this idea by taking into account the nonlinear deformation 
of the pre-activations by the activation function.
They also showed that the inverted arithmetical average $2/(n_l + n_{l + 1})$, 
resulting from a compromise between the preservation of variance during both propagation and backpropagation, can 
be changed into $1/n_l$ or $1/n_{l + 1}$ with negligible impact on the neural network.%
\footnote{More generally, any choice of the form $1/(n_l^{\alpha} \, n_{l + 1}^{1 - \alpha})$ with $\alpha \in 
[0, 1]$ is valid, as long as the same $\alpha$ is used for all layers.}
Notably, with $\phi(x) = \mathrm{ReLU}(x) := \max(0, x)$, they obtain $\mathrm{Var}(W_{ij}^l) = 2/n_l$, where 
the factor $2$ is meant to compensate for the loss of information due to the fact that $\mathrm{ReLU}$ is zero on 
$\mathbb{R}^-$.
We summarize and compare several initialization strategies 
in Table \ref{tbl:commut} (Section \ref{sec:prop:commut})

\paragraph{Edge of Chaos (EOC).} After the studies revolving around initialization, \cite{poole2016exponential} and \cite{schoenholz2016deep} focused on
the correlation between the pre-activations of two data points $\mathbf{x}_a$ and $\mathbf{x}_b$.
By preserving this correlation when propagating the inputs into the neural network, information about the 
geometry of the input space is meant to be preserved. Thus, training the weights is meaningful at initialization,
regardless of their positions in the network.
This heuristic is finer than the previous ones, since attention is paid to the correlation between 
pre-activations and their variance (a joint criterion is used instead of a marginal one).
The range of valid initialization distributions is modified accordingly, with a relation between $\sigma_w^2 = 
\mathrm{Var}(W^l_{ij})$ and $\sigma_b^2 = \mathrm{Var}(B^l_i)$ that should be ensured.
This specific relationship is referred to as the \emph{Edge of Chaos} (EOC).

Finally, it is worth mentioning the work of \cite{hayou2019impact}, in which the usual claims about the EOC 
initialization are tested with several choices of activation functions.
Notably, the authors have run a large series of experiments in order to check whether the intuition
behind the EOC initialization leads to better performance after training.
Also, at the opposite of finding an initialization distribution for the parameters, 
\cite{klambauer2017self} focused on tuning the parameters of the activation function (leading to the $\mathrm{SELU}$ 
activation function). As in the preceding techniques, they aim for variance-preserving layers.

In the following, the term ``Edge of Chaos''  is used in two different manners: ``EOC framework'', ``EOC formalism'', or ``EOC theory''  refer to a setup where input data points are deterministic and weights and biases are 
random, whereas in the context of initialization of weights and biases, ``EOC'' alone refers to a specific set 
of pairs $(\sigma_w^2, \sigma_b^2)$ matching a given theoretical condition (Point 2, see Section~\ref{sec:prop:prop}).

\paragraph{Bayesian prior and initialization distribution.} 
There exists a close relationship between the initialization distribution in deterministic neural networks and the 
prior distribution in Bayesian neural networks.
For instance, let us use variational inference to approximate the Bayesian posterior of the parameters of a neural
network \citep{graves2011practical}. In this case, the Bayesian posterior is approximated sequentially by
performing a gradient descent over the so-called ``variational parameters'' \citep{hoffman2013stochastic}.
This technique requires to backpropagate the gradient of the loss through the network,
as when training deterministic networks.
Therefore, as with the initialization distribution, the prior distribution must be constructed in such a way that
the input and the gradient of the loss propagate and backpropagate correctly
\citetext{\citealp[Sec.\ 2.2,][]{ollivier2018online}; \citealp[Chap.\ 2,][]{neal1996bayesian}}.

\paragraph{Gaussian hypothesis for the pre-activations.}
In the context of random weights and biases, we call the \emph{Gaussian hypothesis} the assumption that all the 
pre-activations $Z_i^l$ are Gaussian random 
variables, at any layer $l$ and for any neuron $i$.
This hypothesis is common in the theoretical analysis of the properties of neural networks at initialization. 
Specifically, this is a fundamental assumption when studying the ``Neural Tangent 
Kernels'' (NTK) \citep{jacot2018neural} or Edge of Chaos \citep{poole2016exponential}.
In a nutshell, the NTK is an operator describing the optimization trajectory of an \emph{infinitely wide} 
neural network (NN), 
which is believed to help understand the optimization of ordinary NNs.
On one side, this Gaussian hypothesis can be justified in the case of ``infinitely wide'' 
NNs (i.e., when the widths $n_l$ of the layers tend to infinity), by application of the Central Limit 
Theorem \citep{matthews2018gaussian}.
On the other side, this Gaussian hypothesis is apparently necessary to get the results of the EOC and NTK lines of work. 
However, it remains debated for both theoretical and practical reasons.

First, from a strictly theoretical point of view, it has been shown that, for finite-width
NNs (finite $n_l$), the distribution of $Z^{l}_i$ has heavier tails as $l$
increases, that is, as information flows from the input to the output 
\citep{vladimirova2019understanding,vladimirova2021bayesian}.
Second, a series of experiments tend to show that pushing the distribution of the
pre-activations towards a Gaussian (e.g., through a specific Bayesian prior) leads to worse performances than pushing 
it towards distributions with heavier tails, e.g., Laplace distribution \citep{fortuin2022bayesian}.

Besides, the condition under which the Gaussian hypothesis remains valid is an important source of confusion. As an 
example, \cite{sitzmann2020implicit} state that: ``for a uniform input in $[-1, 1]$, the [pre-]activations throughout a 
SIREN%
\footnote{Sinusoidal representation network.}
are standard normal distributed [...], irrespective of the depth of the 
network, if the weights are distributed 
uniformly in the interval $[-c, c]$ with $c = \sqrt{6}/n_{l}$ in each layer [$l$].'' (Theorem 1.8, Appendix 1.3). 
Though this formal statement seems to hold for all layers and whatever their widths, it is only an asymptotic 
result, since it uses the Central Limit Theorem in its proof. 
Consequently, this theorem is not usable in practical SIRENs, since it does not provide any 
speed of convergence of the distribution of the pre-activations to a Gaussian, as each $n_l$ tends to 
infinity.%
\footnote{According to \citet[Th.\ 4,][]{matthews2018gaussian}, the pre-activations tend to become
	Gaussian irrespective of the growth rates of each $n_l$, so Theorem 1.8 of \cite{sitzmann2020implicit} is 
	\emph{asymptotically true for all layers and all growth rates of each $n_l$}.
	But this result still does not provide any convergence speed.}

\paragraph{Deviation from the Gaussian distribution in the finite-width case.} 
More recently, several attempts have been made to characterize the distribution
of the pre-activations in the finite-width case. 
In particular, \cite{yaida2020non,balasubramanian2024gaussian,trevisan2023wide,basteri2024quantitative,favaro2023quantitative}
provide bounds on the discrepancy between the actual distribution of the pre-activations
in the finite-width case and the Gaussian limit in the infinite-width case. 
One may also refer to \citet{Roberts_Yaida_Hanin_2022} for a complete study. 
Although these works provide results for finite-widths networks, 
they only apply when the widths are large. 
Additionally, \citet{vladimirova2019understanding,vladimirova2021bayesian} show that for finite-width
NNs, the pre-activation distribution gets heavier-tailed when getting deeper into the neural network.
From another point of view, \cite{noci2021precise} provides 
an analytical expression is provided for the distribution of the pre-activations,
but only in terms of Meijer-G functions \citep{erdelyi1953higher}, which 
are difficult to manipulate.

\paragraph{Addressing the limitations of the EOC analysis.}
Preserving the shape of the distribution of the pre-activations is
a key assumption of the EOC works. This assumption is usually justified
if we assume that the considered network is in the NTK regime, which means that
the layer widths are large enough to ensure that the Neural Tangent Kernel is constant 
throughout training.
However, this assumption is generally not true and has been discussed in
several papers. For instance, \cite{Hanin2020Finite}
showed that if the ratio depth/width does not tend to zero, the NTK varies
significantly during training. This illustrates that the NTK is unlikely to be constant 
during training for narrow and deep neural networks. 
This phenomenon has also been observed empirically by \cite{seleznova2022analyzing}.

In parallel, another limitation of the EOC has been spotted: 
with the EOC initialization scheme, the correlation between
the pre-activations given by two different outputs tends to
a unique real number, independent of the input data.
Therefore, the EOC initialization inevitably leads to degenerate
correlations in deeper layers, which means that the geometry of the input space is progressively 
forgotten during propagation. This phenomenon has been studied 
extensively by \cite{martens2021rapid},
and \cite{martens2021rapid,zhang2022deep} solve
this problem with an innovative solution: 
transform the activation function in order to ``shape'' the kernel.

This idea of ``kernel shaping'' through a transformation 
of the activation function has then been used by \cite{li2022neural}
to solve both issues presented above: the NTK regime does not hold in practice,
and the EOC initialization leads to degenerate correlations.
So, \cite{li2022neural} have proposed to take into account the $O(1/\sqrt{n})$ deviation 
from the infinite-width limit $n \rightarrow \infty$. 
This results in a set of modifications of the activation functions 
that are adapted to $n$ and, therefore, are also dependent on $n$.

\paragraph{Motivation.}

In the works cited above, the general problem is to control the distribution of pre-activations, 
mostly in order to design good initialization strategies. 
To this end, two approaches have been used: 
1.~describing analytically this distribution with given activation functions
\citep[e.g.,][]{noci2021precise,favaro2023quantitative}; 
2.~tuning the activation function to make the pre-activation match a given property
\citep[e.g.,][]{zhang2022deep,li2022neural}.
In the current work, we take approach 2.~by constructing activation functions and initialization
distributions to provide Gaussian pre-activation at initialization,
\emph{without relying on the infinite-width limit hypothesis or
any asymptotic expansion around that limit}.

The starting point of our study is the original EOC works \citep{schoenholz2016deep,poole2016exponential},
which rely only on the preservation of the shape of the pre-activations.
This behavior is usually justified by the central limit theorem in the infinite-width limit, 
which is one of the main assumptions in the works cited above.
The direction we explore in this paper is different: we study the shape of the distributions
of the pre-activations themselves.

So, we first measure the Gaussianity of the pre-activations.
Since the EOC relies on the assumption that the pre-activations are Gaussian,
it is necessary to check it experimentally and directly, regardless
of how close our neural network is to the NTK regime (i.e., infinite-width limit). 
Second, as \cite{martens2021rapid,zhang2022deep,li2022neural}, 
we propose to construct new activation functions. tions is preserved.
But, unlike these works, we focus on the Gaussianity of the pre-activations, 
so we do not rely on the infinite-width limit hypothesis or
any asymptotic expansion around that limit. 
To do so, we study individual weights, and, unlike \cite{li2022neural}, the resulting activation functions
do not depend on the width of the neural network. 
This approach explicitly allows the study of very narrow neural networks, regardless
of the depth. 

More generally, the goal of the series of articles presented above, of which this one is a part, is to propose a principled framework for
studying the neural networks at initialization, while proposing initialization
schemes and activation functions to solve problems we may encounter at initialization.
To study the distribution of the pre-activations in detail, several strategies can be used.
As explained in Section \ref{sec:common-assumptions}, such a study is very difficult to achieve
for arbitrary activation functions. Moreover, these studies are based on deviations from the infinite-width limit
\citep{yaida2020non,balasubramanian2024gaussian,favaro2023quantitative}.
The strategy we adopt in this paper is the opposite: we construct activation functions that, by design,
should lead to pre-activations whose distribution is known.

\paragraph{Contributions.}
Therefore, we present in this paper a new framework for studying neural networks at initialization
and propose new families of initialization distributions and activation functions.
In particular, to obtain results that are independent of how close we are to the NTK regime, 
we focus on the distribution of the pre-activations during propagation. 
First, we experimentally test their Gaussianity, i.e., whether the \emph{Gaussian hypothesis}
holds or not. We show that it does not hold in many cases.
Second, we construct families of activation functions and initialization distributions
to make this Gaussian hypothesis hold, and we show their construction process.
Since this process is flexible and can be used to construct other activation functions,
it is a contribution in itself.

To be more specific:
\begin{itemize}
	\item we experimentally demonstrate that the Gaussian hypothesis is mostly invalid in multilayer perceptrons 
	with finite width (Section~\ref{sec:prop:prop} and Section~\ref{sec:prop:realistic});
	\item contrary to a claim of \cite{poole2016exponential} and usual practical results in the 
	EOC framework, 
	we show in Proposition~\ref{prop:counter} that the variance of the pre-activations does not always have at most one nonzero attraction point; we provide 
	an example of an activation function for which the number of such attraction points is infinite (Section~\ref{sec:prop:counter});
    \item we propose a unified view on pre-activations propagation, encompassing the framework of several well-known 
	initialization procedures (Section~\ref{sec:prop:commut});
	\item we deduce a set of constraints that the activation function and the initialization distribution of weights 
	and biases must fulfill to guarantee Gaussian pre-activations at initialization under a pre-activations' independence assumption -- including with 
	finite-width layers (Section~\ref{sec:solve:decomposing} and Section~\ref{sec:solve:why_weibull});
	\item we propose new families of activation functions, denoted by $(\phi^{\acto}_{\theta})_{\theta}$
	and $(\phi^{\actp}_{\theta})_{\theta}$, and initialization distributions designed to achieve this 
	goal of Gaussian pre-activations at initialization (Section~\ref{sec:solve:product} and Section~\ref{sec:solve:function});
	\item we empirically demonstrate that, with our activation functions $(\phi^{\actp}_{\theta})_{\theta}$,
	the distribution of the pre-activations remains Gaussian during propagation, while it
	drifts away from the standard Gaussian when using $\mathrm{tanh}$ and 
	$\mathrm{ReLU}$ (Sections \ref{sec:expe:ks} and \ref{sec:expe:multilayer});
	\item we show that, overall, when training very narrow and deep networks, the usual setups with 
	$\tanh$ or $\mathrm{ReLU}$ activation functions and EOC initialization lead to 
	worse performance than with our activation functions $(\phi^{\acto}_{\theta})_{\theta}$ 
	(Section \ref{sec:expe:training}).
\end{itemize}
Additionally, we train, evaluate and compare neural networks built according to our family of activation 
functions and initialization distributions, and usual ones ($\mathrm{tanh}$ or $\mathrm{ReLU}$, Gaussian EOC 
initialization).

\paragraph{Summary of the paper.}
We begin by reviewing the widely used assumptions about pre-activations in Section \ref{sec:common-assumptions}.
Then, we make in Section~\ref{sec:propagating} a critical review of several results about pre-activations 
propagation in a neural network: the discussion, additional experiments, and the criticism we are 
proposing, particularly in the EOC line of works, are the foundations of our contributions. 
In Section~\ref{sec:imposing_th}, we propose a new family of activation functions, along with a 
family of initialization distributions. They are defined so as to ensure that the 
pre-activations distribution propagates without deformation across the layers, including with networks that are far 
from the ``infinite-width limit''. 
More specifically, we ensure that the pre-activations remain Gaussian at any layer, and we provide a set of constraints
that the activation function and the initialization distribution of the parameters should match to attain this goal, under a pre-activations' independence assumption.
Finally, we propose in Section~\ref{sec:experiments} a series of simulations in order to check whether 
our propositions meet the requirement of maintaining Gaussian pre-activations across neural networks. 
We also show the performance of trained neural networks in different setups, including standard ones and the 
one we are proposing.

\section{Common assumptions about pre-activations}
\label{sec:common-assumptions}

Several attempts have been made to provide an accurate description of
the pre-activations at initialization.

\paragraph{Neural networks as Gaussian processes.}
A line of works, initiated by \cite{neal1996bayesian}, has focused on the study of the pre-activations
of neural networks on their infinite-width limit, that is, the limit of infinite number of neurons per layer.
Notably, it has been proven \citep{matthews2018gaussian,lee2018deep} that the outputs and the pre-activations of
a randomly initialized neural network tend to a Gaussian process
with known mean and covariance. The proof can be achieved by using the fact
that the pre-activations of the same layer are exchangeable random variables, 
making it possible to use a variation of the central limit theorem at each layer.
These first results have led to the development of the Neural Tangent Kernel
framework \citep{jacot2018neural,lee2019wide,arora2019exact}, which 
allows us to study the full optimization trajectory of a neural network trained by gradient descent
\citep{du2019gradient,allen2019convergence,arora2019fine}.

Despite the significance of these convergence results, the discussion about 
the practicality of their hypotheses is still active,
notably about the infinite number of neurons per layer
and the linear training dynamic.
Besides, two essential by-products of these hypotheses are subject to discussion:
the fact that the pre-activations are \emph{Gaussian} and \emph{independent} in the infinite-width limit.

\paragraph{Assumption of independent pre-activations.}
The assumption that pre-activations of the same layer are independent is very common in the
literature on neural networks' initialization. For instance, 
this assumption is explicitly made in \cite{he2015delving}.
In the Edge of Chaos literature, the pre-activations of the same layer are implicitly assumed to be independent
\citep{poole2016exponential,schoenholz2016deep}:
at a given layer, the vector of pre-activations is assumed to be a Gaussian vector
with a diagonal covariance, which implies independence.
More broadly, as soon as the infinite-width limit is taken, the pre-activations tend
to Gaussian processes with zero covariance between pre-activations of the same layer
\citep{matthews2018gaussian,jacot2018neural},
that is independence.

\section{Propagating pre-activations} \label{sec:propagating}

In this section, we propose a critical review of several aspects of the Edge of Chaos framework. We recall the 
fundamental ideas of the EOC in Section~\ref{sec:prop:prop}. In Section~\ref{sec:prop:realistic}, we perform some 
experiments at the initialization of a multilayer perceptron, in which we propagate data points sampled from CIFAR-10. 
These results illustrate a limitation of the EOC framework when using neural networks with a small number of neurons 
per layer. Then, we build in Section~\ref{sec:prop:counter} an activation function such that the variance of the 
propagated pre-activations admits an infinite number of stable fixed points, which is a counterexample to a claim of 
\cite{poole2016exponential}. Finally, we propose in Section~\ref{sec:prop:commut} a unified representation of several 
initialization procedures.

\subsection{Propagation of the correlation between data points} \label{sec:prop:prop}

\paragraph{Edge of Chaos (EOC) framework.}
In the EOC line of work, the inputs of the neural network are supposed to be fixed, while the weights and biases are random. 
In order to study the propagation of the distribution of the pre-activations $\mathbf{Z}^l$, \cite{poole2016exponential} and \cite{schoenholz2016deep} propose to study two quantities:
\begin{align}
	v_{a}^l &:= \mathbb{E} [(Z_{j;a}^l)^2] , \label{eqn:def_v} \\
	c_{ab}^l &:= \frac{1}{\sqrt{v_{a}^l v_{b}^l}} \mathbb{E} [Z_{j;a}^l Z_{j;b}^l] , \label{eqn:def_c}
\end{align}
where $Z_{j;a}^l$ is the $j$-th coordinate of the vector $\mathbf{Z}^l_a$ of pre-activations before layer $l$, when the 
input of the neural network is a data point $\mathbf{x}_a$. The expectation is computed over the full set of the 
parameters, i.e., weights and biases. So, we can interpret $v_{a}^l$ as the variance of the pre-activations of a data 
point $\mathbf{x}_a$, and $c_{ab}^l$ as the correlation between the pre-activations of two data points $\mathbf{x}_a$ 
and $\mathbf{x}_b$, over random initializations of the parameters, distributed independently in the following way:
\begin{alignat}{10}
	W_{ij}^l &\sim \mathrm{P}_w(\sigma_w) & \quad\text{ with } \quad&& \mathbb{E}[W_{ij}^l] &= 0 & \quad\text{ and }\quad && \mathrm{Var}(W_{ij}^l) &= \sigma_w^2, \label{eqn:dist_w} \\
	B_{i}^l &\sim \mathrm{P}_b(\sigma_b) & \quad\text{ with } \quad&& \mathbb{E}[B_{i}^l] &= 0 & \quad\text{ and }\quad && \mathrm{Var}(B_{i}^l) &= \sigma_b^2 \label{eqn:dist_b} .
\end{alignat}

\begin{remark}
	Since the parameters are sampled independently with zero-mean, then:
	\begin{align}
		\mathbb{E} [Z_{j_1;a}^l Z_{j_2;a}^l] &= v_{a}^l \delta_{j_1 j_2}, \\
		\frac{1}{\sqrt{v_{a}^l v_{b}^l}} \mathbb{E} [Z_{j_1;a}^l Z_{j_2;b}^l] &= c_{ab}^l \delta_{j_1 j_2}.
	\end{align}
	This is why Definitions~\eqref{eqn:def_v} and~\eqref{eqn:def_c} do not depend on $j$, and the crossed 
	terms in $j_1$ and $j_2$ are not worth studying (they are zero).
\end{remark}

\begin{remark}
	The distributions of $W_{ij}^l$ and $B_{i}^l$ considered by \cite{poole2016exponential} and \cite{schoenholz2016deep} are normal with zero-mean, that is:
	\begin{align*}
		\mathrm{P}_w(\sigma_w) = \mathcal{N}(0, \sigma_w^2) \quad \text{ and } \quad \mathrm{P}_b(\sigma_b) = \mathcal{N}(0, \sigma_b^2) .
	\end{align*}
	We loosen this assumption in~\eqref{eqn:dist_w} and~\eqref{eqn:dist_b}, where we assume that these random 
	variables are zero-mean with a variance we can control. Their theoretical claim remains valid under this broader
	assumption.
\end{remark}

\paragraph{Theoretical analysis.}
Given two fixed inputs $\mathbf{x}_a$ and $\mathbf{x}_b$, the goal of the EOC theory is to study the propagation of the correlation $c_{ab}^{l}$ of $Z_{j;a}^{l}$ and $Z_{j;b}^{l}$ which goes as follows
\citep{poole2016exponential,schoenholz2016deep}:
\begin{enumerate}
	\item build recurrence equations for $(c_{ab}^{l})_l$ of the form: $c_{ab}^{l + 1} = f(c_{ab}^{l})$; \label{pt:eoc_1}
	\item describe the dynamics of $(c_{ab}^{l})_l$; \label{pt:eoc_2}
	\item provide a procedure to compute the variance of the weights' and biases' distributions such that $(c_{ab}^{l})_l$ tends to $1$ with a sub-exponential rate (instead of an exponential rate). \label{pt:eoc_3}
\end{enumerate}

\paragraph{Point 1: recurrence equations.}
Point~\ref{pt:eoc_1} is achieved by using the Gaussian hypothesis for the pre-activations. That is, the distribution of 
the pre-activations $\mathbf{Z}^{l}$ is assumed to be Gaussian, whatever the layer $l$ and its width $n_l$. The 
recurrence equations define a variance map $\mathcal{V}$ and a correlation map $\mathcal{C}$ as follows:
\begin{align}
	v_{a}^{l + 1} &= \mathcal{V}(v_{a}^l | \sigma_w, \sigma_b) := \sigma_w^2 \int \phi\left(\sqrt{v_{a}^l}z\right)^2 
	\, \mathcal{D}z + \sigma_b^2 \label{eqn:rec_v}\\
	c_{ab}^{l + 1} &= \mathcal{C}(c_{ab}^l, v_{a}^{l}, v_{b}^{l} | \sigma_w, \sigma_b) 
		:= \frac{1}{\sqrt{v_{a}^{l + 1} v_{b}^{l + 1}}} \left[ \sigma_w^2 \int \phi\left(\sqrt{v_{a}^l} 
		z_1\right)\phi\left(\sqrt{v_{b}^l} z_2'\right) \, \mathcal{D}z_1 \mathcal{D}z_2 + \sigma_b^2 \right] , 
		\label{eqn:rec_c} \\
	 &\text{ where } z_2' := c_{ab}^l z_1 + \sqrt{1 - (c_{ab}^l)^2} z_2 , 
	\quad \text{ and } \quad\mathcal{D}z := \frac{1}{\sqrt{2 \pi}} \exp\left(-\frac{z^2}{2}\right) \, \mathrm{d}z . \label{eqn:def_Dz}
\end{align}

These equations are approximations of the true information propagation dynamics, which involves necessarily the number of neurons $n_l$ per layer. Actually, passing a Gaussian vector $\mathbf{Z}^l$ through a layer with random Gaussian weights and biases produces a pre-activation $\mathbf{Z}^{l + 1}$ with a distribution which is difficult to describe. On one side, as the dimension $n_l$ of the input $\mathbf{Z}^l$ tends to infinity, the Central Limit Theorem (CLT) applies, and the output $\mathbf{Z}^{l + 1}$ tends to become Gaussian.%
\footnote{Even if the coordinates $Z_j^l$ are dependent, the CLT is still valid, as proven by 
\cite{matthews2018gaussian} by using properties of exchangeable random variables \citep{de1937prevision}.}
The assumption that the components of $\mathbf{Z}^{l + 1}$ are Gaussian
is referred to as the \emph{Gaussian hypothesis}.
On the other side, with finite $n_l$, the tail of the distribution of $\mathbf{Z}^{l + 1}$ has been proven to become 
heavier than the Gaussian one, both theoretically \citep{vladimirova2019understanding,vladimirova2021bayesian} and 
experimentally (see Section~\ref{sec:prop:realistic}).

Finally, by assuming that $(v_{a}^l)_l$ and $(v_{b}^l)_l$ have already converged to the same limit $v^* \neq 0$ 
as $l \rightarrow \infty$, 
it is possible to rewrite Equation~\eqref{eqn:rec_c} in a nicer way:
\begin{align}
	c_{ab}^{l + 1} &= \mathcal{C}_*(c_{ab}^l) := \mathcal{C}(c_{ab}^l, v^*, v^* | \sigma_w, \sigma_b) . 
	\label{eqn:rec_c_simple}
\end{align}
Now that the dynamics of $(c_{ab}^l)_l$ is written in the form $c_{ab}^{l + 1} = 
\mathcal{C}_*(c_{ab}^l)$, it becomes sufficient to plot $\mathcal{C}_*$ to study its convergence. 
The two hypotheses made, i.e., the Gaussian one and the instant convergence of $(v_{a}^l)_l$ 
and $(v_{b}^l)_l$ to a unique nonzero limit $v^*$, are fundamental to obtaining the simple equation of 
evolution~\eqref{eqn:rec_c_simple}.

\paragraph{Point 2: dynamics of the correlation through the layers.}
Then, Point~\ref{pt:eoc_2} can be achieved. Now that the trajectory of $(c_{ab}^{l})_l$ is determined only by 
the function $\mathcal{C}_*$, it becomes easy to find numerically its limit $c^{*}$ and its rate of convergence. 
Specifically, we can distinguish three possible cases:
\begin{itemize}
	\item \emph{chaotic phase}: $\lim_{l \rightarrow \infty} c_{ab}^{l} = c^{*} < 1$. The correlation between 
	$Z_{j;a}^{l}$ and $Z_{j;b}^{l}$ tends to a constant that is strictly less than $1$. So, even if $Z_{j;a}^1$ and 
	$Z_{j;b}^1$ are highly correlated (which means that $\mathbf{x}_a$ and $\mathbf{x}_b$ are close to each other), 
	they tend to decorrelate when going deeper in the network;
	\item \emph{ordered phase}: $\lim_{l \rightarrow \infty} c_{ab}^{l} = c^{*} = 1$ with 
	$\mathcal{C}_*'(1) < 1$ (the prime here denotes the derivative of a function). The correlation between 
	$Z_{j;a}^{l}$ and $Z_{j;b}^{l}$ tends to $1$ with 
	an exponential rate, including when $Z_{j;a}^{1}$ and $Z_{j;b}^{1}$ are almost fully decorrelated;
	\item \emph{edge of chaos}: $\lim_{l \rightarrow \infty} c_{ab}^{l} = c^{*} = 1$ with 
	$\mathcal{C}_*'(1) = 1$. The correlation between $Z_{j;a}^{l}$ and $Z_{j;b}^{l}$ tends to $1$ 
	with a sub-exponential rate, including when $Z_{j;a}^{1}$ and $Z_{j;b}^{1}$ are almost fully decorrelated.
\end{itemize}

\paragraph{Point 3: best choices for the initialization distribution.}
\cite{poole2016exponential} and \cite{schoenholz2016deep} claim that pairs $(\sigma_w^2, \sigma_b^2)$ which lead either 
to the chaotic phase or the ordered phase should be avoided. In both cases, we expect that information contained in the 
propagated data (or the backpropagated gradients) would vanish at an exponential rate. So, we want to find pairs 
$(\sigma_w^2, \sigma_b^2)$ lying ``at the edge of chaos'', that is, making the sequence $(c_{ab}^{l})_l$ converge to 
$1$ at a sub-exponential rate.
The \emph{Edge of Chaos initializations} are the
initialization distributions of the weights and the biases such that the pair of variances 
$(\sigma_w^2, \sigma_b^2)$ lies at the Edge of Chaos.

\begin{remark}
	Even in the favorable edge of chaos configuration, the sequence of correlations $(c_{ab}^{l})_l$ tends to 
	$1$, whatever the data points $\mathbf{x}_a$ and $\mathbf{x}_b$. So a loss of information at initialization seems 
	unavoidable in very deep networks. 
	
	If one wants to create an initialization procedure with a smaller information loss, it becomes reasonable
	to consider data-dependent initialization schemes (i.e., a warm-up phase before training). 
	Such an initialization strategy has been sketched by \cite{mao2021neuron}, who make 
	use of the ``Information Bottleneck'' formalism \citep{tishby1999information, shwartz2017opening, 
	saxe2019information}.
\end{remark}

\paragraph{Strengths and weaknesses of the Edge of Chaos framework.}
One key feature of the Edge of Chaos framework is the simplicity of the recurrence 
equation~\eqref{eqn:rec_c_simple}: it involves only the correlation $c_{ab}^{l}$ as a variable, and all other 
parameters (such as $v_a^l$ and $v_b^l$) are assumed to be fixed once and for all. 
Notably, in Equation~\eqref{eqn:rec_c}, the computation of $c_{ab}^{l + 1}$ involves 
the distribution of the pre-activations outputted by layer $l$, which is assumed to be $\mathcal{D} = \mathcal{N}(0, 
1)$. 
In other words, the distribution $\mathcal{D}^l$ of the pre-activations $\mathbf{Z}^l$ is assumed to be constant 
and equal to $\mathcal{D}$.
However, in neural networks with finite widths, $\mathcal{D}^l$ is not constant and evolves according to a 
propagation equation:
\begin{align}
\mathcal{D}^{l + 1} \text{ is the distribution of } \mathbf{Z}^{l + 1} = \frac{1}{\sqrt{n_l}} \mathbf{W}^{l} 
\phi(\mathbf{Z}^{l}) + \mathbf{B}^{l}, \text{where } \mathbf{Z}^{l} \sim \mathcal{D}^l.
\end{align}

Such an equation involves a sum of products of random variables, which is usually difficult to keep track of.%
\footnote{For instance, a product of two Gaussian random variables is not Gaussian.}
Though \cite{noci2021precise} have proposed an analytical procedure to compute the $\mathcal{D}^l$ explicitly, 
it works only for 
networks with $\mathrm{ReLU}$ or linear activation functions, and involves Meijer G-functions 
\citep{erdelyi1953higher}, which are difficult to handle numerically.

\subsection{Results on realistic datasets} \label{sec:prop:realistic}

As far as we know, there does not exist any experimental result about the propagation of the correlations
with a non-synthetic dataset and a finite-width neural network.
We propose to visualize in Figure~\ref{fig:corr_main} the propagation of correlation $c^l_{ab}$ with dataset 
CIFAR-10 (results on MNIST are reported in Appendix~\ref{app:expe:extra:MNIST}), in the case of the multilayer perceptron with various numbers of neurons per layer $n_l$ (i.e., 
widths). Then, we show in Figure~\ref{fig:prop:relu_tanh} the distance between the standardized distribution of the 
pre-activations and the standard Gaussian $\mathcal{N}(0, 1)$. 

\paragraph{Propagation of the correlations.}
First, we have sampled randomly $10$ data points in each of the $10$ classes of the CIFAR-10 dataset, that is 
$100$ in total for each dataset. Then, for each tested neural network (NN) architecture, we repeated $n_{\mathrm{init}} 
= 1000$ times the following operation: (i) sample the parameters according to the EOC;%
\footnote{For the $\tanh$ activation function, a study of the EOC can be found in \cite{poole2016exponential}. For 
$\relu$, the EOC study is more subtle and can be found in \cite{hayou2019impact}.}
(ii) propagate the $100$ data points in the NN. 
Thereafter, for each pair $(\mathbf{x}_a,\mathbf{x}_b)$ of the selected $100$ data points, we have computed the 
empirical correlation $c_{ab}^l$ between the obtained pre-activations, averaged over the $n_{\mathrm{init}}$ samples. 
Finally, we have averaged the results over the classes: the matrix $C_{pq}^l$ plotted in Figure~\ref{fig:corr_main} 
shows the mean of the correlation $c_{ab}^l$ for data points $\mathbf{x}_a$ and $\mathbf{x}_b$ belonging respectively 
to classes $p$ and $q$ in $\{0, \cdots , 9\}$.%
\footnote{Correlations $c_{ab}^l$ with $a = b$ have been excluded from 
the computation to show the intra-class correlation between \emph{different} samples.} 
Only the experiments with CIFAR-10 are reported in Figure~\ref{fig:corr_main}; the results on MNIST, which are similar, 
are reported in Figure~\ref{fig:corr_suppl} in Appendix~\ref{app:expe:extra:MNIST}. 

In accordance with the theory of the EOC, we observe in Figure~\ref{fig:corr_main}
that the average correlation between pre-activations tends to $1$, except in the case $\phi = \relu$ and 
$n_l = 10$. In this case, it is not even clear that the sequences of correlations $(C_{pq}^l)_l$ converge at all, since 
some inter-class correlations are lower at $l = 30$ than $l = 10$, while we expected them to grow until $1$.
There is also a difference between activation functions $\mathrm{tanh}$ and $\mathrm{ReLU}$: the convergence to $1$ 
seems to be much quicker with $\phi = \mathrm{ReLU}$ than with  $\phi = \mathrm{tanh}$, when $n_l = 100$.

We observe that the rate of convergence of $(C^l_{pq})_l$ towards $1$ not only varies with the NN 
width $n_l$ but also varies in different directions depending on the activation function. When $n_l$ grows from $10$ 
to $100$, the convergence of $(C_{pq}^l)_l$ to $1$ slows down with $\phi = \mathrm{tanh}$, while it accelerates with 
$\phi = \mathrm{ReLU}$. Since this striking inconsistency with the EOC theory is related to $n_l$, it is due 
to the ``infinite-width'' approximation, precisely made to eliminate the dependency on $n_l$ in the recurrence 
Equations~\eqref{eqn:rec_v} and~\eqref{eqn:rec_c}, and consequently simplify them.

\begin{figure}[p!]
	\setlength\tabcolsep{5pt}
	\begin{tabular}{cccccc}
		& $\phi$ & input & $l = 10$ & $l = 30$ & $l = 50$ \\
		\makecell[b]{\parbox[t]{2mm}{\multirow{2}{*}{\rotatebox[origin=c]{90}{$n_l = 10$ neurons per layer}}}\\ \\ \\ \\ \\} & \makecell[b]{$\mathrm{ReLU}$ \\ \\ \\ \\} &
		\includegraphics[page=1,width=.20\linewidth]{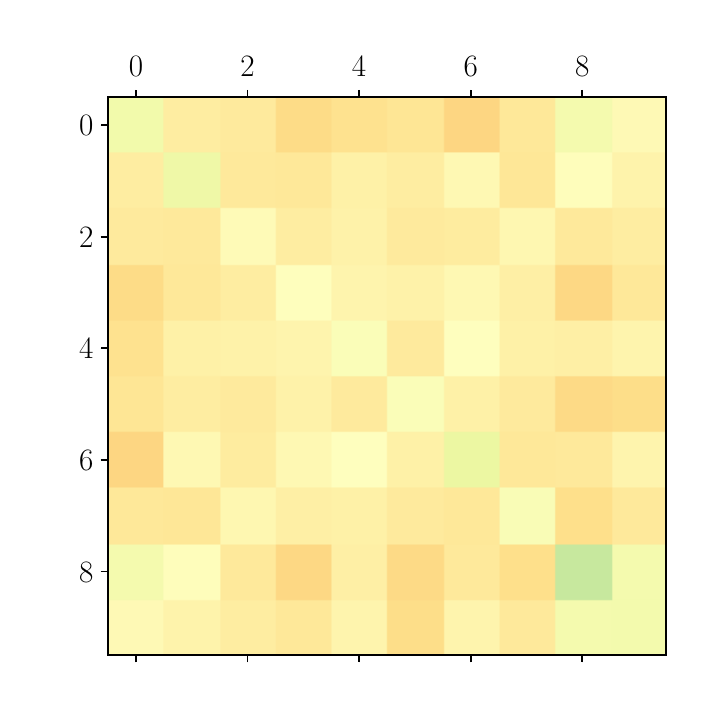}
		 &
		\includegraphics[page=11,width=.20\linewidth]{preactiv-init-cifar10-02/StatsEcab_wth-10_act-relu_the-0.00_sampler-normal_.pdf}
		 &
		\includegraphics[page=31,width=.20\linewidth]{preactiv-init-cifar10-02/StatsEcab_wth-10_act-relu_the-0.00_sampler-normal_.pdf}
		 &
		\includegraphics[page=51,width=.20\linewidth]{preactiv-init-cifar10-02/StatsEcab_wth-10_act-relu_the-0.00_sampler-normal_.pdf}
		 \\
		& \makecell[b]{$\mathrm{tanh}$ \\ \\ \\ \\} &
		\includegraphics[page=1,width=.20\linewidth]{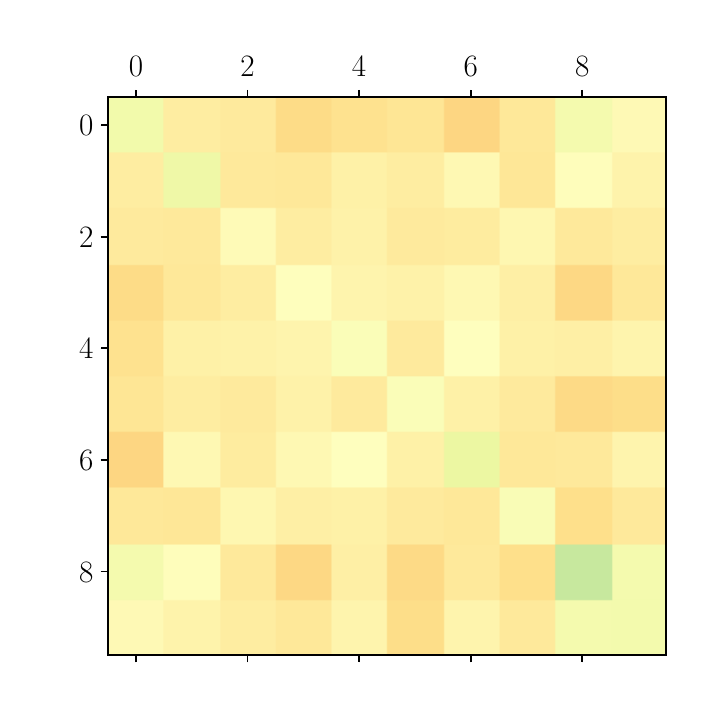}
		 & 
		\includegraphics[page=11,width=.20\linewidth]{preactiv-init-cifar10-02/StatsEcab_wth-10_act-tanh_the-0.00_sampler-normal_.pdf}
		 &
		\includegraphics[page=31,width=.20\linewidth]{preactiv-init-cifar10-02/StatsEcab_wth-10_act-tanh_the-0.00_sampler-normal_.pdf}
		 &
		\includegraphics[page=51,width=.20\linewidth]{preactiv-init-cifar10-02/StatsEcab_wth-10_act-tanh_the-0.00_sampler-normal_.pdf}
		 \\
        \cmidrule{3-6}
		\makecell[b]{\parbox[t]{2mm}{\multirow{2}{*}{\rotatebox[origin=c]{90}{$n_l = 100$ neurons per layer}}}\\ \\ \\ \\ \\} & \makecell[b]{$\mathrm{ReLU}$ \\ \\ \\ \\} &
		\includegraphics[page=1,width=.20\linewidth]{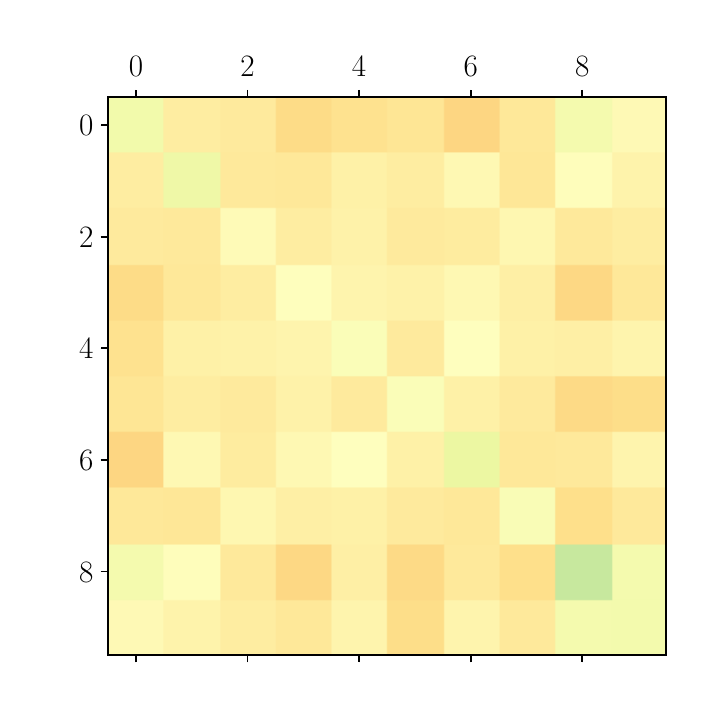}
		 & 
		\includegraphics[page=11,width=.20\linewidth]{preactiv-init-cifar10-02/StatsEcab_wth-100_act-relu_the-0.00_sampler-normal_.pdf}
		 &
		\includegraphics[page=31,width=.20\linewidth]{preactiv-init-cifar10-02/StatsEcab_wth-100_act-relu_the-0.00_sampler-normal_.pdf}
		 &
		\includegraphics[page=51,width=.20\linewidth]{preactiv-init-cifar10-02/StatsEcab_wth-100_act-relu_the-0.00_sampler-normal_.pdf}
		 \\
		& \makecell[b]{$\mathrm{tanh}$ \\ \\ \\ \\} &
		\includegraphics[page=1,width=.20\linewidth]{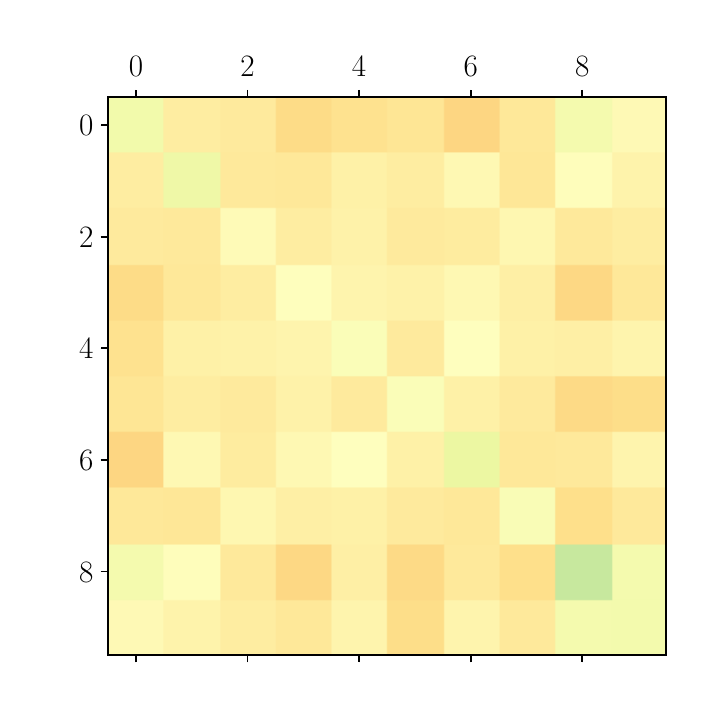}
		 & 
		\includegraphics[page=11,width=.20\linewidth]{preactiv-init-cifar10-02/StatsEcab_wth-100_act-tanh_the-0.00_sampler-normal_.pdf}
		 &
		\includegraphics[page=31,width=.20\linewidth]{preactiv-init-cifar10-02/StatsEcab_wth-100_act-tanh_the-0.00_sampler-normal_.pdf}
		 &
		\includegraphics[page=51,width=.20\linewidth]{preactiv-init-cifar10-02/StatsEcab_wth-100_act-tanh_the-0.00_sampler-normal_.pdf}
		 
	\end{tabular}
	\begin{landscape}
		\hspace*{4mm}
		\begin{subfigure}{1.\linewidth}
				\includegraphics[width=.75\linewidth]{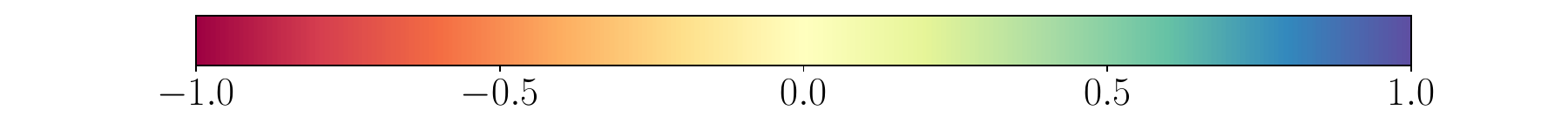}
				\caption*{\hspace*{-53mm}Correlation}
		\end{subfigure}
		\vspace*{-5mm}
	\end{landscape}
	\caption{Propagation of correlations $c^l_{ab}$ in a multilayer perceptron with activation function $\phi \in \{\mathrm{tanh}, \mathrm{ReLU}\}$ and inputs sampled from the CIFAR-10 dataset. According to the EOC claims, they should tend to 1 as $l \rightarrow \infty$. 
	Each plot displays a $10 \times 10$ matrix $C_{pq}^l$ whose entries are the average correlation between the 
	pre-activations propagated by samples from classes $p,q\in \{0, \cdots , 9\}$, at the input and right after layers 
	$l \in \{10, 30, 50\}$. See Fig.~\ref{fig:corr_suppl}, App.~\ref{app:expe:extra:MNIST}, for results on 
	MNIST.} \label{fig:corr_main}
\end{figure}

\begin{remark}
	According to the framework of the EOC, the inputs $\mathbf{x}_a$ and $\mathbf{x}_b$ are assumed to be fixed. So, it 
	is improper to define a correlation $c^0_{ab}$ between $\mathbf{x}_a$ and $\mathbf{x}_b$.
	However, when considering the correlation $c_{ab}^1$ of the pre-activations right after the first layer, the empirical correlation between inputs $\mathbf{x}_a$ and $\mathbf{x}_b$ appears naturally:
	\begin{align}
		c_{ab}^1 = \sigma_w^2 \cdot \frac{\mathbf{x}_a^T \mathbf{x}_b}{n} + \sigma_b^2 ,
	\end{align}
	where $\hat{c}_{ab}^0 := \mathbf{x}_a^T \mathbf{x}_b / n$ plays the role of an empirical correlation between $\mathbf{x}_a$ and $\mathbf{x}_b$, assuming that the empirical mean and variance of both $\mathbf{x}_a$ and $\mathbf{x}_b$ are respectively $0$ and $1$.%
	\footnote{Usually, this assumption does not hold exactly: it is common to normalize the entire dataset in 
		such a way that the whole set of the features of all training data points has empirical mean $0$ and 
		variance $1$, but not each data point individually. See also Definition~\ref{def:whole_dataset}.}
\end{remark}

\paragraph{Propagation of the distances to the Gaussian distribution.}
We test in Figure~\ref{fig:prop:relu_tanh} the Gaussian hypothesis in a multilayer perceptron of $L = 100$ layers,
with a constant width $n_l \in \{10, 100, 1000\}$, and an activation function $\phi \in \{\relu, \tanh\}$. 
We propagate a single point sampled from the CIFAR-10 dataset, and we compute the empirical distribution of the 
pre-activations by drawing $10000$ samples of the parameters of the neural network. 

In Figure~\ref{fig:prop:relu_tanh}, we plot the Kolmogorov--Smirnov statistic of the standardized pre-activations
for each layer, and we compare it to a threshold corresponding to a $p$-value of $0.05$.
Thus, according to Figure~\ref{fig:prop:relu_tanh}, the Gaussian hypothesis is rejected with a $p$-value of $0.05$ for 
the $\relu$ activation 
function in all considered setups ($n_l \in \{10 ,100, 1000\}$), and it is rejected too for the $\tanh$ activation 
function in the narrow network setup ($n_l = 10$).

We provide all the details about the Kolmogorov--Smirnov test in Appendix~\ref{app:ks_test}, and additional 
details and 
experiments in the multilayer perceptron setup in Section~\ref{sec:expe:multilayer}.

\begin{figure}[h!]
	\includegraphics[width=1\linewidth]{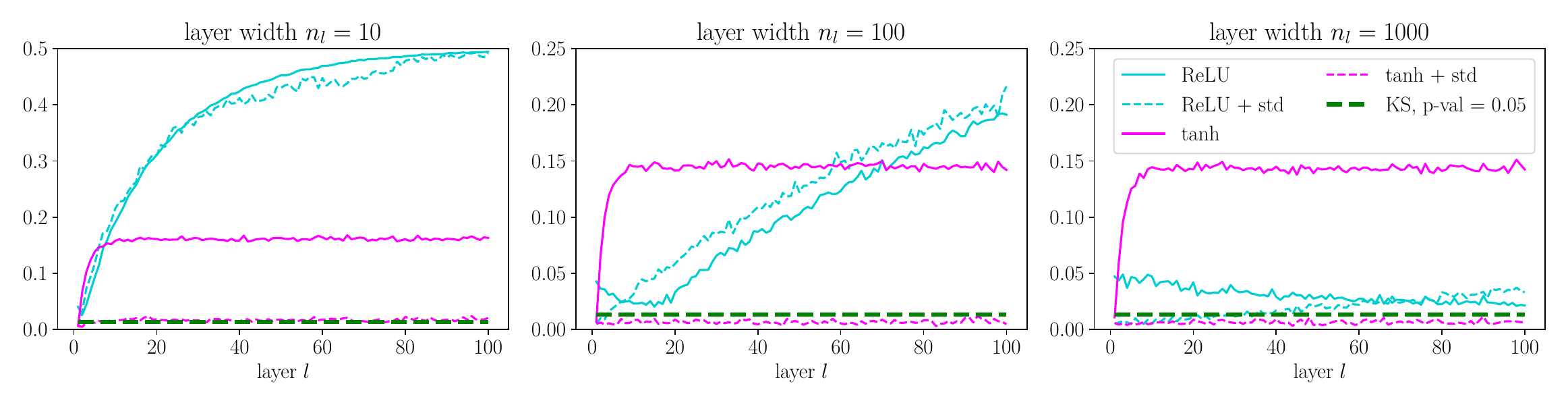}
	\caption{Evolution of the $\mathcal{L}^{\infty}$-distance between 
			the standard Gaussian $\mathcal{N}(0, 1)$ 
			and: (solid line) the empirical CDF of the pre-activations;
			(dotted line) the the empirical CDF of the standardized pre-activations.
			This distance should remain close to zero if the pre-activations are Gaussian.
		The empirical CDF of $\mathcal{D}^l$ have been computed with $10000$ samples.
		The green dotted line corresponds to the threshold given by the Kolmogorov--Smirnov test 
		with a $p$-value of $0.05$: 
		any point above it corresponds to a distribution for which the Gaussian hypothesis should be rejected with a 
		$p$-value of $0.05$.} 
	\label{fig:prop:relu_tanh}
\end{figure}

\subsection{Convergence of the sequence of variances $(v_{a}^l)_l$} \label{sec:prop:counter}

\paragraph{Multiple stable limits.} 
As reminded in Point~\ref{pt:eoc_1}, Section~\ref{sec:prop:prop}, it is a key assumption of the EOC 
formalism to assume that, whatever the starting point $v_{a}^0$, the sequence $(v_{a}^l)_l$ converges to the 
same limit $v^*$ as $l \rightarrow \infty$. 
As far as we know, no configuration where the map $\mathcal{V}$ 
has two nonzero stable points or more has been encountered in past works. Moreover, it is believed that such 
configurations do not exist, as stated for example by \cite{poole2016exponential}: ``for monotonic nonlinearities 
[$\phi$], this length map [$\mathcal{V}$] is a monotonically increasing, concave 
function whose intersections with the unity line determine its fixed points.''

In this subsection, we build a \emph{monotonic} activation function for which the $\mathcal{V}$ map is not 
concave and admits an \emph{infinite} number of stable fixed points.

\begin{definition}[Activation function $\varphi_{\delta, \omega}$]
\label{def:activ-phi}
	For $\delta \in [0 ,1]$ and $\omega > 0$ two real numbers, define:
	\begin{align}
		\varphi_{\delta, \omega}(x) &:= x \exp\left( \frac{\delta}{\omega} \sin( \omega \ln|x|) \right) .
	\end{align}
\end{definition}

It is easy to prove that for all $\delta \in [0 ,1]$ and $\omega > 0$:
\begin{enumerate}
	\item $\varphi_{\delta, \omega}(0) = 0$, by continuity;
	\item $\varphi_{\delta, \omega}$ is odd;
	\item $\varphi_{\delta, \omega}$ is strictly increasing;
	\item the map\footnote{Notation $\mathcal{D}z$ is defined in Eqn.~\eqref{eqn:def_Dz}.} $v \mapsto \int 
	\varphi_{\delta, \omega}(\sqrt{v}z)^2 \, \mathcal{D}z$ is $\mathcal{C}^1$ 
	and strictly increasing.
\end{enumerate}

\begin{definition}[Stable fixed points]
	Let $(u_n)_n$ be a sequence defined by recurrence:
	\begin{align}
		u_{n + 1} = f(u_n) \quad \text{with} \quad
		u_0 \in \mathbb{R} .
	\end{align}
	For any starting point $a$, we denote by $(u_n(a))_n$ the sequence defined as above, with $u_0 = a$.
	
	We say that $u^*$ is a \emph{stable fixed point} of $f$ if $f(u^*) = u^*$ and if there exists an open ball
	$\mathcal{B}(u^*, \epsilon)$ centered in $u^*$ of radius $\epsilon > 0$ such that:
	\begin{align}
		\forall a \in \mathcal{B}(u^*, \epsilon), \quad u_n(a) \underset{n \rightarrow 
		\infty}{\longrightarrow} u^* .
	\end{align}
\end{definition}

\begin{proposition}
	If $f(u^*) = u^*$ and $f$ is $\mathcal{C}^1$ in a neighborhood of $u^*$ with $f'(u^*) \in (-1, 1)$,
	then $u^*$ is a stable fixed point of $f$.
\end{proposition}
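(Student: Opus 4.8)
The plan is to run the standard contraction-on-a-small-ball argument. Write $\lambda := |f'(u^*)| < 1$ and fix any constant $\mu$ with $\lambda < \mu < 1$ (for instance $\mu := (\lambda+1)/2$). Since $f'$ is continuous at $u^*$, there exists $\epsilon > 0$ such that $|f'(x)| \le \mu$ for every $x$ in the open ball $\mathcal{B}(u^*, \epsilon) = (u^* - \epsilon, u^* + \epsilon)$, and such that $f$ is $\mathcal{C}^1$ on this ball. I will show this $\epsilon$ works.

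First I would check that $\mathcal{B}(u^*,\epsilon)$ is invariant under $f$. Take $a \in \mathcal{B}(u^*,\epsilon)$. By the mean value theorem applied to $f$ on the segment between $a$ and $u^*$ (which lies in the ball, hence in the domain where $f$ is differentiable), there is a point $\xi$ strictly between $a$ and $u^*$ with
\begin{align*}
|f(a) - u^*| = |f(a) - f(u^*)| = |f'(\xi)| \, |a - u^*| \le \mu \, |a - u^*| < \mu \epsilon < \epsilon .
\end{align*}
So $f(a) \in \mathcal{B}(u^*,\epsilon)$, and moreover $|f(a) - u^*| \le \mu |a - u^*|$.

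Then I would iterate. For $a \in \mathcal{B}(u^*,\epsilon)$, the invariance just shown lets one prove by induction on $n$ that $u_n(a) \in \mathcal{B}(u^*,\epsilon)$ and $|u_n(a) - u^*| \le \mu^n |a - u^*|$: the base case $n=0$ is trivial, and the inductive step applies the displayed inequality to $u_n(a)$ in place of $a$, since $u_{n+1}(a) = f(u_n(a))$. Because $0 \le \mu < 1$, we get $\mu^n |a-u^*| \to 0$, hence $u_n(a) \to u^*$ as $n \to \infty$. As $a$ was an arbitrary point of $\mathcal{B}(u^*,\epsilon)$, this is exactly the definition of $u^*$ being a stable fixed point.

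There is no real obstacle here; the only point requiring a little care is the choice of the intermediate constant $\mu$ strictly between $|f'(u^*)|$ and $1$. One needs $\mu < 1$ (not merely $\le 1$) so that the ball is mapped strictly into itself and the geometric factor $\mu^n$ actually tends to $0$; taking $\mu = |f'(u^*)|$ directly would not give continuity room, and taking $\mu = 1$ would not give contraction. Note also that the argument only uses $f'(u^*) \in (-1,1)$ through its absolute value, and that working on an interval makes ``open ball'' and ``the segment between two points'' unambiguous, so the mean value theorem applies without fuss.
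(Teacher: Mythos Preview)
Your proof is correct and is exactly the classical contraction argument one expects here. The paper itself does not supply a proof of this proposition; it is stated as a standard dynamical-systems fact and used immediately in the proof of Proposition~\ref{prop:counter}. So there is nothing to compare against: your argument fills in precisely what the paper left implicit, and the care you take with the intermediate constant $\mu$ is the right point to highlight.
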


\begin{proposition} \label{prop:counter}
	For any $\delta \in (0, 1]$ and $\omega > 0$, let us use the activation function $\phi = \varphi_{\delta, 
	\omega}$ of Definition~\ref{def:activ-phi}. 
	We consider the sequence $(v^l)_l$ defined by:
	\begin{align}
		\forall l \geq 0, \quad v^{l + 1} = \sigma_w^2 \int \varphi_{\delta, 
		\omega}\left(\sqrt{v^l}z\right)^2 \, \mathcal{D}z + \sigma_b^2 , \quad \text{with} \quad
		v^0 \in \mathbb{R}^+_* . \label{eqn:prop:counter}
	\end{align}
	Then there exist $\sigma_w > 0$, $\sigma_b \geq 0$, and a strictly increasing sequence of stable fixed 
	points $(v_k^*)_{k \in \mathbb{Z}}$ of the recurrence equation~\eqref{eqn:prop:counter}.
\end{proposition}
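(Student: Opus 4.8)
The plan is to set $\sigma_b = 0$ and exploit a scale-equivariance of $\varphi_{\delta,\omega}$ that turns a single stable fixed point into a bi-infinite geometric ladder of them. Write $V(v) := \int \varphi_{\delta,\omega}(\sqrt{v}\,z)^2\,\mathcal{D}z$, so that with $\sigma_b = 0$ the recurrence~\eqref{eqn:prop:counter} reads $v^{l+1} = \mathcal{V}(v^l)$ with $\mathcal{V}(v) = \sigma_w^2\,V(v)$; by Property~4 of Definition~\ref{def:activ-phi}, $V$ is $\mathcal{C}^1$ and strictly increasing, hence $\mathcal{V}$ is $\mathcal{C}^1$ with $\mathcal{V}' > 0$. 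Since $\varphi_{\delta,\omega}(\lambda x) = \lambda\,\varphi_{\delta,\omega}(x)$ for $\lambda := e^{2\pi/\omega}$ (because $\omega\ln(\lambda|x|) = \omega\ln|x| + 2\pi$ and $\sin$ is $2\pi$-periodic), one gets $V(\rho v) = \rho\,V(v)$, hence $\mathcal{V}(\rho v) = \rho\,\mathcal{V}(v)$, for $\rho := \lambda^2 = e^{4\pi/\omega} > 1$; differentiating this identity yields $\mathcal{V}'(\rho v) = \mathcal{V}'(v)$. Consequently, as soon as we produce one $v^* > 0$ and one $\sigma_w > 0$ with $\mathcal{V}(v^*) = v^*$ and $\mathcal{V}'(v^*) \in (-1,1)$, the whole family $v_k^* := \rho^k v^*$, $k\in\mathbb{Z}$, consists of fixed points with $\mathcal{V}'(v_k^*) = \mathcal{V}'(v^*) \in (-1,1)$, i.e.\ stable fixed points by the Proposition preceding the statement; and $(v_k^*)_{k\in\mathbb{Z}}$ is strictly increasing since $\rho > 1$.

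It remains to produce such a pair. Expanding the square, $V(v) = v\,g\!\left(\tfrac{\omega}{2}\ln v\right)$ with $g(t) := \int z^2 \exp\!\left(\tfrac{2\delta}{\omega}\sin(t + \omega\ln|z|)\right)\mathcal{D}z$, which is $2\pi$-periodic, $\mathcal{C}^1$, and squeezed between $e^{-2\delta/\omega}$ and $e^{2\delta/\omega}$ (in particular strictly positive). A fixed point of $\mathcal{V}$ is then any $v^* = e^{2t^*/\omega}$ with the choice $\sigma_w^2 := 1/g(t^*)$, and a one-line computation gives $\mathcal{V}'(v^*) = 1 + \tfrac{\omega}{2}\,g'(t^*)/g(t^*)$; since $\mathcal{V}' > 0$ always holds, this quantity lies in $(-1,1)$ precisely when $g'(t^*) < 0$. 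So it suffices to find $t^*$ with $g'(t^*) < 0$, and such a $t^*$ exists as soon as $g$ is non-constant (a $\mathcal{C}^1$ periodic non-constant function has strictly negative derivative somewhere, e.g.\ just to the right of a point where it attains its maximum).

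The crux is therefore to show $g$ is non-constant for $\delta\in(0,1]$, $\omega>0$, and I would do this by computing a single Fourier coefficient of $g$ in $t$: via the Jacobi--Anger expansion $e^{a\sin\alpha} = \sum_{n\in\mathbb{Z}} I_n(a)\,e^{in(\alpha-\pi/2)}$ (with $I_n$ the modified Bessel function and $a = 2\delta/\omega$), one obtains $g(t) = \sum_{n\in\mathbb{Z}} c_n e^{int}$ with $c_n = I_n(a)\,e^{-in\pi/2}\,\mathbb{E}\bigl[\,|Z|^{2+in\omega}\,\bigr]$ for $Z\sim\mathcal{N}(0,1)$, and $\mathbb{E}\bigl[\,|Z|^{2+in\omega}\,\bigr] = \tfrac{2^{1+in\omega/2}}{\sqrt{\pi}}\,\Gamma\!\left(\tfrac{3+in\omega}{2}\right)$. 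Since $\Gamma$ has no zeros and $I_1(a) > 0$ for $a > 0$, the coefficient $c_1$ is nonzero, so $g$ cannot be constant. The interchange of sum and integral and the termwise computation are justified by the uniform bound $\bigl|\exp(\tfrac{2\delta}{\omega}\sin(\cdot))\bigr| \le e^{2\delta/\omega}$ together with dominated convergence, while smoothness of $g$ is already granted by Property~4. Feeding such a $t^*$ back into the construction produces explicit constants $\sigma_b = 0$, $\sigma_w^2 = 1/g(t^*)$, and the ladder $(v_k^*)_{k\in\mathbb{Z}} = \bigl(e^{2t^*/\omega}\,e^{4\pi k/\omega}\bigr)_{k\in\mathbb{Z}}$, which proves the claim. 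The main obstacle is exactly this non-constancy step; everything else is the (genuinely free) scale-equivariance bookkeeping, with a minor secondary point being the routine measure-theoretic justification of expanding $V$ under the integral sign.
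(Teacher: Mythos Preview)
Your proof is correct and structurally identical to the paper's: set $\sigma_b=0$, write $V(v)=v\cdot g(\tfrac{\omega}{2}\ln v)$ with $g$ periodic (the paper's $\tilde{\mathcal{V}}(v)=V(v)/v$ and $\tilde{\mathcal{V}}_{\mathrm{e}}(r)=\ln\tilde{\mathcal{V}}(e^r)$ play exactly this role), pick $\sigma_w^2=1/g(t^*)$ to make any $v^*=e^{2t^*/\omega}$ a fixed point, compute $\mathcal{V}'(v^*)=1+\tfrac{\omega}{2}g'(t^*)/g(t^*)$, and use periodicity to replicate one stable fixed point into a geometric ladder. The only substantive difference is how you establish that $g$ is non-constant. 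The paper argues by Laplace-transform injectivity: if $\tilde{\mathcal{V}}\equiv c$, then $\mathcal{L}[\varphi_{\delta,\omega}(\sqrt{z})](s)=\tfrac{c\sqrt{\pi}}{2}s^{-3/2}$, which inverts to $\varphi_{\delta,\omega}(\sqrt{z})=c\sqrt{z}$, contradicting $\delta>0$. You instead compute the first Fourier coefficient of $g$ via Jacobi--Anger and the complex absolute moments of a Gaussian, and conclude from $I_1(2\delta/\omega)>0$ and the non-vanishing of $\Gamma$. Both are clean; the paper's route is shorter and uses only real-variable tools, while yours is more explicit and in principle yields quantitative information (the size of $|c_1|$ controls how far $g$ is from constant, hence how negative $g'$ gets).
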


The proof can be found in Appendix~\ref{app:proof_prop_counter}.

\begin{remark}
	In short, Proposition~\ref{prop:counter} ensures that there exists an infinite number of possible 
	(nonzero) limits for the sequence $(v_{a}^l)_l$, depending on $v_{a}^0$. Consequently, the proposed 
	activation 
	functions $\varphi_{\delta, \omega}$ are counterexamples to the claim of \cite{poole2016exponential}. 
\end{remark}

\paragraph{Plots.}
Figure~\ref{fig:perlog} shows the shape of several activation functions $\varphi_{\delta, \omega}$ for various $\omega$, along with their $\mathcal{V}$ maps. We have chosen $\delta = 0.99<1$ to ensure that $\varphi_{\delta, \omega}$ is a strictly increasing function.%
\footnote{We have chosen $\delta$ close to $1$ to obtain a function $\varphi_{\delta, \omega}$ that is strongly 
nonlinear, but a bit lower than $1$ to ensure that $\varphi_{\delta, \omega}'$ remains strictly 
positive, in order to prevent the training process from being stuck.}
We have chosen $\sigma_b^2 = 0$ and $\sigma_w^2 = \sigma_{\omega}^2$, computed as indicated in 
Appendix~\ref{app:sigma_omega}.

In Figure~\ref{fig:perlog:act}, the proposed activation functions exhibit reasonable properties: they are non-linear,  
differentiable at each point (excluding $0$), and  remain dominated by a linear function. However, we expect that as 
$\omega$ grows, $\varphi_{\delta, \omega}$ should become closer and closer to the identity function,%
\footnote{As $\omega \rightarrow \infty$, $\varphi_{\delta, \omega}$ converges pointwise to the identity function.} 
which is not desirable for the activation function of a NN.

In Figure~\ref{fig:perlog:Vv}, it is clear that the function $\varphi_{\delta, \omega}$ with $\omega = 6$ is a 
counterexample to the claim of \cite{poole2016exponential}: two nonzero stable points appear. So, in that case, 
depending on the square norm $v_{a}^0$ of the input, the variance $v_{a}^l$ of the pre-activations may converge to 
different values. For instance, for $\omega = 6$ and an input with square norm $v_{a}^0$ around the unstable point at 
$v \approx 2.3$, it may converge either to $v^* \approx 0.8$ or $v^* \approx 6.5$.

Also, we observe that for $\omega = 6$, the variance map $\mathcal{V}$ tends to be closer to the identity function than 
for smaller $\omega$. Thus, we expect the sequence $(v_{a}^l)_l$ to converge at a slower rate with $\omega = 6$ than 
with $\omega = 2$.

In Figure~\ref{fig:perlog:Cc}, all the configurations lie in the chaotic phase. Since all the correlation maps 
$\mathcal{C}$ are below the identity function, the sequence of correlations $(c_{ab}^l)_l$ always tends to $0$. 
However, the plots are close to the identity, 
so $(c_{ab}^l)_l$ varies very slowly, and we expect that the correlation between data 
points propagates into the NN with little deformation. Despite being not perfect and lying in the chaotic phase, this 
configuration roughly preserves  the input correlations between data points (without performing a pre-training phase), 
which is a desirable property at initialization: information propagates with little deformation to the output, and the 
error can be backpropagated to the first layers.

\begin{figure}[p!]
	\begin{subfigure}{.5\linewidth}
		\includegraphics[width=1.05\linewidth]{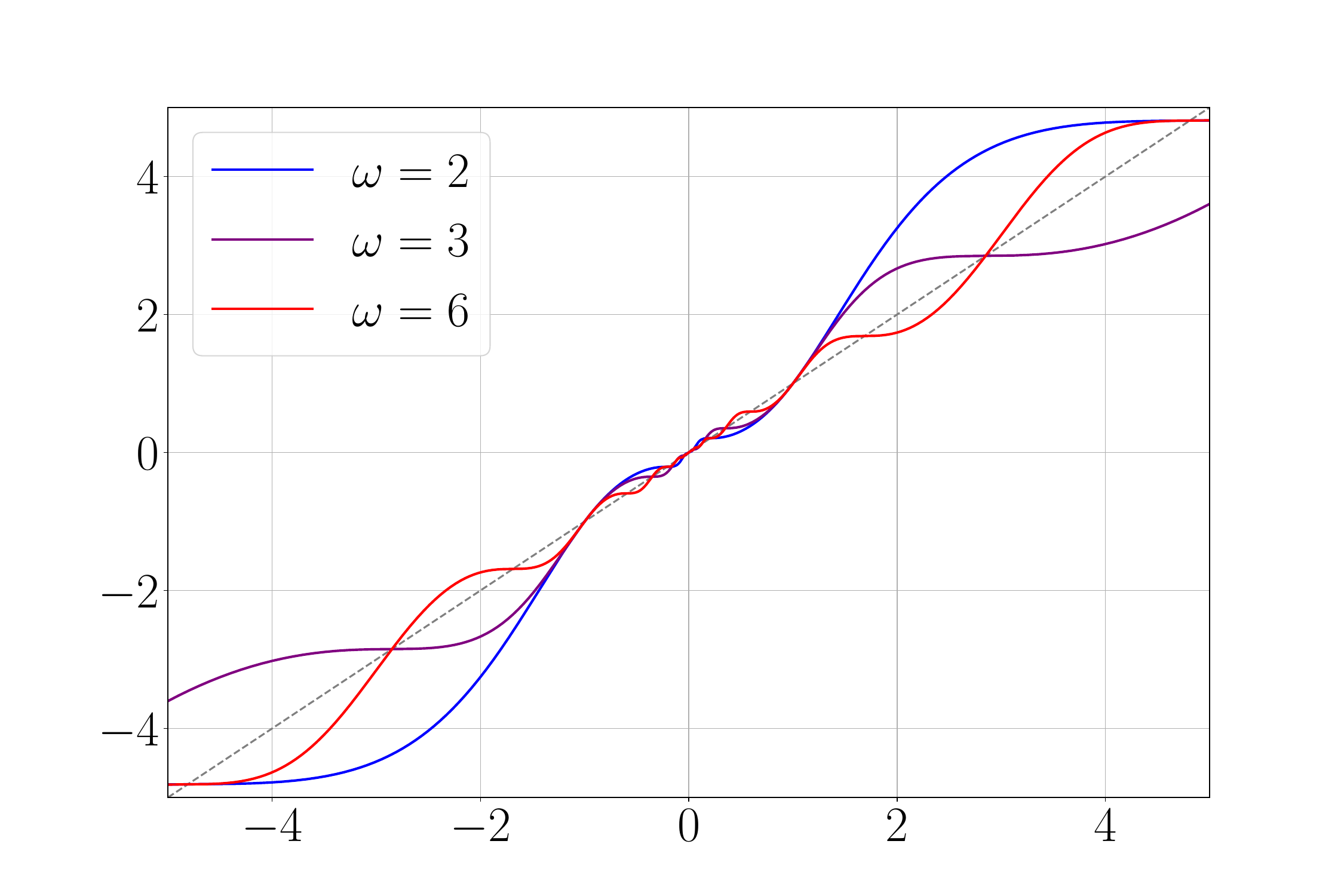}
		\caption{Activation function $\varphi_{\delta, \omega}$.} \label{fig:perlog:act}
	\end{subfigure}
	\begin{subfigure}{.5\linewidth}
		\includegraphics[width=1.05\linewidth]{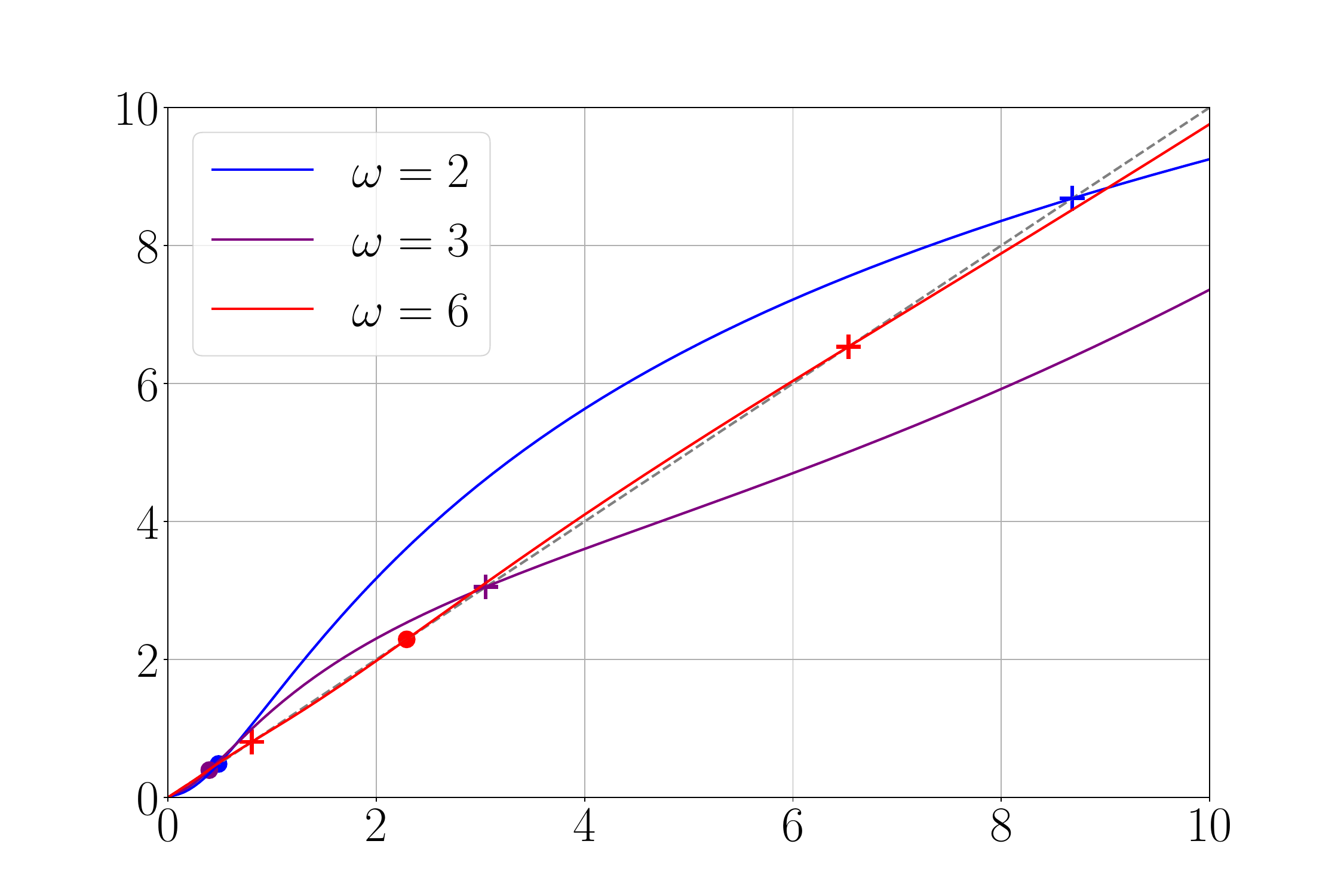}
		\caption{Variance map $\mathcal{V}(\cdot | \sigma_w, \sigma_b = 0)$, linear scale.} \label{fig:perlog:Vv}
	\end{subfigure}

	\begin{subfigure}{.5\linewidth}
		\includegraphics[width=1.05\linewidth]{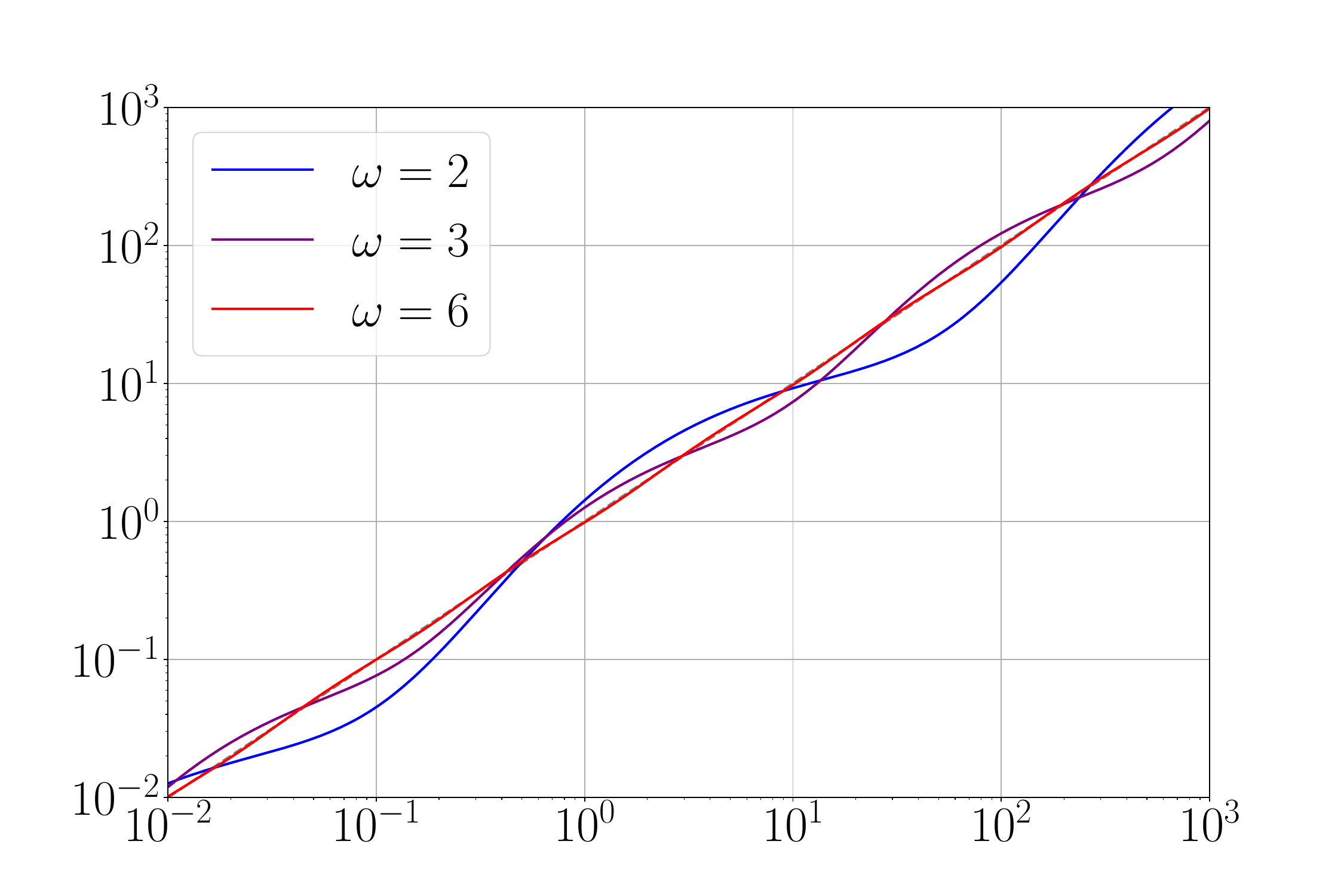}
		\caption{Variance map $\mathcal{V}(\cdot | \sigma_w, \sigma_b = 0)$, log-log scale.} \label{fig:perlog:Vv_log}
	\end{subfigure}
	\begin{subfigure}{.5\linewidth}
		\includegraphics[width=1.05\linewidth]{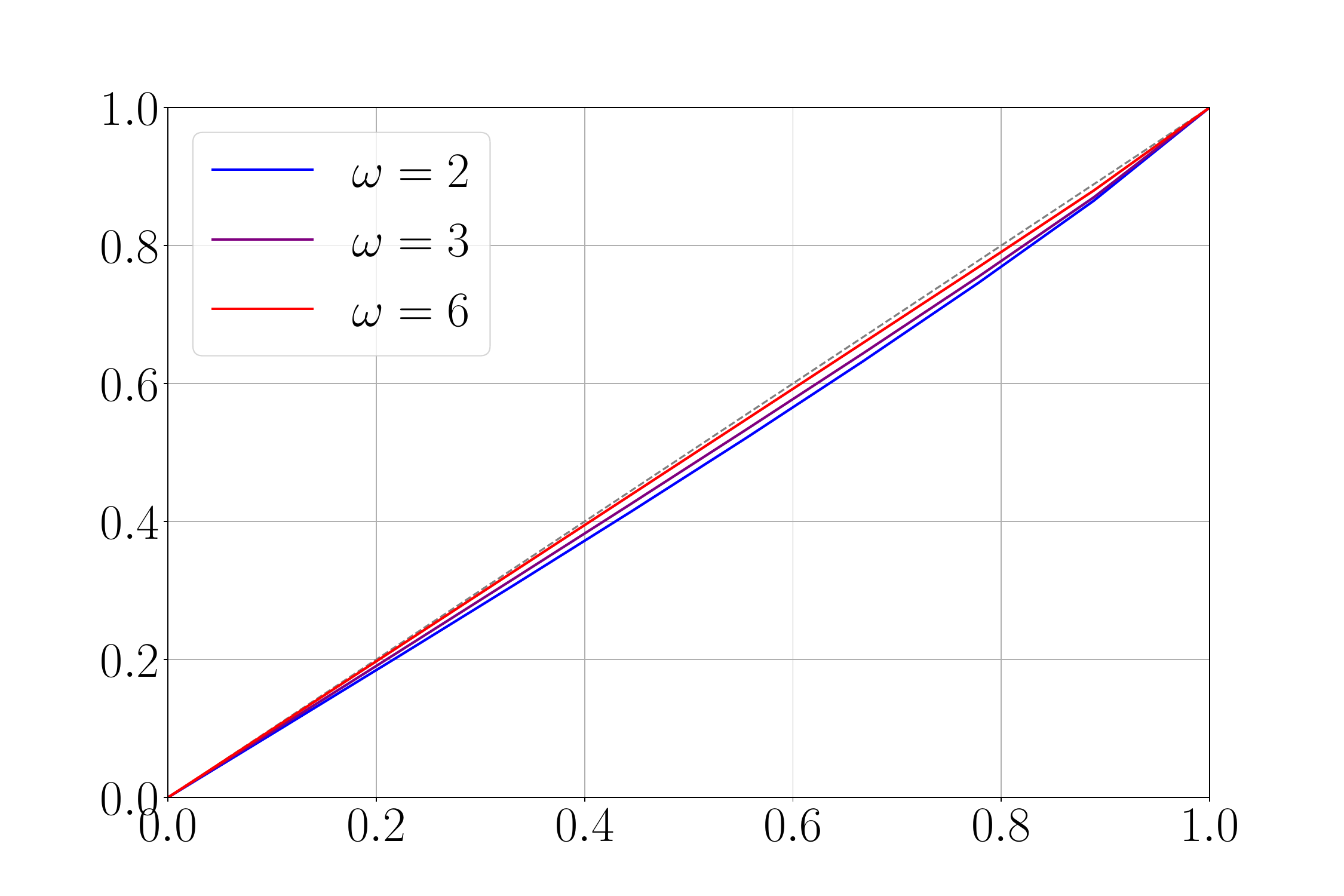}
		\caption{Correlation map $\mathcal{C}(\cdot, v^*, v^* | \sigma_w, \sigma_b)$.} \label{fig:perlog:Cc}
	\end{subfigure}
	\caption{Properties of the proposed counterexamples $\varphi_{\delta, \omega}$ represented in Fig.~\ref{fig:perlog:act} for $\omega\in\{2,3,6\}$ and $\delta = 0.99$. Their variance map $\mathcal{V}$ has an infinite number of fixed points. \\
		In Fig.~\ref{fig:perlog:Vv}, stable points are marked by crosses ($+$), and unstable points by bullets 
		($\bullet$), when they are away from $0$: two stable points appear for $\omega = 6$ (in red).
		As established in Proposition~\ref{prop:counter}, $\sigma_w$ is tuned for every $\omega$ in such a way that $\mathcal{V}$ crosses the identity function an infinite number of times (not visible on the figure). \\
		In Fig.~\ref{fig:perlog:Vv_log} (log-log scale), it is clearer that $\mathcal{V}$ has an infinite number of 
		fixed points, due to regular oscillations (in log-log scale) below and above the identity. \\
		In Fig.~\ref{fig:perlog:Cc}, we show that as $\omega$ grows, the correlation map $\mathcal{C}$ becomes closer to the identity function, which means that the correlation between data points tends to propagate perfectly. \\
		Note: since an infinite number of stable fixed points are available, we have arbitrarily picked one for each 
		$\omega$, denoted by $v^*$. This choice does not affect the plot of the correlation map $\mathcal{C}$, due to 
		the very specific structure of $\varphi_{\delta, \omega}$.
	} \label{fig:perlog}
\end{figure}

\subsection{Maintaining a property of the pre-activations during propagation} \label{sec:prop:commut}

To conclude this section and introduce the next one, we propose a common representation of the various 
methods used to build initialization distributions for the weights $(\mathbf{W}^l)_l$ and biases $(\mathbf{B}^l)_l$.

Several initialization methods 
\citep{glorot2010understanding,he2015delving,poole2016exponential,schoenholz2016deep} 
are based on the same principle: initialization should be done in such a way that some characteristic 
$\kappa^l$ of the distribution of $\mathbf{Z}^l$ is preserved during propagation (e.g., $\kappa^l = 
\mathrm{Var}(\mathbf{Z}^l)$). Intuitively, any change between $\kappa^l$ and $\kappa^{l + 1}$ reflects a loss 
of information between $\mathbf{Z}^l$ and $\mathbf{Z}^{l + 1}$, which impacts propagation or backpropagation. 
For instance, when $\mathrm{Var}(\mathbf{Z}^l) \rightarrow 0$, the network output tends to 
become deterministic, and when $\mathrm{Var}(\mathbf{Z}^l) \rightarrow \infty$, the output tends to forget the 
operations made by the first layers (i.e., the gradients vanish during backpropagation).

More generally, we denote by $\mathcal{D}^l$ the 
distribution associated to the pre-activations $\mathbf{Z}^{l}$, by $\mathcal{T}(\cdot ; n_l, \mathrm{P}_l, 
\phi^l) =: \mathcal{T}_l[\mathrm{P}_l](\cdot)$ the transformation of $\mathcal{D}^l$
performed by layer $l$, that is $\mathcal{D}^{l+1} = \mathcal{T}_l[\mathrm{P}_l](\mathcal{D}^l)$ (where $\mathrm{P}_l$ is the initialization distribution of $(\mathbf{W}^l, 
\mathbf{B}^l)$ and $\phi^l$ is the activation function at layer $l$), and by $\kappa^l := 
\chi(\mathcal{D}^l)$ the characteristic of the distribution 
$\mathcal{D}^l$ we are interested in. 
Then, according to a heuristic of ``information preservation'', it is assumed 
that the sequence $(\kappa^l)_l$ must remain constant, and the 
initialization distributions $(\mathrm{P}_l)_l$ are built accordingly. In some cases, it is possible to build 
a map $\tilde{\mathcal{T}}_{l}{[\mathrm{P}_{l}]}$, so that each $\kappa^{l + 1}$ can be built out of its 
predecessor $\kappa^{l}$, without using all the information we may have on the $(\mathcal{D}^l)_l$.

We summarize this way of building initialization procedures in
Figure~\ref{fig:commut}, and we show how it applies to well-known initialization procedures in 
Table~\ref{tbl:commut}.

\begin{table}[h]
	\begin{center}
	\caption{Examples of $\mathcal{D}^l$, $\phi^l$ and $\chi$ in various setups. 
		Notations: for any vector $\mathbf{x} \in \mathbb{R}^n$, its empirical mean is $\bar{\mathbb{E}} \, \mathbf{x} 
		= \frac{1}{n} \sum_{i = 1}^n x_i$ and is empirical variance is $\overline{\mathrm{Var}} \, \mathbf{x} = 
		\frac{1}{n - 1} \sum_{i = 1}^n (x_i - \bar{\mathbb{E}}\, \mathbf{x})^2$.} 
	\label{tbl:commut}
	{\renewcommand{\arraystretch}{1.15}
	\begin{tabular}{cccccc}
		\toprule
		Method & $\mathcal{D}^l$ & $\phi^l$ & $\chi$ & $\tilde{\mathcal{T}}_l[\mathrm{P}_l](\kappa)$ & Assumption \\
		\midrule
		\citeauthor{glorot2010understanding} & distr.\ of $\mathbf{Z}^l$ & $\mathrm{Id}$ & $\mathrm{Var}$ & $\sigma_w^2 \kappa^2 + 
		\sigma_b^2$ & -- \\
		\citeauthor{he2015delving} & distr.\ of $\mathbf{Z}^l$ & $\mathrm{ReLU}$ & $\mathrm{Var}$ & $\frac{1}{2} \sigma_w^2 \kappa^2 
		+ \sigma_b^2$ & -- \\
		\citetalias{poole2016exponential}, \citetalias{schoenholz2016deep} & distr.\ of $(\mathbf{Z}^l_{a}, 
		\mathbf{Z}^l_{b})$ & $\phi$ & $\mathrm{corr}$ & $\mathcal{C}_*(\kappa)$ & 
		\begin{tabular}{@{}c@{}}$Z^l_{j;a}$, $Z^l_{j;b}$ Gaussian,  \\ $v_{a}^l = v_{b}^l = v^*$\end{tabular} \\
		\midrule
		Ours & distr.\ of $\mathbf{Z}^l$ & $\phi_{\theta}^{\actop}$ & $\mathrm{Id}$ & 
		$\mathcal{T}_l[\mathrm{P}_l](\kappa)$ & 
		\begin{tabular}{@{}c@{}}$\bar{\mathbb{E}} \, \mathbf{x}_a = \bar{\mathbb{E}} \, \mathbf{x}_b = 0$, \\ $\overline{\mathrm{Var}} \, \mathbf{x}_a = \overline{\mathrm{Var}} \, \mathbf{x}_b = 1$\end{tabular} \\
		\bottomrule
	\end{tabular}
	}
	\end{center}
\end{table}

\begin{figure}[h!]
	\begin{center}
		\adjustbox{scale=1.25,center}{%
		\begin{tikzcd}[column sep=4.5em]
			\cdots \arrow[r, "\mathcal{T}_{l - 1}{[\mathrm{P}_{l - 1}]}"]
			& \mathcal{D}^l \arrow[d, "\chi"] \arrow[r, "\mathcal{T}_l{[\mathrm{P}_l]}"] 
			& \mathcal{D}^{l + 1} \arrow[d, "\chi"] \arrow[r, "\mathcal{T}_{l + 1}{[\mathrm{P}_{l + 1}]}"] 
			& \cdots \\ 
			\cdots \arrow[r, dashed, "\tilde{\mathcal{T}}_{l - 1}{[\mathrm{P}_{l - 1}]}"]
			& \kappa^l \arrow[r, dashed, "\tilde{\mathcal{T}}_{l}{[\mathrm{P}_{l}]}"]
			& \kappa^{l + 1} \arrow[r, dashed, "\tilde{\mathcal{T}}_{l + 1}{[\mathrm{P}_{l + 1}]}"]
			& \cdots 
		\end{tikzcd}
		}
		\caption{Building process of the initialization distributions $\mathrm{P}_l$ of 
		the parameters $(\mathbf{W}^l, \mathbf{B}^l)$: (i) the pre-activations-related distribution 
		$\mathcal{D}^l$ passes through a map $\mathcal{T}_l{[\mathrm{P}_l]}$ and becomes $\mathcal{D}^{l + 
		1}$; (ii) some statistical characteristic $\kappa^l$ of $\mathcal{D}^l$ can be computed with a function 
		$\chi$: $\kappa^l = \chi(\mathcal{D}^l)$; (iii) we tune the $(\mathrm{P}_l)_l$ in order to make the 
		sequence $(\kappa^l)_l$ constant.} 
			\label{fig:commut}
	\end{center}
\end{figure}

\begin{remark}
	We can use Figure~\ref{fig:commut} to build new initialization distributions: first, we choose a 
	statistical property of $\mathbf{Z}^l$, which determines $\mathcal{D}^l$ and $\chi$; then, we build a 
	framework in which $\chi(\mathcal{D}^l)$ can be easily computed for every $l$ (e.g., we choose a specific 
	activation function, or we make simplifying assumptions).
\end{remark}

In the following section, we aim to impose Gaussian pre-activations through a specific activation function 
$\phi_{\theta}^{\actop}$ and initialization distribution $\mathrm{P}_{\theta}$. It implies that we would 
preserve perfectly
the distribution $\mathcal{D}^l$ itself: our characteristic is $\chi(\mathcal{D}^l) = \mathcal{D}^l$. That 
way, \emph{all the statistical properties of $\mathcal{D}^l$ are preserved} during propagation.

\section{Imposing Gaussian pre-activations} \label{sec:imposing_th}

In this section, we propose a family of pairs $(\mathrm{P}_{\theta}, \phi_{\theta}^{\actop})$, where 
$\mathrm{P}_{\theta}$ is the distribution of the weights at initialization, $\phi_{\theta}^{\actop}$
is the activation function, and $\theta \in (2, \infty)$ is a parameter, such that the pre-activations $Z_{j}^l$ are 
$\mathcal{N}(0, 1)$ at any layer $l$. Imposing such pre-activations is a way to meet two goals.

First, in Section~\ref{sec:prop:realistic}, we have shown that the Gaussian hypothesis is 
not fulfilled in the case of realistic datasets propagated into a simple multilayer perceptron, and we have 
recalled in the Introduction
that the tails of the pre-activations tend to become heavier when information propagates in a neural network.
By imposing Gaussian pre-activations, we ensure that the Gaussian hypothesis is true, which reconciles the results 
provided in the EOC setup (see Eqn.~\eqref{eqn:rec_v} 
and~\eqref{eqn:rec_c}) and the experiments.%
\footnote{There exists another way to solve this problem: use propagation equations which would take into account the 
	sequence $(n_l)_l$ of layer widths, that is, adopt a non-asymptotic setup, 
	contrary to the process leading to Eqn.~\eqref{eqn:rec_v} and~\eqref{eqn:rec_c}. 
	However, taking into account the whole sequence $(n_l)_l$ would lead to recurrence equations that are far less easy 
	to use than Eqn.~\eqref{eqn:rec_c_simple}. Moreover, a precise characterization of the distributions 
	$(\mathcal{D}^l)_l$ of the pre-activations $(Z^l_1)_l$ may be very difficult since they would not be Gaussian 
	anymore.}

Second, as we recalled in the Introduction, many initialization procedures are based on the preservation of some 
characteristic of the distribution of the pre-activations (see Table~\ref{tbl:commut} and Figure~\ref{fig:commut}).
Usual characteristics are the variance and the 
correlation between data points. By imposing Gaussian pre-activations, we would ensure that \emph{the whole} 
distribution is propagated, and not only one of its characteristics.

Besides, we provide a set of constraints, Constraints~\ref{constr_gaussian}, \ref{constr_2}, \ref{constr_2b}, and \ref{constr_3}, 
that the activation function and the initialization procedure should 
fulfill in order to maintain Gaussian pre-activations at each layer. 

\paragraph{Summary.} Formally, we aim to find a family of pairs $(\mathrm{P}_{\theta}, 
\phi_{\theta}^{\actop})$ such that:
\begin{align*}
\left.
\begin{array}{lc}
Z^{l}_j  \sim \mathcal{N}(0, 1) &\text{i.i.d.} \\
W^{l}_{ij}  \sim \mathrm{P_{\theta}} &\text{i.i.d.} \\
B^{l}_i  = 0  &
\end{array}
\right\rbrace
\Rightarrow 
Z^{l + 1}_i := \frac{1}{\sqrt{n_l}} \mathbf{W}_{i \cdot}^{l} \phi_{\theta}^{\actop}(\mathbf{Z}^{l}) + 
\mathbf{B}^{l} 
\sim \mathcal{N}(0, 1) ,
\end{align*}
where $\mathbf{W}_{i \cdot}^{l}$ is the $i$-th row of the matrix $\textbf{W}^l$. 
In other words, the pre-activations $Z^{l + 1}_i$ remain Gaussian for all $l$.

As a result of the present section, we make the following proposition for $(\mathrm{P}_{\theta}, 
\phi_{\theta}^{\actop})$:
\begin{itemize}
	\item $\mathrm{P_{\theta}}$ is the symmetric Weibull distribution $\mathcal{W}(\theta, 1)$ with scale parameter 1 and shape (or tail) parameter $\theta$, which is obtained by symmetrizing a standard Weibull distribution from $\mathbb{R}^+$ to $\mathbb{R}$ (Generalized Weibull-Tail distributions are detailed in Definition \ref{def:gwt}). The symmetric Weibull distribution $\mathcal{W}(\theta, 1)$ has the following CDF:
	\begin{align}
	F_W(t) &= \frac{1}{2} + \frac{1}{2}\mathrm{sgn}(t) \exp\left( 
	-|t|^{\theta} \right) ; \label{eqn:weibull}
	\end{align}
	\item $\phi_{\theta}^{\actop}$ is computed to ensure that $Z^{l + 1}_i := \frac{1}{\sqrt{n_l}} 
	\mathbf{W}_{i \cdot}^{l} \phi_{\theta}^{\actop}(\mathbf{Z}^{l}) + \mathbf{B}^{l}$ is Gaussian $\mathcal{N}(0, 1)$. 
	In short, the family of functions $\phi_{\theta}^{\actop}$ contains (Fig. \ref{fig:plot_act}):
	\begin{enumerate}[$\bullet$]
		\item a sub-family $(\phi_{\theta}^{\acto})_{\theta}$ of \emph{odd} activation functions, 
		spanning a range of functions from a $\tanh$-like 
		function (as $\theta \rightarrow 2^+$) to the identity function (as $\theta \rightarrow \infty$),
		\item a sub-family $(\phi_{\theta}^{\actp})_{\theta}$ of \emph{positive} activation functions, 
		spanning a range of functions from a sigmoid-like 
		function (as $\theta \rightarrow 2^+$) to a softplus-like function%
		\footnote{The softplus function is defined by: $x \mapsto \log(1 + \exp(x))$.} 
		(as $\theta \rightarrow \infty$).
	\end{enumerate}
	We shall see that $\phi_{\theta}^{\actp}$ preserves the 
	Gaussianity of pre-activations better than $\phi_{\theta}^{\acto}$.
\end{itemize}

In order to obtain this result, we:
\begin{enumerate}
	\item reduce and decompose the initial problem (Section~\ref{sec:solve:decomposing});
	\item find constraints on the initialization distribution of the parameters to justify our choice 
	$\mathrm{P}_{\theta} = \mathcal{W}(\theta, 1)$ (Section~\ref{sec:solve:why_weibull});
	\item compute the distribution $\mathrm{Q}_{\theta}$ of $\phi_{\theta}^{\actop}(Z^l_{j})$ 
	we must choose to ensure 
	Gaussian  pre-activations $Z_i^{l + 1}$, given an initialization distribution $\mathrm{P}_{\theta}$	(Section~\ref{sec:solve:product});
	\item build $\phi_{\theta}^{\acto}$ and $\phi_{\theta}^{\actp}$ 
	from $\mathrm{Q}_{\theta}$ (Section~\ref{sec:solve:function}).
\end{enumerate}

\subsection{Decomposing the problem} \label{sec:solve:decomposing}

By combining Equations \eqref{eqn1:act} and \eqref{eqn1:preact}, the operation performed by each layer is:
\begin{align}
\mathbf{Z}^{l + 1} &= \frac{1}{\sqrt{n_l}} \mathbf{W}^{l} \phi(\mathbf{Z}^{l}) + \mathbf{B}^{l} . 
\label{eqn2:total}
\end{align}

In this subsection, we show that finding the distribution of the weights $\mathbf{W}^{l}$ and the activation function 
$\phi$ in order to have:
\begin{align}
	\forall l \in [1, L], \forall j \in [1, n_l], \quad Z^{l}_j \sim \mathcal{N}(0, 1) ,
\end{align}
can be done if we manage to get:
\begin{align}
	Z := W \phi(X) \sim \mathcal{N}(0, 1), \quad \text{with } X \sim \mathcal{N}(0, 1),
\end{align}
by tuning the distribution of $W$ and the activation function $\phi$.

\paragraph{$Z^{l + 1}_i$ as a sum of Gaussian random variables.} 
First, we focus on the operation made by one layer: if each layer transforms Gaussian inputs 
$Z^{l}_j$ into pre-activations $Z^{l + 1}_i$ that are Gaussian too, then we can ensure that the 
pre-activations remain Gaussian after each layer.
Thus, it is sufficient to solve the problem for one layer. After renaming the variables as $Z \leftarrow Z^{l + 1}_i, 
	W_j \leftarrow W^l_{ij} , 
	X_j \leftarrow Z^l_j , 
	B \leftarrow B_i^l ,
	n \leftarrow n_l$, we have:
\begin{align}
	Z = \frac{1}{\sqrt{n}} \sum_{j = 1}^{n} W_j \phi(X_j) + B. \label{eqn3:simple}
\end{align}
In the rest of this subsection, we assume that the $X_j \sim \mathcal{N}(0, 1)$ are independent.
We discuss the independence hypothesis in Remark~\ref{rem:indep_gaussian} and Appendix~\ref{app:indep_preact}.
We want to build an activation function $\phi$, and distributions for $(W_j)_j$ and $B$ such that $Z 
\sim \mathcal{N}(0, 1)$.

Second, we narrow our search space. According to Equation~\eqref{eqn3:simple}, $Z$ is the sum of a random 
variable $B$ and a number $n$ of i.i.d.\ random variables $W_j \phi(X_j) / \sqrt{n}$. 
Since $Z$ must be $\mathcal{N}(0, 1)$ whatever the value of $n$, it is both convenient and sufficient to check that 
each summand in the right-hand side of Equation~\eqref{eqn3:simple} is Gaussian, that is: 
\begin{align}
	B \sim \mathcal{N}(0, \sigma_b^2), \quad W_j \phi(X_j) \sim 
\mathcal{N}(0, 1 - \sigma_b^2),
\end{align}
with $\sigma_b \in (0, 1)$. In that case, we have $Z \sim \mathcal{N}(0, 1)$. For the sake of simplicity, we 
assume that $B = 0$ with probability $1$, so that we just have to ensure that, for all $j$, $W_j \phi(X_j) 
\sim \mathcal{N}(0, 1)$. If one wants to deal with nonzero bias $B \sim \mathcal{N}(0, \sigma_b^2)$, it is 
sufficient to scale the random variables $W_j \phi(X_j)$ accordingly.

To summarize, we have chosen to build $Z \sim \mathcal{N}(0, 1)$ by ensuring that $W_j \phi(X_j) \sim 
\mathcal{N}(0, 1)$. 
With $B = 0$, this choice is formally imposed by this straightforward proposition.
\begin{proposition}[Lévy-Cramér Theorem] \label{prop:gaussian_sum}
	Let $(Z_j)_j$ be a sequence of $n$ i.i.d.\ random variables. Let $Z = \frac{1}{\sqrt{n}} \sum_{j = 1}^n 
	Z_j$.
	If $Z$ is $\mathcal{N}(0, 1)$, then the distribution of each $Z_j$ is also $\mathcal{N}(0, 1)$.
\end{proposition}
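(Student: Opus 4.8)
The plan is to argue entirely at the level of characteristic functions, using the notation $\psi_Z$ from the Notations section. Let $\psi$ denote the common characteristic function of the i.i.d.\ variables $Z_j$. By independence, the characteristic function of $Z = \frac{1}{\sqrt{n}}\sum_{j=1}^n Z_j$ factorizes as $\psi_Z(t) = \psi(t/\sqrt{n})^n$. Since by hypothesis $Z \sim \mathcal{N}(0,1)$, we have $\psi_Z(t) = e^{-t^2/2}$, so substituting $s = t/\sqrt{n}$ yields the functional equation $\psi(s)^n = e^{-n s^2/2}$ for every $s \in \mathbb{R}$.

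It then remains to extract the correct $n$-th root. First take moduli: $|\psi(s)|^n = e^{-n s^2/2}$, hence $|\psi(s)| = e^{-s^2/2}$, which is strictly positive for all $s$. This lets me define $g(s) := \psi(s)\, e^{s^2/2}$, which is continuous, satisfies $|g(s)| = 1$ and $g(0) = 1$, and obeys $g(s)^n = \psi(s)^n e^{n s^2/2} = 1$ for all $s$. Thus $g$ takes values in the finite, hence discrete, set of $n$-th roots of unity; being continuous on the connected set $\mathbb{R}$ and equal to $1$ at the origin, it must be identically $1$. Therefore $\psi(s) = e^{-s^2/2}$ for all $s$, i.e.\ $\psi$ is the characteristic function of $\mathcal{N}(0,1)$, and by the uniqueness theorem for characteristic functions each $Z_j \sim \mathcal{N}(0,1)$.

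The only step that is not routine bookkeeping is this branch-of-the-$n$-th-root issue: a priori $\psi(s)$ could equal $e^{-s^2/2}$ times an $s$-dependent $n$-th root of unity, and one must exclude jumps between the $n$ branches. The continuity of characteristic functions, together with the connectedness of $\mathbb{R}$ and the normalization $\psi(0)=1$, is precisely what closes this gap. Note that no appeal to the much deeper Cramér decomposition theorem is required, since here the summands are assumed identically distributed and hence a single characteristic function is involved — which is why the statement is labelled straightforward.
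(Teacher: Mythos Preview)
Your proof is correct and follows essentially the same characteristic-function route as the paper, which simply equates $\psi_Z(x)=e^{-x^2/2}$ with $[\psi_{Z_1/\sqrt{n}}(x)]^n$ and concludes. In fact you are more careful than the paper: you explicitly handle the $n$-th-root ambiguity via the continuity/connectedness argument, a step the paper's proof glosses over.
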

\begin{proof}
	Let $\psi_Z(x) := \mathbb{E}[e^{i Z x}]$ be the characteristic function of the distribution of $Z$.
	Besides, the $(Z_j)_j$ are i.i.d.\ and $Z = \frac{1}{\sqrt{n}} \sum_{j = 1}^n Z_j$, so:
	\begin{align}
	\psi_Z(x) = \exp\left(- \frac{x^2}{2}\right) \quad \text{and} \quad \psi_Z(x) = \left[ 
	\psi_{\frac{Z_1}{\sqrt{n}}}(x) \right]^n .
	\end{align}
	This proves that $\psi_{Z_1}(x) = e^{-x^2/2}$. So, for all $j$ in $[1, n]$, $Z_j 
	\sim \mathcal{N}(0, 1)$.
\end{proof}

As a result, we obtain the first constraint.
\begin{tcolorbox}[colback=blue!10!white,size=title]
\begin{constraint} \label{constr_gaussian}
	If, for all $l$, the weights $(W_{ij}^l)_{ij}$ are i.i.d.\ and independent from the pre-activations 
	$(X_j^l)_j$, which are also supposed to be i.i.d., then we must ensure that:
	\begin{align*}
	\forall l, i, j, \quad W_{ij}^l \phi(X_j^l) \sim \mathcal{N}(0, 1) .
	\end{align*}
\end{constraint}
\end{tcolorbox}

\begin{remark} \label{rem:indep_gaussian}
	The hypothesis of independent inputs $(X_j^l)_j$ truly holds only for the second layer.%
	\footnote{The inputs of the first layer are deterministic.}
	Although the hypothesis of independent $(X_j^l)_j$ is common 
	(see Section \ref{sec:common-assumptions}), it is mostly unrealistic in practice. 
	So, we propose in Appendix~\ref{app:indep_preact} an empirical study of this hypothesis, 
	in order to identify in which cases the dependence between the inputs of one layer 
	damages the Gaussianity of its outputted pre-activations.
\end{remark}

\paragraph{New formulation of the problem.} We have proven that, to ensure that $Z \sim \mathcal{N}(0, 1)$, it 
is sufficient to solve the following problem:
\begin{align}
	\text{find } \mathrm{P} \text{ and } \phi \text{ such that:} \quad X \sim \mathcal{N}(0, 1) \text{ and } W 
	\sim \mathrm{P} \Rightarrow W \phi(X) \sim \mathcal{N}(0, 1) . \label{eqn:problem_main}
\end{align}
In the following subsections, we build a family $\mathcal{P}$ of initialization distributions (Section~\ref{sec:solve:why_weibull}) such that, for any 
$\mathrm{P} \in \mathcal{P}$, there exists a function $\phi$ such that $(\mathrm{P}, \phi)$ is a 
solution to~\eqref{eqn:problem_main}.
We decompose the remaining problem into two parts, by introducing an intermediary random 
variable $Y = \phi(X)$:
\begin{itemize}
	\item for a distribution $\mathrm{P}$, deduce $\mathrm{Q}$ s.t.:
	$W \sim \mathrm{P}, Y \sim \mathrm{Q} \; \Rightarrow \; W Y =: G \sim \mathcal{N}(0, 1)$ (Section~\ref{sec:solve:product});
	\item for a distribution $\mathrm{Q}$, find a function $\phi$ s.t.:
	$X \sim \mathcal{N}(0, 1) \; \Rightarrow \; Y = \phi(X) \sim \mathrm{Q}$ (Section~\ref{sec:solve:function}).
\end{itemize}

\subsection{Why initializing the weights $W$ according to a symmetric Weibull distribution?} 
\label{sec:solve:why_weibull}

We are looking for a family $\mathcal{P}$ of distributions such that, for any $\mathrm{P} \in \mathcal{P}$, there 
exists $\mathrm{Q}$ such that:
\begin{align}
W \sim \mathrm{P}, Y \sim \mathrm{Q} \; \Rightarrow \; W Y =: G \sim \mathcal{N}(0, 1) .
\end{align}

Therefore, the family $\mathcal{P}$ is subject to several constraints.
In this subsection, we present two results indicating that a subset of the family of Weibull distributions
is a good choice for $\mathcal{P}$:
\begin{enumerate}
	\item the density of $W$ at $0$ should be $0$;
	\item $W$ should be a generalized Weibull-tail random variable (see Section~\ref{sec:constraint_2} or  
	\citealp{vladimirova2021bayesian}) with parameter $\theta \in (2, \infty)$. 
\end{enumerate}
In the process, we are able to gather information about the distribution of $|Y|$, 
namely its density at $0$ and the leading power of the $\log$ of its survival function at infinity, respectively:
\begin{align}
f_{|Y|}(0) &= \sqrt{\frac{2}{\pi}} \left[\int_{0}^{\infty} \frac{f_{|W|}(t)}{t} \, \mathrm{d}t\right]^{-1}, \\
\log S_{|Y|}(y) &\propto - y^{1/\left(\frac{1}{2} - \frac{1}{\theta}\right)} .
\end{align}

As a conclusion of this subsection, we consider that the distribution $\mathrm{P} = \mathrm{P}_{\theta}$ of $W$ should 
lie in the following subset of the family of symmetric Weibull distributions (defined at Eqn.~\eqref{eqn:weibull}):
\begin{align*}
\mathcal{P} &:= \{ \mathcal{W}(\theta, 1) : \theta \in \Theta \} , \\
\Theta &:= (2, \infty) .
\end{align*}

\subsubsection{Behavior near $0$} \label{sec:constraint_1}

Since the product $G = W Y$ is meant to be distributed according to $\mathcal{N}(0, 1)$, then we must have $f_{|G|}(0) = \sqrt{\frac{2}{\pi}} \in (0, \infty)$, which is impossible for several choices of distributions for $W$.

\begin{proposition}[Density of a product of random variables at $0$] \label{thm:product_main}
	Let $W, Y$ be two independent non-negative random variables and $Z = W Y$. Let $f_W, f_Y, f_Z$ be their respective 
	densities.
	Assuming that $f_Y$ is continuous at $0$ with $f_Y(0) > 0$, we have:
	\begin{align}
	\text{if } \quad \lim_{w \rightarrow 0} \int_{w}^{\infty}  \frac{f_W(t)}{t} \, \mathrm{d}t = \infty, \quad \text{ 
	then } \quad \lim_{z \rightarrow 0} f_Z(z) = \infty .
	\end{align}
	Moreover, if $f_Y$ is bounded:
	\begin{align}
	\text{if } \quad \int_{0}^{\infty}  \frac{f_W(t)}{t} \, \mathrm{d}t < \infty, \quad \text{ then } \quad 
	f_Z(0) = f_Y(0) \int_{0}^{\infty} \frac{f_W(t)}{t} \, \mathrm{d}t . \label{eqn:product_main_eqn2}
	\end{align}
\end{proposition}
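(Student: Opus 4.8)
The plan is to express the density of the product $Z = WY$ via the standard Mellin-type convolution formula for products of independent non-negative random variables, and then analyze the behavior of this expression as $z \to 0$. Recall that if $W, Y$ are independent non-negative random variables with densities $f_W, f_Y$, then $Z = WY$ has density
\begin{align*}
f_Z(z) = \int_0^\infty f_W(t) \, f_Y\!\left(\frac{z}{t}\right) \frac{1}{t} \, \mathrm{d}t .
\end{align*}
This is obtained by writing $Z = WY$, conditioning on $W = t$, and using that $Y/t \mapsto$ has density $t \, f_Y(t y)$ — equivalently, a change of variables in the joint density. I would first establish this formula cleanly (or cite it as standard), taking care that it is valid for almost every $z > 0$ and, under the continuity hypotheses we will impose, at $z = 0$ as well.

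For the first claim, I would fix the hypothesis that $f_Y$ is continuous at $0$ with $f_Y(0) > 0$. Then for any $z$ in a small neighborhood of $0$ and any fixed truncation level $w > 0$, restricting the integral to $t \geq w$ gives
\begin{align*}
f_Z(z) \geq \int_w^\infty f_W(t) \, f_Y\!\left(\frac{z}{t}\right) \frac{1}{t} \, \mathrm{d}t .
\end{align*}
As $z \to 0$, for each $t \geq w$ we have $z/t \to 0$, so $f_Y(z/t) \to f_Y(0)$; choosing $\epsilon > 0$ and $z$ small enough (depending on $w$) that $f_Y(z/t) \geq f_Y(0) - \epsilon$ uniformly for $t \geq w$ (using continuity at $0$ and that $z/t \leq z/w$), we get $\liminf_{z \to 0} f_Z(z) \geq (f_Y(0) - \epsilon) \int_w^\infty \frac{f_W(t)}{t} \, \mathrm{d}t$. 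Letting $\epsilon \to 0$ and then $w \to 0$, the hypothesis $\int_w^\infty f_W(t)/t \, \mathrm{d}t \to \infty$ forces $f_Z(z) \to \infty$. The order of limits here is the delicate point — $z$ must be sent to $0$ \emph{before} $w$ — so I would present this as an explicit two-parameter argument rather than a one-line bound.

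For the second claim, assume additionally $\int_0^\infty f_W(t)/t \, \mathrm{d}t < \infty$ and $f_Y$ bounded, say by $M$. Then at $z = 0$ the integrand is $f_W(t) f_Y(0) / t$, which is integrable, so $f_Z(0) = f_Y(0) \int_0^\infty f_W(t)/t \, \mathrm{d}t$; to justify that this is genuinely $\lim_{z\to 0} f_Z(z)$ and not merely a formal value, I would apply dominated convergence to $g_z(t) := f_W(t) f_Y(z/t)/t$, dominated by $M f_W(t)/t \in L^1$, with pointwise limit $f_W(t) f_Y(0)/t$ as $z \to 0$ (using continuity of $f_Y$ at $0$). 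The main obstacle in the whole proof is simply the careful bookkeeping around the improper integral near $t = 0$: in the first part one must avoid integrating down to $t = 0$ prematurely, and in the second part one needs the boundedness of $f_Y$ precisely to secure an integrable dominating function near $t = 0$ where $1/t$ blows up. Everything else is routine measure theory.
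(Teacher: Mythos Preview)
Your proposal is correct and follows the same overall strategy as the paper (product-convolution formula for the density, then limit analysis), but the execution differs in two instructive ways.

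First, you write the product density as $f_Z(z) = \int_0^\infty f_W(t)\, f_Y(z/t)\,t^{-1}\,\mathrm{d}t$, whereas the paper uses the symmetric form $f_Z(z) = \int_0^\infty f_Y(t)\, f_W(z/t)\,t^{-1}\,\mathrm{d}t$. Both are valid; your choice places the continuity hypothesis on $f_Y$ directly inside the argument $z/t$, which makes the uniform lower bound in part one very transparent.

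Second, for the limits themselves, the paper sidesteps your two-parameter bookkeeping (send $z\to 0$, then $\epsilon\to 0$, then $w\to 0$) by \emph{coupling} the truncation level to $z$ via $z_0 = \sqrt{z}$: restricting the integral to $[0,\sqrt{z}]$ (in their parameterization) and changing variables produces the single-limit lower bound $\inf_{[0,\sqrt{z}]} f_Y \cdot \int_{\sqrt{z}}^\infty f_W(t)/t\,\mathrm{d}t$. For the second claim the paper repeats this $\sqrt{z}$-splitting to derive a matching upper bound by hand, whereas your dominated-convergence argument (dominating by $\|f_Y\|_\infty\, f_W(t)/t \in L^1$) is shorter and arguably cleaner. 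In summary: the paper's coupling trick streamlines part one; your DCT streamlines part two.
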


The proof can be found in Appendix~\ref{app:dem_product_main}.

\begin{corollary} \label{thm:product_corr}
	If $f_Y$ and $f_W$ are continuous at $0$ with $f_Y(0) > 0$ and $f_W(0) > 0$, then:
	\begin{align}
	\lim_{z \rightarrow 0} f_Z(z) = \infty .
	\end{align}
\end{corollary}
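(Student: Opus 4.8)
The corollary is an immediate consequence of the first implication in Proposition~\ref{thm:product_main}, so the whole task reduces to checking that the hypothesis $\lim_{w \rightarrow 0} \int_{w}^{\infty} f_W(t)/t \, \mathrm{d}t = \infty$ holds whenever $f_W$ is continuous at $0$ with $f_W(0) > 0$. The plan is: (i) use the continuity of $f_W$ at $0$ together with $f_W(0) > 0$ to produce a neighbourhood on which $f_W$ is bounded below by a positive constant; (ii) bound the relevant integral from below by that constant times a divergent logarithmic integral; (iii) invoke Proposition~\ref{thm:product_main} to finish.

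For step (i): since $f_W$ is continuous at $0$ and $f_W(0) > 0$, taking $\varepsilon = f_W(0)/2 > 0$ in the definition of continuity yields some $\delta > 0$ with $f_W(t) \geq f_W(0)/2 =: c > 0$ for all $t \in [0, \delta]$. For step (ii), because $f_W \geq 0$ everywhere (it is a density), for $0 < w < \delta$ we have
\begin{align*}
\int_{w}^{\infty} \frac{f_W(t)}{t} \, \mathrm{d}t \;\geq\; \int_{w}^{\delta} \frac{f_W(t)}{t} \, \mathrm{d}t \;\geq\; c \int_{w}^{\delta} \frac{\mathrm{d}t}{t} \;=\; c \,\bigl( \ln \delta - \ln w \bigr) \;\xrightarrow[w \to 0^+]{}\; \infty .
\end{align*}
Hence $\lim_{w \rightarrow 0} \int_{w}^{\infty} f_W(t)/t \, \mathrm{d}t = \infty$, which is exactly the premise of the first implication of Proposition~\ref{thm:product_main}. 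Since $f_Y$ is continuous at $0$ with $f_Y(0) > 0$ by hypothesis, that proposition applies and gives $\lim_{z \rightarrow 0} f_Z(z) = \infty$, as claimed.

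There is essentially no obstacle here: the only point requiring a moment's care is that we must discard the tail $\int_{\delta}^{\infty} f_W(t)/t \, \mathrm{d}t$ — which is legitimate precisely because $f_W/t \geq 0$ on $(0,\infty)$, so dropping it can only decrease the integral — and then the divergence of $\int_w^\delta \mathrm{d}t/t$ does the rest. Everything else is inherited from Proposition~\ref{thm:product_main} (in particular the standing assumptions that $W, Y$ are independent and non-negative and $Z = WY$).
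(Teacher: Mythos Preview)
Your proof is correct and is precisely the argument the paper has in mind: the corollary is stated without proof in the paper, as an immediate consequence of the first implication of Proposition~\ref{thm:product_main}, and you have supplied the one missing detail --- that continuity of $f_W$ at $0$ with $f_W(0)>0$ forces the integral $\int_w^\infty f_W(t)/t\,\mathrm{d}t$ to diverge logarithmically as $w\to 0^+$.
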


According to Corollary~\ref{thm:product_corr}, it is impossible to obtain a Gaussian $G$ by multiplying two random 
variables $W$ and $Y$ whose densities are both continuous and nonzero at $0$. So, if we want to manipulate continuous 
densities, we must have either $f_W(0) = 0$ or $f_Y(0) = 0$.

Let us assume that $f_Y(0) = 0$.
We want $Y$ to be the image of $X \sim \mathcal{N}(0, 1)$ through the function $\phi$, where $f_X(0) > 0$. 
So, in order to obtain $Y$ with a zero density at $0$, it is necessary to build a function $\phi$ with 
$\phi'(0) = \infty$ (see Lemma~\ref{lem:infinity_at_zero} in Appendix~\ref{app:vertical}), which is usually not desirable for an activation 
function of a neural network for training stability reasons.%
\footnote{If $\phi'(0) = \infty$ and $\phi$ is $\mathcal{C}^1$ on $\mathbb{R}^*$, then numerical instabilities 
	may occur during training: if a pre-activation $Z_j^l$ approaches $0$ too closely, $\phi'(Z_j^l)$ can explode 
	and damage the training. These instabilities can be handled by gradient clipping 
	\citep{pascanu2013difficulty}.}
So, it is preferable to design $W$ such that $f_{W}(0) = 0$.

\begin{tcolorbox}[colback=blue!10!white,size=title]
\begin{constraint} \label{constr_2}
	To avoid activation functions with a vertical tangent at $0$, the density of the initialization distribution of a 
	weight $W^l_{ij}$ must be $0$ at $0$:
	\begin{align*}
		\forall l, i, j, \quad f_{W_{ij}^l}(0) = 0 .
	\end{align*}
\end{constraint}
\end{tcolorbox}
\vspace{0.2cm}

\begin{remark}
	In the common case of neural networks with activation function $\phi = \tanh$ and weights $W$ initialized according 
	to a Gaussian distribution, if we assume that the Gaussian hypothesis is true, then $f_Y(0) > 0$ and $f_Z(0) > 0$. 
	Thus, Corollary~\ref{thm:product_corr} applies and the density of $Z$ is infinite at $0$.
	
	If $\phi = \mathrm{Id}$, $Z$ is the product of two independent $\mathcal{N}(0, 1)$, whose density is well-known:
	\begin{align}
	f_{Z}(z) &= \frac{K_0(|z|)}{\pi},
	\end{align}
	where $K_0$ is the modified Bessel function of the second kind, which 
	tends to infinity at $0$,
	which illustrates Corollary~\ref{thm:product_corr}.%
	\footnote{Though, even if each $W_j \phi(X_j)$ has an infinite density at $0$, the density at $0$ of the 
		weighted sum $Z = \frac{1}{\sqrt{n}} \sum_{j = 1}^n W_j \phi(X_j) + B$ may be finite. 
		For instance, it occurs when all the $W_j$ and $\phi(X_j)$ are i.i.d.\ and Gaussian.
		But in this case, even if $f_Z(0) < \infty$, it is impossible to recover a Gaussian pre-activation (see Prop.\ 
		\ref{prop:gaussian_sum}).}
\end{remark}

Finally, if Constraint~\ref{constr_2} holds and we want $f_{|G|}(0) = \sqrt{\frac{2}{\pi}}$, then, according to 
Equation~\eqref{eqn:product_main_eqn2}, the following constraint must hold.
\begin{tcolorbox}[colback=blue!10!white,size=title]
\begin{constraint} \label{constr_2b}
	The density of $Y$ at $0$ must have a specific value depending on the distribution of $W$:
	\begin{align*}
		f_{|Y|}(0) = \sqrt{\frac{2}{\pi}} \left[\int_{0}^{\infty} \frac{f_{|W|}(t)}{t} \, \mathrm{d}t\right]^{-1} .
	\end{align*}
\end{constraint}
\end{tcolorbox}

\subsubsection{Behavior of the tail} \label{sec:constraint_2}

We use the results of~\cite{vladimirova2021bayesian} on the \emph{generalized Weibull-tail distributions} and start by recalling useful definitions and properties.

\begin{definition}[Slowly varying function]
	A measurable function $f : (0, \infty) \rightarrow (0, \infty)$ is said to be \emph{slowly varying} if:
	\begin{align}
	\forall a > 0, \quad \lim_{x \rightarrow \infty} \frac{f(ax)}{f(x)} = 1 .
	\end{align}
\end{definition}

\begin{definition}[Generalized Weibull-Tail ($\mathrm{GWT}$) distribution, \citealp{vladimirova2021bayesian}] \label{def:gwt}
	A random variable $X$ is called \emph{generalized Weibull-tail} with parameter 
	$\theta > 0$, or $\mathrm{GWT}(\theta)$, if its survival function $S_X$ is bounded 
	in the following way: 
	\begin{align}
	\forall x > 0, \quad \exp\left(-x^{\theta} f_1(x) \right) \leq S_X(x) \leq \exp\left(-x^{\theta} f_2(x) \right) ,
	\end{align}
	where $f_1$ and $f_2$ are slowly-varying functions and $\theta > 0$.
\end{definition}

\begin{proposition}[\citealp{vladimirova2021bayesian}, Thm.\ 2.2] \label{thm:product_tail}
	The product of two independent non-negative 
	random variables $|W|$ and $|Y|$ which are respectively $\mathrm{GWT}(\theta_W)$ and $\mathrm{GWT}(\theta_Y)$
	is $\mathrm{GWT}(\theta)$, with $\theta$ such that:
	\begin{align}
	\frac{1}{\theta} = \frac{1}{\theta_W} + \frac{1}{\theta_Y} .
	\end{align}
\end{proposition}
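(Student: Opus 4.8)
The plan is to control the product $Z := |W|\,|Y|$ through its cumulative hazard $\Lambda_Z(z) := -\log S_Z(z)$, and to show that $\Lambda_Z$ is squeezed between $z^{\theta}g_1(z)$ and $z^{\theta}g_2(z)$ with $g_1,g_2$ slowly varying and $\theta = \theta_W\theta_Y/(\theta_W+\theta_Y)$; by Definition~\ref{def:gwt} this is exactly the $\mathrm{GWT}(\theta)$ property. The hypotheses give, in the tail, $x^{\theta_W}f_1^W(x)\le\Lambda_W(x)\le x^{\theta_W}f_2^W(x)$ and similarly for $|Y|$. The guiding heuristic is a one-dimensional saddle point: realizing $\{WY>z\}$ most cheaply means $W\approx w$ and $Y\approx z/w$, at ``cost'' roughly $\exp\!\bigl(-w^{\theta_W}-(z/w)^{\theta_Y}\bigr)$; minimizing $h_z(w):=w^{\theta_W}+(z/w)^{\theta_Y}$ gives the balancing scale $w^*(z)\asymp z^{\alpha}$ with $\alpha:=\theta_Y/(\theta_W+\theta_Y)$, and then $h_z(w^*(z))\asymp z^{\theta}$ because $\alpha\theta_W=(1-\alpha)\theta_Y=\theta$. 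It is convenient to first replace each $f_i^{\bullet}$ by a monotone slowly varying envelope via Karamata's representation, so that suprema and infima over dyadic windows reduce to single evaluations.

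For the \emph{lower bound on $S_Z$} (the upper bound $\Lambda_Z(z)\le z^{\theta}g_2(z)$): for any $w>0$, independence together with $\{W>w\}\cap\{Y>z/w\}\subseteq\{WY>z\}$ gives $S_Z(z)\ge S_W(w)\,S_Y(z/w)$, hence $\Lambda_Z(z)\le\Lambda_W(w)+\Lambda_Y(z/w)$. Choosing $w=w^*(z)\asymp z^{\alpha}$ and inserting the upper $\mathrm{GWT}$ bounds yields $\Lambda_Z(z)\le (w^*)^{\theta_W}f_2^W(w^*)+(z/w^*)^{\theta_Y}f_2^Y(z/w^*)$; since the composition of a slowly varying function with $z\mapsto z^{\alpha}$ is slowly varying, and sums and products of slowly varying functions are slowly varying, the right-hand side has the form $z^{\theta}g_2(z)$ with $g_2$ slowly varying.

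For the \emph{upper bound on $S_Z$} (the lower bound $\Lambda_Z(z)\ge z^{\theta}g_1(z)$): cover $\{WY>z\}$ by the dyadic events $E_k:=\{2^{k}\le W<2^{k+1}\}\cap\{Y>z\,2^{-k-1}\}$, $k\in\mathbb{Z}$ (on $\{WY>z,\ 2^{k}\le W<2^{k+1}\}$ one indeed has $Y>z2^{-k-1}$), so that by independence $S_Z(z)\le\sum_{k\in\mathbb{Z}}S_W(2^{k})\,S_Y(z2^{-k-1})$. Each summand is at most $\exp\!\bigl(-2^{k\theta_W}f_1^W(2^{k})-(z2^{-k-1})^{\theta_Y}f_1^Y(z2^{-k-1})\bigr)$, whose exponent is, up to slowly varying corrections, $h_z(2^{k})$. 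Bounding that exponent below by its first term for $k\ge k^*(z):=\log_2 w^*(z)$ and by its second term for $k<k^*(z)$, each of the two resulting geometric-type series is dominated, for $z$ large, by an absolute constant times its boundary term $\exp(-c\,z^{\theta}\tilde g(z))$; the constants are harmless, and one obtains $S_Z(z)\le\exp(-z^{\theta}g_1(z))$ with $g_1$ slowly varying (possibly with a smaller leading constant than $g_2$, which Definition~\ref{def:gwt} allows).

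The main obstacle is not the combinatorics but the bookkeeping with slowly varying functions: one must (i) turn $f_i^W(w^*(z))$, $f_i^Y(z/w^*(z))$, $f_i^W(2^{k})$ and $f_i^Y(z2^{-k})$ — whose arguments are only comparable to powers of $z$, never exactly equal to a fixed power — into genuine slowly varying functions of $z$, which uses the uniform convergence theorem and Potter's bounds to control the multiplicative distortion coming from the inexact balance and from the $2^{\pm k}$ discretization; (ii) absorb the absolute constants and the $z$-dependent number of near-optimal terms in the dyadic sum without destroying slow variation; and (iii) check that the two-sided bounds demanded by Definition~\ref{def:gwt} hold for every $x>0$, which near $0$ is automatic since $S_Z\to1$ there, so the real content is entirely in the tail. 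Reducing at the outset to monotone slowly varying envelopes is what makes steps (i) and (ii) tractable.
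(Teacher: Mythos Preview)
The paper does not give its own proof of this proposition: it is quoted verbatim as Theorem~2.2 of \citet{vladimirova2021bayesian} and used as a black box to derive Constraint~\ref{constr_3}. There is therefore nothing in the paper to compare your argument against.

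Your sketch is the standard Laplace-type argument for products of random variables with stretched-exponential tails, and it is correct in outline. The ``easy'' direction (one good pair $(w,z/w)$ gives a lower bound on $S_Z$) and the ``hard'' direction (dyadic covering in $W$ plus a union bound gives the matching upper bound) are exactly the right moves, and the balancing scale $w^*(z)\asymp z^{\theta_Y/(\theta_W+\theta_Y)}$ is the correct saddle. Two minor remarks: (i) you have silently swapped the roles of $f_1$ and $f_2$ relative to the paper's Definition~\ref{def:gwt} (in the paper $f_1\ge f_2$ on the hazard scale, whereas you take $f_1\le f_2$); this is harmless but worth flagging. (ii) In the dyadic sum you also need to say a word about the infinitely many $k$ with $2^k\to0$ or $z2^{-k}\to0$, where one of the two GWT lower bounds on the hazard becomes vacuous; there the \emph{other} factor alone already decays double-exponentially in $|k-k^*|$, so these tails of the sum are negligible---your split at $k^*$ handles this implicitly, but it is the one place where a reader might worry. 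The bookkeeping you list in (i)--(iii) at the end is indeed where all the work sits, and Potter's bounds together with monotone Karamata envelopes are the right tools.
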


We recall that, in our case, $|G| = |W| \cdot |Y|$ is the absolute value of a Gaussian random variable. 
So $|G|$ is $\mathrm{GWT}(2)$.
Thus, if we assume that $|W|$ and $|Y|$ are respectively $\mathrm{GWT}(\theta_W)$ and $\mathrm{GWT}(\theta_Y)$, 
then we have:
\begin{align}
\frac{1}{2} = \frac{1}{\theta_W} + \frac{1}{\theta_Y} .
\end{align}

Therefore we have the following constraint.
\begin{tcolorbox}[colback=blue!10!white,size=title]
\begin{constraint} \label{constr_3}
	The weights $W$ are $\mathrm{GWT}(\theta)$ with $\theta \in \Theta = (2, \infty)$.
\end{constraint}
\end{tcolorbox}

\subsubsection{Conclusion}

Constraints~\ref{constr_2} and~\ref{constr_3} indicate that the distribution $\mathrm{P}$ of the weights $W$:
\begin{itemize}
	\item[(i)] should have a density $f_{W}$ such that $f_W(0) = 0$;
	\item[(ii)] should be $\mathrm{GWT}(\theta)$ with $\theta \in (2, \infty)$.
\end{itemize}
A simple choice for $\mathrm{P}$ matching these two conditions is: $\mathrm{P} = \mathrm{P}_{\theta} = 
\mathcal{W}(\theta, 1)$ with $\theta \in (2, \infty)$, where $\mathcal{W}(\theta, 1)$ is the symmetric Weibull 
distribution, defined in Equation~\eqref{eqn:weibull}. 
Thus, we ensure that $f_{W}(0) = 0$ and $W$ is 
generalized Weibull-tail with a parameter $\theta$ easy to control (see remark below).

\begin{remark} \label{rem:weibull_gwt}
	If $W \sim \mathcal{W}(\theta, 1)$, then $W$ is $\mathrm{GWT}(\theta)$.
\end{remark}

\subsection{Obtaining the distribution of the activations $Y$} \label{sec:solve:product}

Now that the distribution $\mathrm{P}$ of $W$ is supposed to be symmetric Weibull, that is, $\mathrm{P} = 
\mathrm{P}_{\theta} = \mathcal{W}(\theta, 1)$, we are able to look for \emph{odd}
and \emph{positive} activation functions $\phi_{\theta}^{\actop}$
such that:
\begin{align}
W \sim \mathcal{W}(\theta, 1), X \sim \mathcal{N}(0, 1) \; \Rightarrow \; 
W \phi_{\theta}^{\actop}(X) \sim \mathcal{N}(0, 1) . \label{eqn:def_phi_theta}
\end{align}

As a first step, we look for a distribution $\mathrm{Q}_{\theta}$ such that:
\begin{align}
	W \sim \mathrm{P}_{\theta}, Y \sim \mathrm{Q}_{\theta} \; \Rightarrow \; W Y =: G \sim \mathcal{N}(0, 1) .
\end{align}
In order to ``invert'' this equation, it is natural to make use of the Mellin transform.
A comprehensive and historical work about Fourier and Mellin transforms can be found in 
\cite{titchmarsh1937introduction}, 
and a simple application to the computation of the density of the product of two random variables can be found 
in \cite{epstein1948some}.

However, the technique involving the Mellin transform is very difficult to use in this case, 
both analytically and numerically.
Details about the Mellin transform and these difficulties can be found in 
Appendix~\ref{app:mellin}.

\paragraph{Computation of $f_{|Y|}$: hand-designed parameterized function.} 
Thus, inspired by the shape of $f_{|Y|}$ computed via 
the numerical inverse Mellin transform (see Fig.~\ref{fig:mellin_ck_inverse}, App.~\ref{app:mellin:numerical}), we 
build an 
approximation of $f_{|Y|}$ from the family of functions $\{g_{\alpha, \gamma, \lambda_1, \lambda_2} : \alpha, \gamma, 
\lambda_1, \lambda_2 > 0 \}$ with:
\begin{align}
	g_{\Lambda}(x) := g_{\alpha, \gamma, \lambda_1, \lambda_2}(x) := \gamma \alpha \frac{x^{\alpha - 
	1}}{\lambda_1^{\alpha}}  
	\exp\left( - \frac{x^{\alpha}}{\lambda_1^{\alpha}} \right)
	+ \sqrt{\frac{2}{\pi}}\frac{1}{\Gamma\left(1 - \frac{1}{\theta} \right)} \exp\left( - 
	\frac{x^{\theta'}}{\lambda_2^{\theta'}} \right) ,
\end{align}
where $\theta'$ is the conjugate of $\theta$: $\frac{1}{\theta} + \frac{1}{\theta'} = \frac{1}{2}$,
and $\Lambda := (\alpha, \gamma, \lambda_1, \lambda_2)$.
It is clear that, whatever the parameters, $g_{\Lambda}(0) = \sqrt{\frac{2}{\pi}}\left[\Gamma\left(1 - \frac{1}{\theta} 
\right)\right]^{-1}$, which is exactly Constraint~\ref{constr_2b}. Moreover, when $\alpha = 0$, $g$ matches also 
Constraint~\ref{constr_3}.

Then, we optimize the vector of parameters $\Lambda$ with respect to the following loss:
\begin{align}
	\ell(\Lambda) &:= \| \hat{F}_{\Lambda} - F_{|G|} \|_{\infty} \\
	\hat{F}_{\Lambda}(z) &:= \int_{0}^{\infty} F_{|W|}\left(\frac{z}{t}\right) g_{\Lambda}(t)  \, \mathrm{d} t
	\quad \text{(see Eqn.~\eqref{eqn:mellin:cdf})},
\end{align}
where $\hat{F}_{\Lambda}$ is meant to approximate the CDF of the absolute value of a Gaussian $\mathcal{N}(0, 1)$. 
The integral is computed numerically. 
For the loss, we have chosen to compute the $\mathcal{L}^{\infty}$-distance between two CDFs, in order to be consistent 
with the Kolmogorov--Smirnov test we perform in Section~\ref{sec:expe:ks}. Optimization details can be found in 
Appendix~\ref{app:numerical_inv}.

\subsection{Obtaining the activation function $\phi_{\theta}^{\actop}$} \label{sec:solve:function}

In the preceding section, we have computed the distribution of $|Y|$. 
We are restricting ourselves to respectively symmetrical and positive $Y$, whose distributions are 
respectively denoted by $\mathrm{Q}_{\theta}^{\acto}$ and $\mathrm{Q}_{\theta}^{\actp}$
(``$\mathrm{o}$'' for ``odd'' and ``$\mathrm{p}$'' for ``positive'').
Now, we want to build the activation function $\phi_{\theta}^{\actop}$, in order to transform a
pre-activation $G \sim \mathcal{N}(0, 1)$ into an activation $Y = \phi_{\theta}^{\actop}(G)$ distributed
according to $\mathrm{Q}_{\theta}^{\actop}$:
\begin{align*}
	\text{if } G \sim \mathcal{N}(0, 1), \quad \text{then } \phi_{\theta}^{\actop}(G) 
	\sim \mathrm{Q}_{\theta}^{\actop} .
\end{align*}

To compute $\phi_{\theta}^{\actop}$, we will use the following proposition:

\begin{proposition} \label{prop:invF}
	Let $X$ be a random variable such that $F_X$ is strictly increasing on $\mathbb{R}$. 
	Let $\mathrm{Q}$ be a distribution without atoms such that $F_{\mathrm{Q}}$ 
	is strictly increasing on $S := F_{\mathrm{Q}}^{-1}((0, 1))$. 
	Then:
	\begin{align}
	\text{with} \quad \phi(x) := F_{\mathrm{Q}}^{-1}(F_X(x)), \quad 
	\text{we have:} \quad \phi(X) \sim \mathrm{Q}.
	\end{align}
\end{proposition}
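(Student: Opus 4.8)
The plan is to verify the two claims of Proposition~\ref{prop:invF} directly: that the stated $\phi$ is well-defined, and that $\phi(X) \sim \mathrm{Q}$. Both follow from the classical probability-integral-transform argument, adapted to the hypothesis that both CDFs are strictly increasing.

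First I would check that $\phi := F_{\mathrm{Q}}^{-1} \circ F_X$ makes sense. Since $F_X$ is a CDF, its range is contained in $[0,1]$; since $F_X$ is strictly increasing, it is injective, and its image is some subinterval $I \subseteq (0,1)$ (possibly with endpoints $0$ or $1$ excluded depending on whether the support of $X$ is bounded). Because $F_{\mathrm{Q}}$ is strictly increasing, it too is injective on the interior of the support of $\mathrm{Q}$, so $F_{\mathrm{Q}}^{-1}$ is a genuine function on the open interval $(0,1)$ (extended to $\{0,1\}$ by the relevant infima/suprema if needed). Thus for every $x$ in the support of $X$, $F_X(x) \in (0,1)$ lies in the domain of $F_{\mathrm{Q}}^{-1}$, and $\phi(x)$ is well-defined. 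A small technical point worth mentioning is that one should restrict attention to $x$ in the interior of the support of $X$, or equivalently note that $F_X$ continuous and strictly increasing on that set guarantees $\phi$ is itself continuous and strictly increasing there; outside the support the definition is immaterial since those values occur with probability zero.

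Next I would compute the law of $\phi(X)$. Fix $y$ in the interior of the support of $\mathrm{Q}$. Then
\begin{align*}
\mathbb{P}(\phi(X) \leq y) = \mathbb{P}\big(F_{\mathrm{Q}}^{-1}(F_X(X)) \leq y\big) = \mathbb{P}\big(F_X(X) \leq F_{\mathrm{Q}}(y)\big),
\end{align*}
where the last equality uses that $F_{\mathrm{Q}}$ is strictly increasing (hence so is $F_{\mathrm{Q}}^{-1}$), so the inequality $F_{\mathrm{Q}}^{-1}(u) \leq y$ is equivalent to $u \leq F_{\mathrm{Q}}(y)$. Now I invoke the standard fact that if $F_X$ is continuous and strictly increasing then $F_X(X)$ is uniform on $(0,1)$: indeed $\mathbb{P}(F_X(X) \leq u) = \mathbb{P}(X \leq F_X^{-1}(u)) = F_X(F_X^{-1}(u)) = u$ for $u \in (0,1)$. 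Applying this with $u = F_{\mathrm{Q}}(y) \in (0,1)$ gives $\mathbb{P}(\phi(X) \leq y) = F_{\mathrm{Q}}(y)$, which is exactly the statement that $\phi(X) \sim \mathrm{Q}$.

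I do not anticipate a serious obstacle here — this is a standard inverse-CDF construction — but the step requiring the most care is the bookkeeping around domains and supports: making sure $F_{\mathrm{Q}}^{-1}$ is applied only to values in $(0,1)$, handling the (measure-zero) boundary behaviour cleanly, and confirming that ``$F_X$ strictly increasing'' together with $F_X$ being a CDF indeed forces continuity of $F_X$ on the relevant range so that the probability-integral-transform identity $\mathbb{P}(F_X(X)\le u)=u$ holds. Once that is in place, the computation of the distribution is a one-line equivalence of events. It may also be worth remarking that $\phi$ is automatically nondecreasing (strictly increasing on the support of $X$), which is a desirable property for an activation function and will be used in the construction of $\phi_\theta$.
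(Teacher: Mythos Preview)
Your proposal is correct and follows essentially the same approach as the paper: a direct computation of the CDF of $\phi(X)$ using the monotonicity of the CDFs involved. The paper's proof is terser — it inverts $\phi$ directly via $\phi^{-1} = F_X^{-1}\circ F_{\mathrm{Q}}$ and writes $\mathbb{P}(\phi(X)\le y)=F_X(\phi^{-1}(y))=F_{\mathrm{Q}}(y)$ — whereas you route through the intermediate observation that $F_X(X)$ is uniform on $(0,1)$; these are equivalent one-line manipulations, and your extra care about domains and supports is a welcome addition rather than a departure.
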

\begin{proof}
	Let $\phi(x) := F_{\mathrm{Q}}^{-1}(F_X(x))$, which is a strictly increasing bijection from $\mathbb{R}$ to
	$S$. 
	
	For any $y \in S$, we have:
	\begin{align}
	\mathbb{P}(\phi(X) \leq y) = \mathbb{P}(X \leq \phi^{-1}(y)) = F_X(\phi^{-1}(y)) 
	= F_X(F_X^{-1}(F_{\mathrm{Q}}(y))) = F_{\mathrm{Q}}(y).
	\end{align}
	
	For $y \notin S$, since $S$ is connected, we have either $y \leq \inf S$ or $y \geq \sup S$:
	\begin{itemize}
		\item if $y \notin S$ with $y \leq \inf S$, then $\mathbb{P}(\phi(X) \leq y) 
		\leq \mathbb{P}(\phi(X) \notin S) = 0$;
		\item if $y \notin S$ with $y \geq \sup S$, 
		then $\mathbb{P}(\phi(X) \leq y) \geq \mathbb{P}(\phi(X) \leq \sup S) 
		\geq \mathbb{P}(\phi(X) \in S) = 1$.
	\end{itemize}

	So, for all $y \in \mathbb{R}$, $\mathbb{P}(\phi(X) \leq y) = F_{\mathrm{Q}}(y)$.
\end{proof}

\paragraph{Odd activation function $\phi_{\theta}^{\acto}$.}
We build a symmetric distribution for $Y$:
\begin{align}
F_Y^{\acto}(t) := \frac{1}{2} + \frac{1}{2} \mathrm{sgn}(t) \int_0^{|t|} f_{|Y|}(y) \, \mathrm{d}y .
\end{align}

Since $F_G$ and $F_Y^{\acto}$ are strictly increasing on $\mathbb{R}$, we can use 
Proposition~\ref{prop:invF}:
\begin{align}
\phi_{\theta}^{\acto}(t) := (F_Y^{\acto})^{-1}(F_G(t)).
\end{align}

\paragraph{Positive activation function $\phi_{\theta}^{\actp}$.}
We build a distribution for $Y$ with support in $\mathbb{R}^+$:
\begin{align}
F_Y^{\actp}(t) := \int_0^{\max(0, t)} f_{|Y|}(y) \, \mathrm{d}y .
\end{align}

Since $F_G$ is strictly increasing on $\mathbb{R}$ 
and $F_Y^{\actp}$ is strictly increasing on 
$S := (F_Y^{\actp})^{-1}((0, 1)) = \mathbb{R}^+ \backslash \{0\}$, we can use 
Proposition~\ref{prop:invF}:
\begin{align}
\phi_{\theta}^{\actp}(t) := (F_Y^{\actp})^{-1}(F_G(t)).
\end{align}

\begin{remark} \label{rem:sym_phi}
	There are many other possible choices for the distribution $\mathrm{Q}$ of $Y$,
	resulting in various activation functions other than $\phi_{\theta}^{\actop}$.
	However, among all the possible distributions $\mathrm{Q}$, 
	we identify two ``natural'' usable solutions: 
	$\mathrm{Q}$ symmetric (leading to an odd activation function) 
	or $\mathrm{Q}$ with support in $\mathbb{R}^+$ (leading to a positive activation function).
\end{remark}

\subsection{Results and limiting cases}

We have plotted in Figure~\ref{fig:plot_act} the different distributions related to the computation
of $\phi_{\theta}^{\actop}$ and the functions $\phi_{\theta}^{\actop}$ themselves.
The family of the $\phi_{\theta}^{\acto}$ is a 
continuum spanning unbounded functions from the $\tanh$-like function $\phi_{2^+}$
to the identity function,
while the family of the $\phi_{\theta}^{\actp}$ is a 
continuum spanning functions from a sigmoid-like function
to a softplus-like function, all of which tending to $0$ in $- \infty$ and 
to $+ \infty$ in $+ \infty$.
As a reminder, $\mathrm{softplus}(x) = \log(1 + \exp(x))$.

\begin{figure}[h!]
	{\centering
    \begin{subfigure}{.48\linewidth}
    	\includegraphics[width=1.05\linewidth]{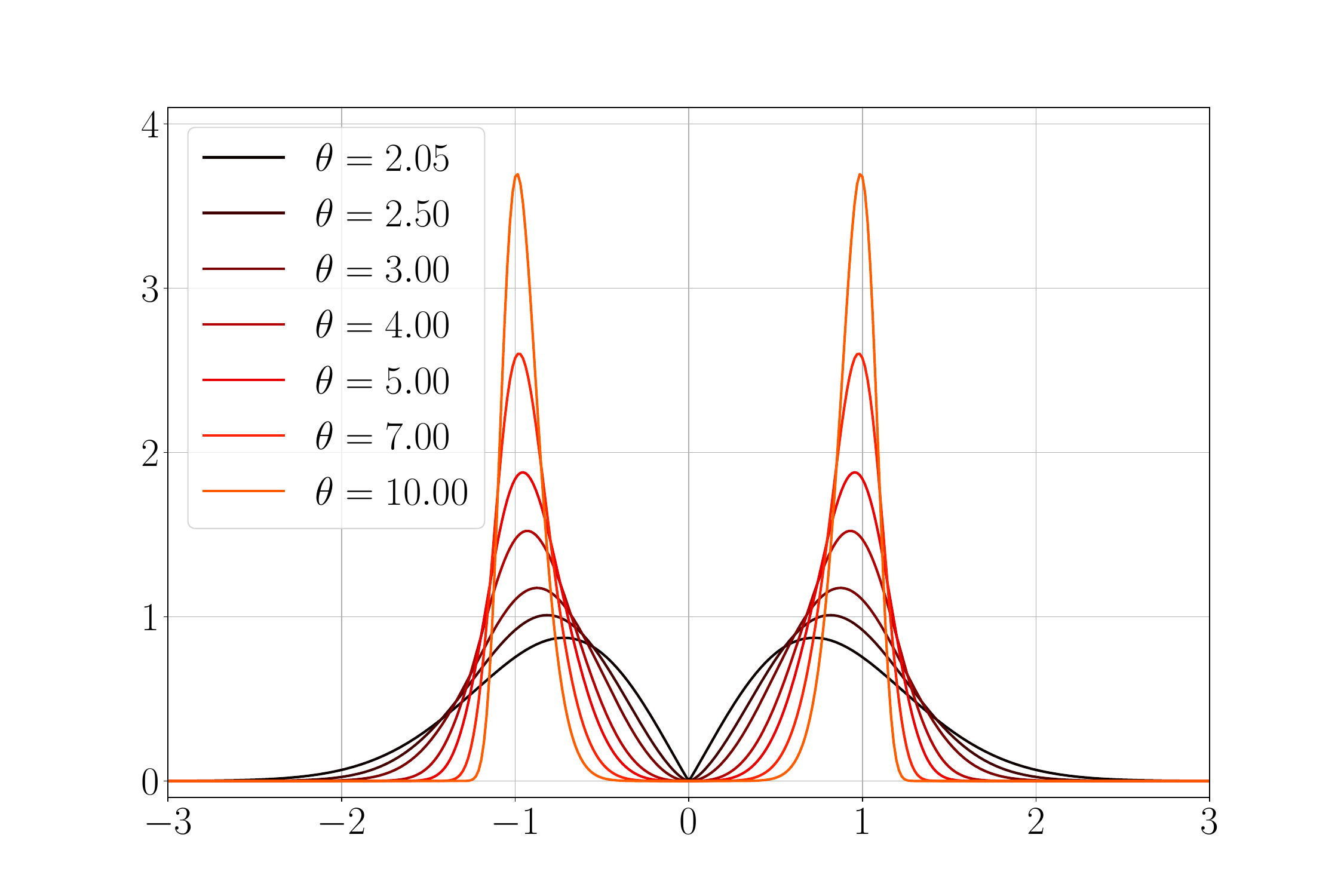}
    	\subcaption{Initialization distribution $\mathrm{P}_{\theta}$ of the weights: symmetric Weibull 
    		$\mathcal{W}(\theta, 1)$.} \label{fig:plot_act:dens_Weibull}
    \end{subfigure}\hfill
    \begin{subfigure}{.48\linewidth}
    	\includegraphics[width=1.05\linewidth]{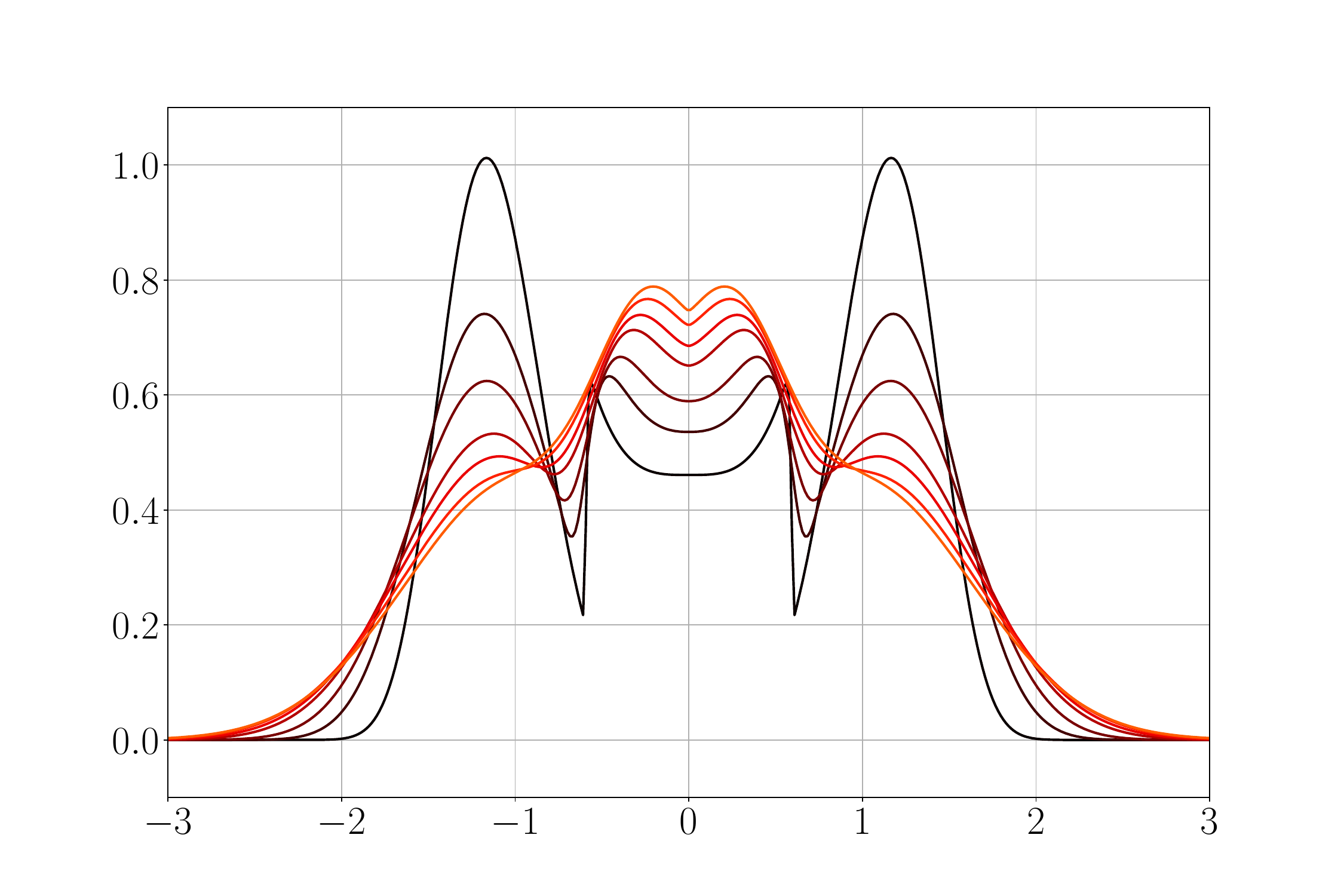}
    	\subcaption{Estimated density $f_Y$ of the distribution $\mathrm{Q}_{\theta}$ of $Y$. \phantom{Estimated density}} 
    	\label{fig:plot_act:dens_Y}
    \end{subfigure}

    \begin{subfigure}{.48\linewidth}
		\includegraphics[width=1.05\linewidth]{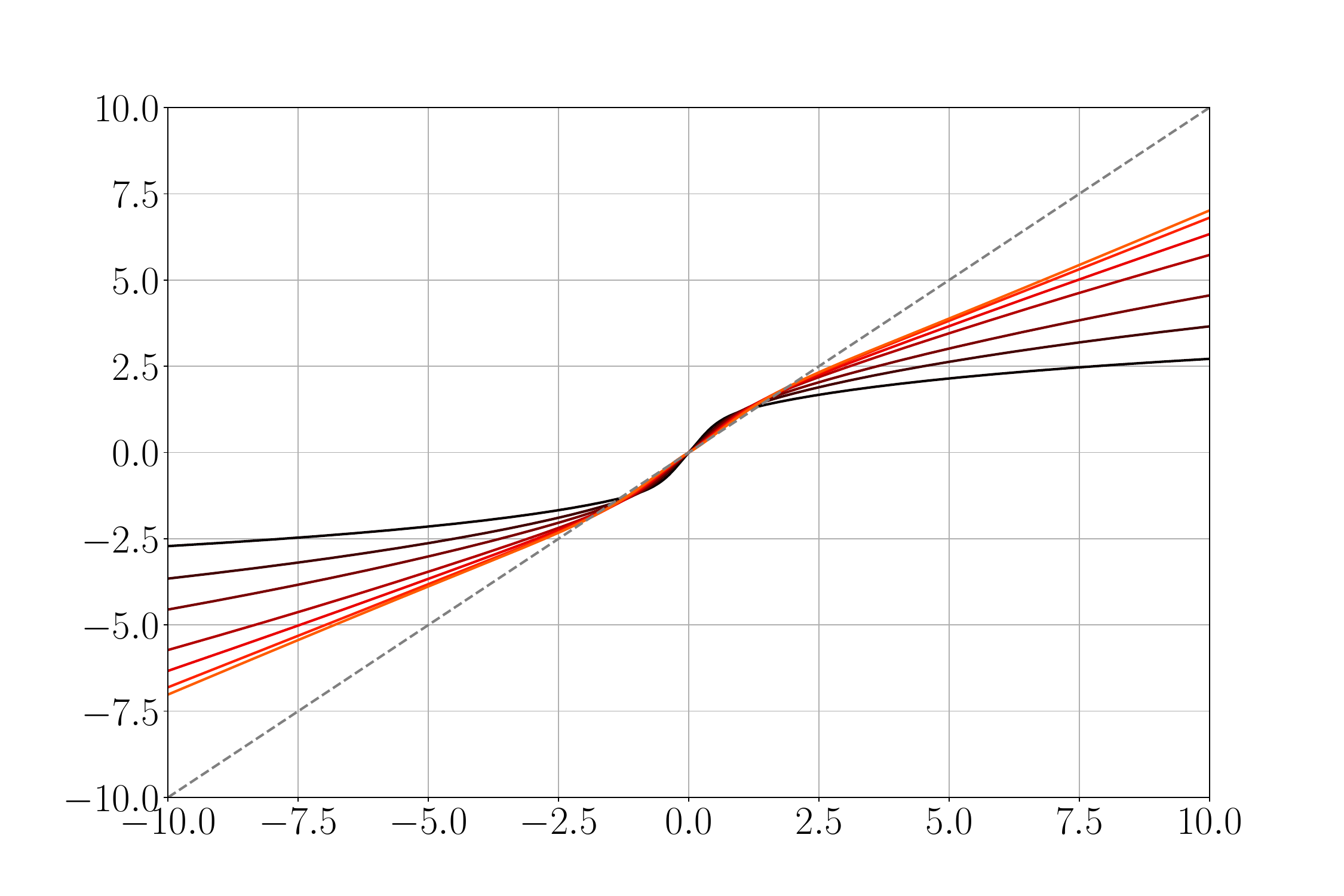}
		\subcaption{Activation function $\phi_{\theta}^{\acto}$.} \label{fig:plot_act:act}
	\end{subfigure}\hfill
	\begin{subfigure}{.48\linewidth}
		\includegraphics[width=1.05\linewidth]{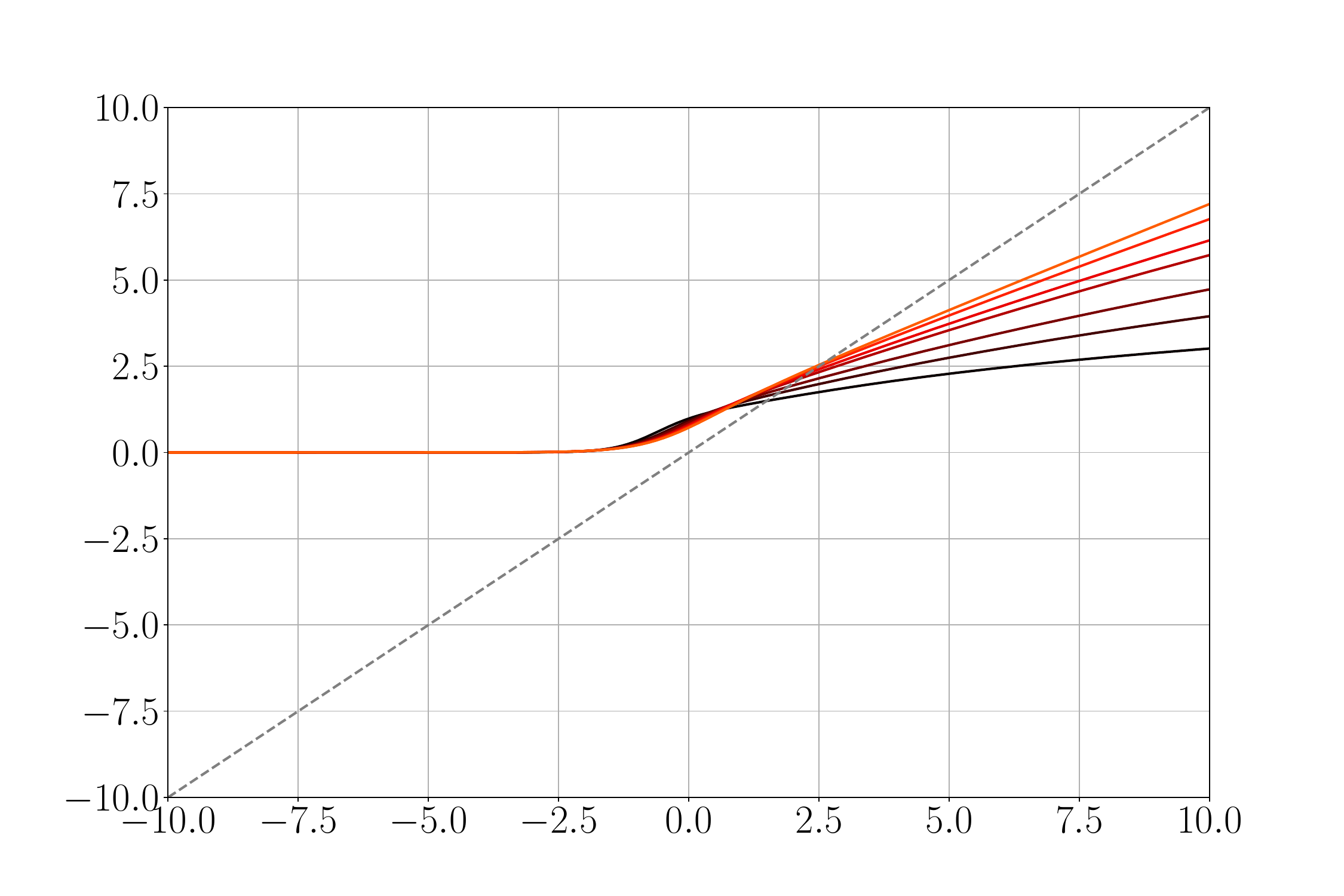}
		\subcaption{Activation function $\phi_{\theta}^{\actp}$.} \label{fig:plot_act:act_zoom}
	\end{subfigure}
\caption{How to build a random variable $W Y = W \phi_{\theta}^{\actop}(X) =: G \sim \mathcal{N}(0, 1)$, where $X \sim 
\mathcal{N}(0, 1)$? (a) Choose the distribution $\mathrm{P}_{\theta}$ of $W$, then (b) deduce the distribution 
$\mathrm{Q}_{\theta}$ of $Y$, and finally (c-d) find $\phi_{\theta}^{\acto}$
and $\phi_{\theta}^{\actp}$.} \label{fig:plot_act}}
\end{figure}

Our construction degenerates into the following two extreme cases at the boundaries of the parameter space 
$\Theta = (2,\infty)$:
\begin{itemize}
\item when $\theta \rightarrow \infty$, we have $\mathrm{P}_{\theta} \overset{d}{\longrightarrow} \mathcal{R}$ and:
	\begin{enumerate}[$\bullet$]
		\item $\phi_{\theta}^{\acto} \rightarrow \mathrm{Id}$ pointwise,
		\item $\phi_{\theta}^{\actp} \rightarrow \phi_{\infty}^{\actp}$ pointwise, 
		with: $\phi_{\infty}^{\actp} : x \mapsto \sqrt{2} \mathrm{erf}^{-1}(\frac{1}{2}
		+ \frac{1}{2} \mathrm{erf}(\frac{x}{\sqrt{2}}))$;
	\end{enumerate}
\item when $\theta \rightarrow 2$, we have $\mathrm{P}_{\theta} \overset{d}{\longrightarrow} \mathcal{W}(2, 1)$;
\end{itemize}
where $\mathcal{R}$ is the Rademacher distribution ($\xi \sim \mathcal{R} \Leftrightarrow \mathbb{P}(\xi = \pm 
1) = \frac{1}{2}$) and $\mathrm{Id}$ is the identity function.

In the limiting case $\theta \rightarrow \infty$, we initialize the weights at $\pm 1$, which corresponds 
to binary ``weight quantization'', used in neural networks 
compression \citep{pouransari2020least}, 
and we use a linear activation function, commonly used in theoretical analyses of neural 
networks \citep{arora2019implicit}.
In the limiting case $\theta \rightarrow 2$, we recover weights with Gaussian tails,
with a $\tanh$-like or $\mathrm{sigmoid}$-like activation function.

\section{Experiments} \label{sec:experiments}

In this section, we test the Gaussian hypothesis with $\relu$, $\tanh$ 
and our activation functions $\phi_{\theta}^{\actop}$, 
after one layer (Section~\ref{sec:expe:ks}) and after several layers (Section~\ref{sec:expe:multilayer}). 
Then, we plot the Edge of Chaos graphs $(\sigma_b, \sigma_w)$, which are exact with $\phi = \phi_{\theta}^{\actop}$ 
(Section~\ref{sec:expe:eoc})
in the case of finite $n_l$ with independent pre-activations.
Finally, in Section~\ref{sec:expe:training}, we show the training trajectories of 
LeNet-type networks and multilayer perceptrons, when using $\tanh$, $\relu$, and various activation functions we have 
proposed. 

In the following subsections, when we use $\phi = \tanh$ or $\relu$, we initialize the 
weights according to a Gaussian at the EOC. This is, for $\tanh$: $\sigma_b^2 = 0.013$, $\sigma_w^2 = 1.46$; and for
$\relu$: $\sigma_b^2 = 0$, $\sigma_w^2 = 2$.
When we use $\phi = \phi_{\theta}^{\actop}$, we initialize the weights according to $\mathcal{W}(\theta, 1)$.

\paragraph{Notation for the activation functions.} We recall that: $\phi$ denotes an arbitrary activation function; 
$\varphi_{\delta, \omega}$ denotes the function defined in Definition~\ref{def:activ-phi}; 
$\phi_{\theta}^{\actop}$ denotes the odd or the positive function verifying Eqn.~\eqref{eqn:def_phi_theta}.

\subsection{Testing the Gaussian hypothesis: synthetic data, one layer} \label{sec:expe:ks}

Most importantly, we must experimentally verify that our family of initialization distributions and activation functions 
$\{(\mathrm{P}_{\theta}, \phi_{\theta}^{\actop}) : 
\theta \in (2, \infty) \}$
can reliably produce Gaussian pre-activations.

\paragraph{Framework.} First, we test our setup in the one-layer neural network case with 
synthetic inputs. More formally, we consider a $\mathcal{N}(0, 1)$ \emph{pre-input}%
\footnote{Such a pre-input plays the role of the pre-activation outputted by a hypothetical preceding layer.}
$\mathbf{Z} \in \mathbb{R}^{n}$, which is meant to be first transformed by the activation function 
$\phi_{\theta}^{\actop}$ (hence the name ``\emph{pre}-input''), then multiplied by a matrix of weights $\mathbf{W} \in 
\mathbb{R}^{1 \times n}$.
This one-neuron layer outputs a scalar $Z'$:
\begin{align}
	Z' := \frac{1}{\sqrt{n}} \mathbf{W} \phi(\mathbf{Z}), \quad \text{ with } Z_j \sim \mathcal{N}(0, 
	1) \text{ and } W_{1j} \sim \mathrm{P}_{\theta} \text{ for all } j.
\end{align}
We want to check that the distribution $\mathrm{P}'$ of $Z'$ is equal to $\mathcal{N}(0, 1)$.

\paragraph{Experimental results.}
For that, we use of the Kolmogorov--Smirnov (KS) test \citep{kolmogoroff1941confidence,smirnov1948table} (see 
Appendix~\ref{app:ks_test}).
We perform the KS test within two setups: with and without preliminary standardization of the sets of samples.
With a preliminary standardization, we perform the test on $(\bar{Z}'_1, \cdots , \bar{Z}'_s)$:
\begin{align}
	\bar{Z}_k' = \frac{Z_k' - \bar{\mu}}{\bar{\sigma}}, \quad \bar{\mu} = \frac{1}{s} \sum_{k = 1}^s Z'_k, \quad
	\bar{\sigma}^2 = \frac{1}{s - 1} \sum_{k = 1}^{s} \left(Z_k' - \bar{\mu}\right)^2 .
\end{align}
For the sake of simplicity, let us denote by $\hat{F}_{Z'} := F_s$ the empirical CDF of $Z'$, computed with the $s$ 
data samples $(Z_1', \cdots , Z_s')$, and let $\hat{F}_{\bar{Z}'}$ be the empirical CDF of standardized $Z'$, computed 
with $(\bar{Z}_1', \cdots , \bar{Z}_s')$.

We have plotted in Figure~\ref{fig:ks_test} the KS statistic of the distribution of the output $Z'$, when using our 
activation functions $\phi_{\theta}^{\actop}$, $\tanh$ and $\relu$. 
Our sample size is $s = 10^7$.
A small KS statistic corresponds to a configuration where $Z'$ is close to being $\mathcal{N}(0, 1)$. If a point is 
above the KS threshold (green line, dotted), then the Gaussian hypothesis is rejected with $p$-value $0.05$.

When we perform standardization (Fig.~\ref{fig:ks_test:with_std}), the neurons using $\phi_{\theta}^{\actop}$ 
output always a pre-activation $Z'$ that is closer to $\mathcal{N}(0, 1)$ than with $\relu$ or $\tanh$. 
But, despite this advantage, the Gaussian hypothesis should be rejected with $\phi_{\theta}^{\actop}$ 
when the neuron has a very small number of inputs ($n < 30$). 

In Figure~\ref{fig:ks_test:no_std}, we compare directly the distribution of $Z'$ to $\mathcal{N}(0, 1)$. 
This test is harder than testing the Gaussian hypothesis because the variance of $Z'$ must be equal to $1$. 
We observe that when using $\phi_{\theta}^{\actop}$, the KS statistic remains above the threshold (while it is still 
below $10^{-2}$, and even below $6 \cdot 10^{-3}$ for $n \geq 3$).
This result is due to the fact that our 
computation of $\phi_{\theta}^{\actop}$ is only approximate (see Section~\ref{sec:solve:function}). 

\begin{remark}
	Our sample size ($s = 10^7$) is very large, which lowers the threshold of
	rejection of the Gaussian hypothesis. We have chosen this large $s$ to reduce the noise of the KS statistics.
	If we had chosen $s = 18000$, a threshold close to $10^{-2}$ would have resulted, which is higher than any of 
	the KS statistics computed with $\phi = \phi_{\theta}^{\actop}$. One will also note that, in 
	Section~\ref{sec:expe:multilayer}, we use only $s = 10000$ samples to keep a reasonable computational cost.
\end{remark}

\begin{figure}[h!]
	\begin{subfigure}{.5\linewidth}
		\includegraphics[width=1\linewidth]{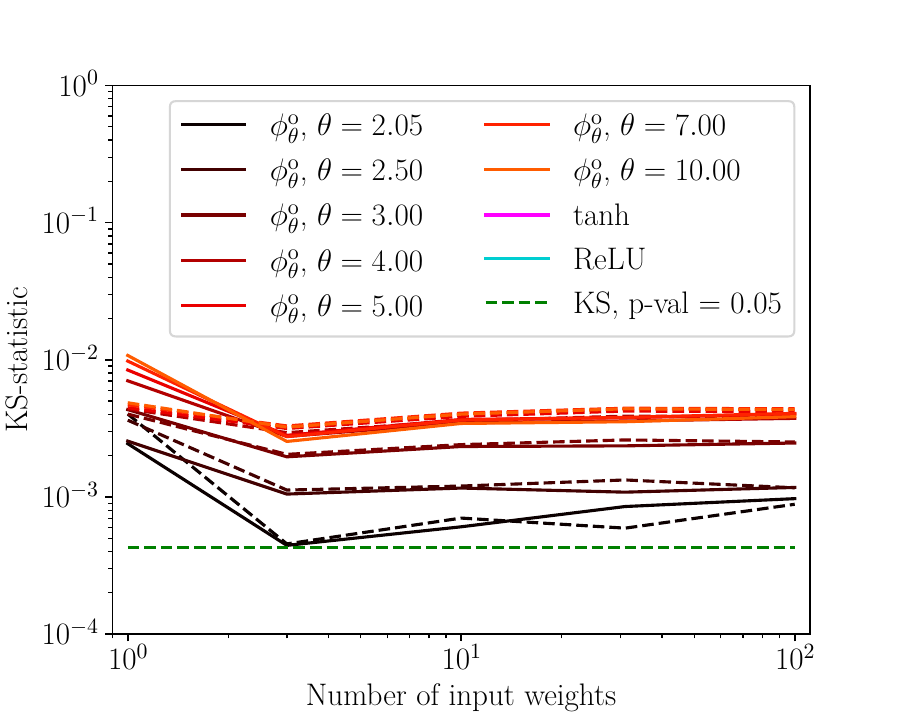}
		\subcaption{KS-statistic of $\hat{F}_{Z'}$.} 
		\label{fig:ks_test:no_std}
	\end{subfigure}
	\begin{subfigure}{.5\linewidth}
		\includegraphics[width=1\linewidth]{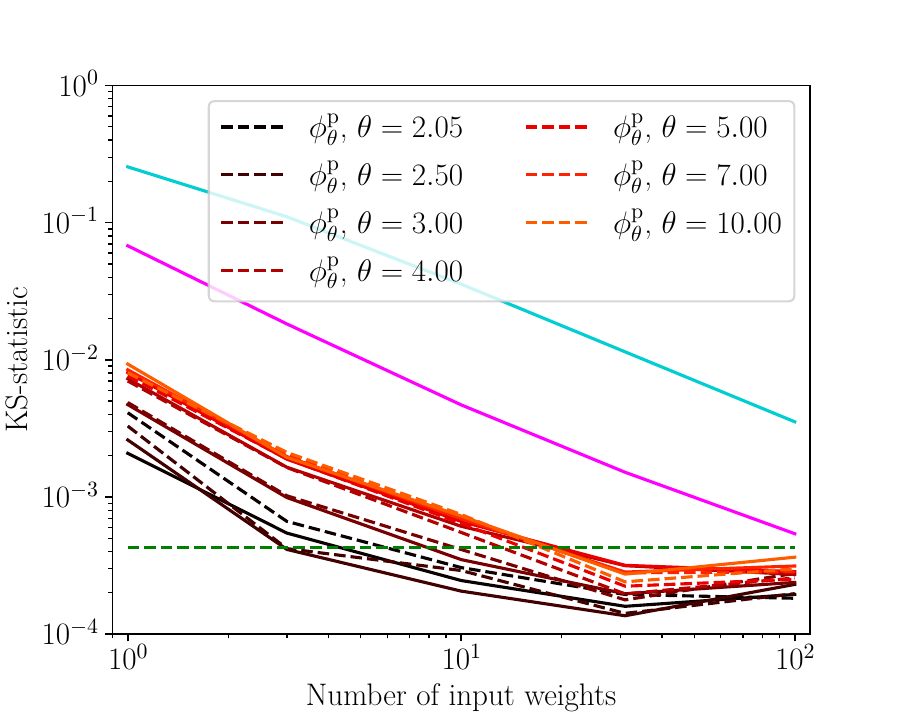}
		\subcaption{KS-statistic of $\hat{F}_{\bar{Z}'}$ (standardized samples).} 
		\label{fig:ks_test:with_std}
	\end{subfigure}
	\caption{Evolution of the KS statistic of the distribution of $Z'$ (Fig.~\ref{fig:ks_test:no_std}) and the 
		standardized distribution of $Z'$ (Fig.~\ref{fig:ks_test:with_std}), 
		with a number of inputs $n \in \{1, 3, 10, 30, 100\}$.}
	\label{fig:ks_test}
\end{figure}

\begin{remark}
	Since $\tanh$ and $\relu$ have not been designed such that $Z' \sim \mathcal{N}(0, 1)$, we did not plot 
	the related non-standardized KS statistics in Figure~\ref{fig:ks_test:no_std}.
	Anyway, given the standard deviation of $Z'$ when using $\phi = \tanh$ or $\relu$ (see Table~\ref{tab:ks_stddev}, 
	Appendix~\ref{app:expe:ks}), very large KS statistics are expected in this setup. 
\end{remark}

We discuss the limits of the KS test in Appendix~\ref{app:ks_test}.

\subsection{Testing the Gaussian hypothesis: CIFAR-10, multilayer perceptron} \label{sec:expe:multilayer}

Now, we test our setup on a multilayer perceptron with CIFAR-10, which is more realistic. 
We show in Figure~\ref{fig:testing_prop} how the distribution of the pre-activations propagates in a multilayer 
perceptron, for different layer widths $n_l \in \{10, 100, 1000\}$. 

\paragraph{Setup.}
Let $\mathcal{D}^l$ be the distribution of the pre-activation $Z^l_1$ after layer $l$.
For all $l$, let us define $(Z^l_{1;k})_{k \in [1, s]}$, a sequence of i.i.d.\ samples drawn
from $\mathcal{D}^l$.
The plots in Figure~\ref{fig:testing_prop} show the evolution of the $\mathcal{L}^{\infty}$ distance $\| \hat{F}_{Z^l_1} - F_G \|_{\infty}$ between the CDF
of $\mathcal{N}(0, 1)$, $F_G$, and the empirical CDF of $\mathcal{D}^l$, $\hat{F}_{Z^l_1}$, 
built with $s = 10000$ samples $(Z^l_{1;k})_{k \in [1, s]}$.

We have built the plots of Figure~\ref{fig:testing_prop} with the same input data point.%
\footnote{In the PyTorch implementation of the training set of CIFAR-10: data point \#47981 (class = plane).
This data point has been chosen randomly.}
See Appendix~\ref{app:expe:prop_comp} for a comparison of the propagation between different data points.
Also, the propagated data point has been normalized 
according to the whole 
training dataset, that is:
\begin{definition}[Input normalization over the whole dataset]
\label{def:whole_dataset}
	We build the normalized data point $\hat{\mathbf{x}}$:
	\begin{align}
	\hat{x}_{a;ij} := \frac{x_{a;ij} - \mu_i}{\sigma_i} , \quad
	\mu_i := \frac{1}{p_i d} \sum_{\mathbf{x} \in \mathbb{D}} \sum_{j = 1}^{p_i} x_{ij}, \quad
	\sigma_i^2 := \frac{1}{p_i d - 1} \sum_{\mathbf{x} \in \mathbb{D}} \sum_{j = 1}^{p_i} (x_{ij} - \mu_i)^2 ,
	\end{align}
	where $x_{a;ij}$ is the $j$-th component of the $i$-th channel of the input image $\mathbf{x}_a$, $p_i$ is the size 
	of the $i$-th channel, and $d$ is the size of the dataset $\mathbb{D}$.
\end{definition}

\begin{figure}[p!]
	\begin{subfigure}{1.\linewidth}
		\includegraphics[width=1.\linewidth]{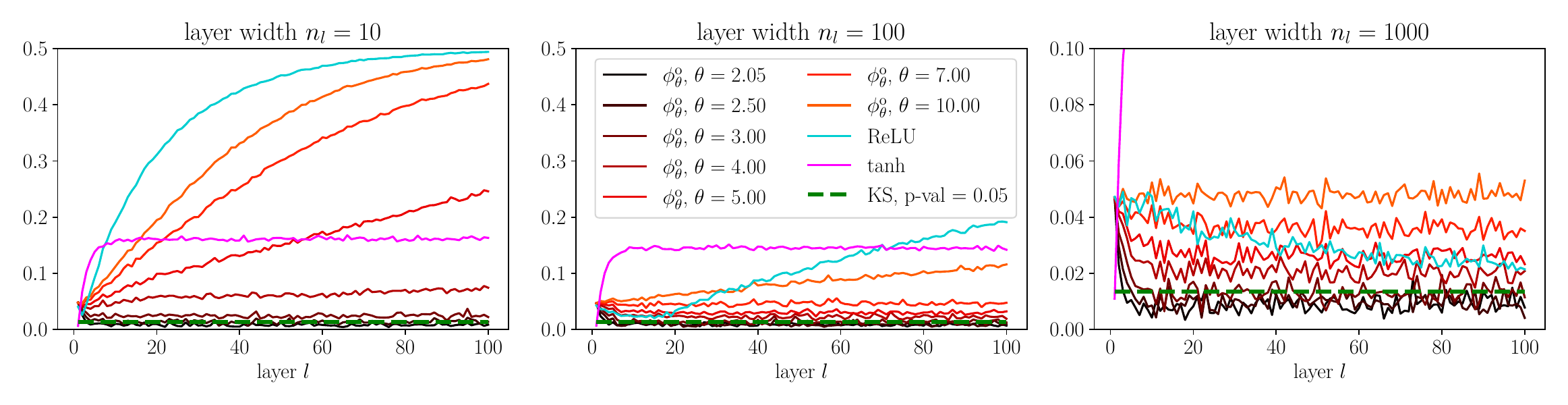}
		\subcaption{Distance $\|\hat{F}_{Z^l_1} - F_G\|_{\infty}$; 
			odd activation function $\phi^{\acto}_{\theta}$;
			inputs normalized over the dataset (Definition 
			\ref{def:whole_dataset}).}
		\label{fig:testing_prop:3}
	\end{subfigure}
	
	\begin{subfigure}{1.\linewidth}
		\includegraphics[width=1.\linewidth]{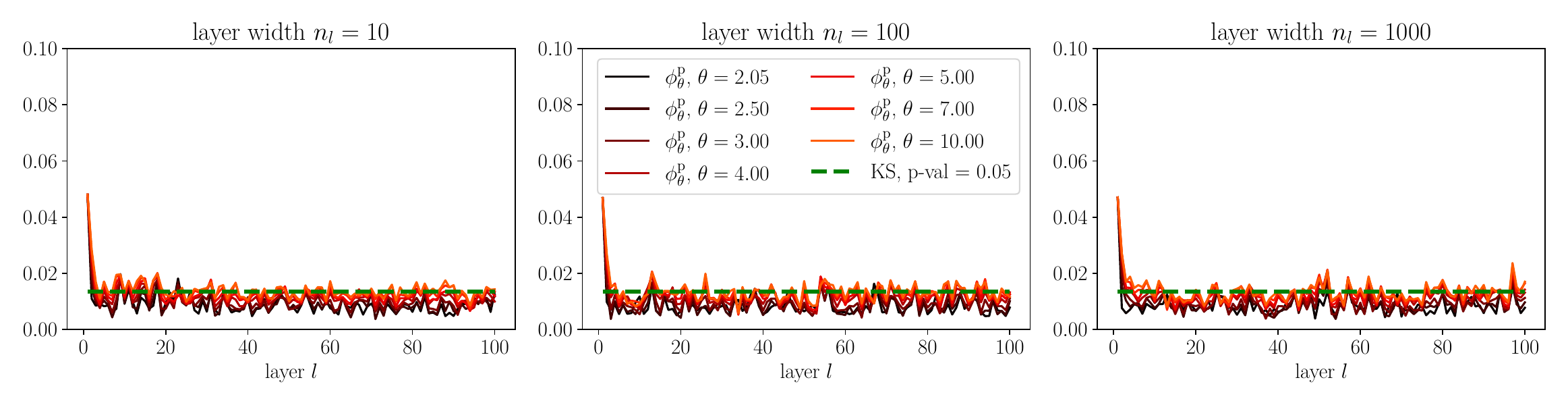}
		\subcaption{Distance $\|\hat{F}_{Z^l_1} - F_G\|_{\infty}$; 
			positive activation function $\phi^{\actp}_{\theta}$;
			inputs normalized over the dataset (Definition 
			\ref{def:whole_dataset}); \textbf{these figures have been zoomed around $0$}.} 
		\label{fig:testing_prop:1}
	\end{subfigure}
	
	\caption{Evolution of the distance between $\mathcal{D}^l$ and the standard Gaussian,
		during propagation where $l$ varies from $1$ to $100$. 
		If the pre-activations are Gaussian, these curves should remain close to zero.
		The dotted green line is the KS threshold: any point above it corresponds to a distribution 
		$\mathcal{D}^l$ for which the Gaussian hypothesis should be rejected with $p$-value $0.05$.
		Weight initialization is $\mathcal{W}(\theta, 1)$ when using $\phi = \phi^{\actop}_{\theta}$ and is Gaussian 
		according 
		to the EOC when using $\phi = \tanh$ or $\mathrm{ReLU}$.} 
	\label{fig:testing_prop}
\end{figure}

\paragraph{Results.}
First, in all cases, $\tanh$ leads to pre-activations that remain far from the standard Gaussian. 
Also, with $10$ and $100$ neurons per layer and $\relu$, the pre-activations tend to diverge from the standard Gaussian.

Second, our proposition of activation functions $\phi = \phi^{\actop}_{\theta}$ leads to various results, 
depending on the layer width $n_l$ and the subfamily to which it belongs, $(\phi^{\acto}_{\theta})_{\theta}$ 
or $(\phi^{\actp}_{\theta})_{\theta}$.
On one hand, the positive activation functions $\phi^{\actp}_{\theta}$ 
lead consistently to standard Gaussian pre-activations for all $\theta \in [2.05, 10]$
(Fig.\ \ref{fig:testing_prop:1}).
On the other hand, the odd activation functions $\phi^{\acto}_{\theta}$ 
lead consistently to Gaussian pre-activations only for small $\theta$
(Fig.\ \ref{fig:testing_prop:3}).

\paragraph{Conclusion.}
For all tested widths, combining a positive activation function $\phi_{\theta}^{\actp}$ 
and weights sampled from $\mathcal{W}(\theta, 1)$ leads to pre-activations that are close
to $\mathcal{N}(0, 1)$, and obviously closer to $\mathcal{N}(0, 1)$ than with $\tanh$ or $\relu$.
However, our proposition of odd activation functions $\phi_{\theta}^{\acto}$
leads to a distribution of pre-activations that drifts away from $\mathcal{N}(0, 1)$
along the propagation, at least for $\theta > 2.5$.
These results are confirmed with various inputs, as shown in Figures
\ref{fig:init:comparison_data_pts} and \ref{fig:init:comparison_data_pts_pos}
in Appendix \ref{app:expe:prop_comp}.

So, in the case $\phi = \phi_{\theta}^{\actp}$, 
$\mathcal{D}^l = \mathcal{N}(0, 1)$ is close to a 
stable fixed point of the recurrence relation $\mathcal{D}^{l + 1} = 
\mathcal{T}_l[\mathrm{P}_l](\mathcal{D}^l)$ with the notations of Section~\ref{sec:prop:commut} (see Fig.~\ref{fig:commut}).
But, in the case $\phi = \phi_{\theta}^{\acto}$, $\mathcal{D}^l = \mathcal{N}(0, 1)$
is not a stable fixed point.
This deviation from the expected result is likely to 
be due to the assumption of independent pre-activations, which we used 
in the derivation of both $\phi^{\acto}$ and $\phi^{\actp}$. This assumption is
discussed in Appendix \ref{app:indep_preact}, where we show 
that such dependence plays a more important role with $\phi^{\acto}$ than with $\phi^{\actp}$.

This search for stable fixed points in the recurrence relation $\mathcal{D}^{l + 1} = 
\mathcal{T}_l[\mathrm{P}_l](\mathcal{D}^l)$ is closely related to the discovery of stable fixed points for the sequence 
of variances $(v_{a}^l)_l$ and the sequence of correlations $(c^l_{ab})_l$ 
\citep{poole2016exponential,schoenholz2016deep}, and may be explored further in future works.%
\footnote{The fixed points of $\mathcal{D}^{l + 1} = \mathcal{T}_l[\mathrm{P}_l](\mathcal{D}^l)$
can also be seen as stationary distributions of the Markov chain $(Z^l_1)_l$, if the layer width $n_l$ and the 
initialization distribution $\mathrm{P}_l$ are constant.}

\subsection{Non-asymptotic Edge of Chaos} \label{sec:expe:eoc}

In Figure~\ref{fig:at_the_eoc}, we show the Edge of Chaos graphs for several activation functions: $\tanh$ and 
$\mathrm{ReLU}$ on one side, and our family $(\phi_{\theta}^{\actop})_{\theta}$ on the other side. We remind that each graph 
corresponds to a family of initialization standard deviations $(\sigma_w, \sigma_b)$ such that the sequence of 
correlations $(c_{ab}^l)_l$ converges to $1$ at a \emph{sub-exponential} rate (see Section~\ref{sec:prop:prop}, Point 
2).
Such choices ensure that the initial correlation between two inputs changes slowly so that the information contained 
in these inputs is lost at the slowest possible rate.

Instead of assuming that the pre-activations are Gaussian as an effect of the ``infinite-width limit'' and the Central 
Limit Theorem, we claim that, with our activation functions $\phi_{\theta}^{\acto}$, for any layer widths (including 
narrow layers and networks with various layer widths), the Edge of Chaos is \emph{non-asymptotic}. Therefore, 
the corresponding curves in Figure~\ref{fig:at_the_eoc} hold for realistic networks.

\begin{figure}[h!]
\begin{center}
    \includegraphics[width=.8\linewidth]{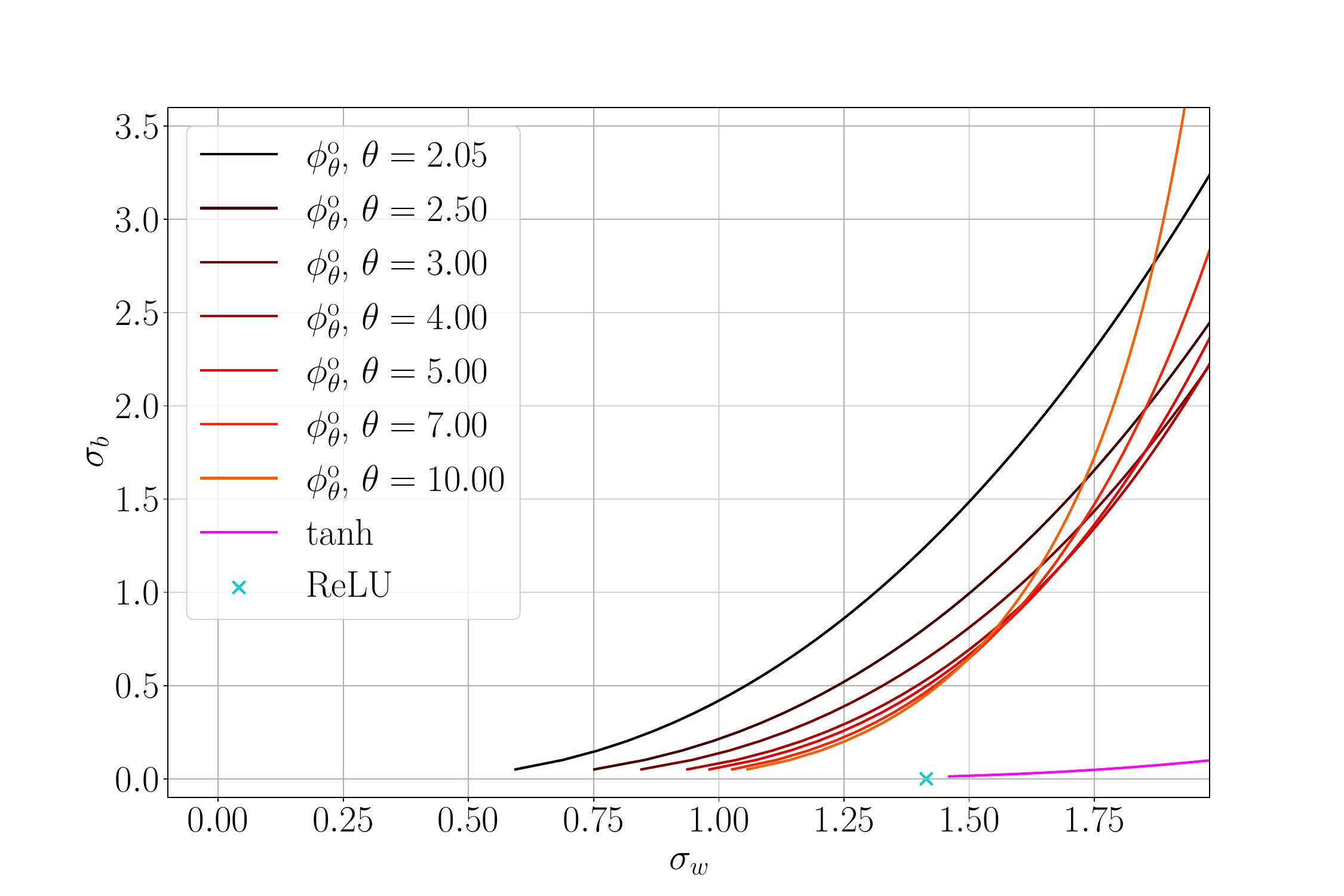}
\end{center}
	\caption{Edge of Chaos for several activation functions. The ordered phase (resp. chaotic) lies above (resp. below) each EOC curve.} 
	\label{fig:at_the_eoc}
\end{figure}

\begin{remark}
	For the $\mathrm{ReLU}$ activation function, the EOC graph reduces to one point. 
	One can refer to \citet[Section 3.1]{hayou2019impact} for a complete study of the EOC of ``$\relu$-like 
	functions'', that is, 
	functions that are linear on $\mathbb{R}^+$ and on $\mathbb{R}^-$ with possibly different factors.
\end{remark}

Also, as in Figure~\ref{fig:corr_main} (see Section~\ref{sec:prop:realistic}), 
we have plotted the propagation of the correlations $(C_{pq}^l)_l$ with $\phi = 
\phi_{\theta}^{\acto}$ and weights sampled from $\mathcal{W}(\theta, 1)$ in Appendix~\ref{app:expe:extra:phi}.

\subsection{Training experiments} \label{sec:expe:training}

Finally, we compare the performance of a trained neural network when using 
$\tanh$ and $\mathrm{ReLU}$, and our activation functions. 
Although the EOC framework (and ours) do not provide any quantitative
prediction about the training trajectories, it is necessary to analyze them in order to enrich the theory.

This section starts with a basic 
check of the training and test performances on a common task: training LeNet on CIFAR-10. Then, we challenge our activation 
function, along with $\tanh$ and $\mathrm{ReLU}$, by training on MNIST a diverse set of multilayer perceptrons, 
some of them being extreme (narrow and deep).

In the following, we train all the neural networks with the same optimizer, Adam, and the same learning rate $\eta = 
0.001$. 
We use a scheduler and an early stopping mechanism, respectively based on the training loss and the validation loss, 
the test loss not being used during training. We did not use data augmentation.
All the technical details are provided in Appendix~\ref{app:expe:training}.

\paragraph{LeNet-type networks.}
We provide here the results only for our odd activation functions $\phi_{\theta}^{\acto}$.
The results with the positive ones $\phi_{\theta}^{\actp}$ are provided in Appendix
\ref{app:expe:additional}. The results are very similar in both cases.

We consider LeNet-type networks \citep{lecun1998gradient}. 
They are made of two $(5 \times 5)$-convolutional layers, each of them followed by a 
$2$-stride average pooling, and then three fully-connected layers.
We denote by ``$6-16-120-84-10$'' a LeNet neural network with two convolutional layers outputting respectively $6$ and 
$16$ channels, and three fully connected layers having respectively $120$, $84$, and $10$ outputs (the final output of 
size $10$ is the output of the network).

We have tested LeNet with several sizes (see Figure~\ref{fig:training:lenet}). In 
Figure~\ref{fig:training:lenet:tr_nll}, we have plotted the training loss. Overall, $\mathrm{ReLU}$ and $\tanh$ perform 
well, along with some $\phi_{\theta}^{\acto}$ with small $\theta$ and $\varphi_{\delta,\omega}$, with $(\delta,\omega)= (0.99, 
2)$. In 
Figure~\ref{fig:training:lenet:ts_acc}, the results in terms of test accuracy are quite different: $\mathrm{ReLU}$ and 
$\tanh$ still achieve good accuracy, but the other functions achieving similar results on the training loss seem to 
be a bit behind.

So, in this standard setup, the functions we are proposing seem to make the neural network trainable and as 
expressive as with other activation functions, but with some overfitting. 
This is not surprising, since we are testing long-standing activation functions, $\mathrm{ReLU}$ and $\tanh$, 
which have been selected both for their ability to make the neural network converge quickly with good 
generalization, against functions we have designed only according to their ability to propagate information. 
Therefore, according to these plots, taking into account generalization may be the missing piece of our study.

\begin{figure}[h!]
	\begin{subfigure}{1.\linewidth}
		\includegraphics[width=1.\linewidth]{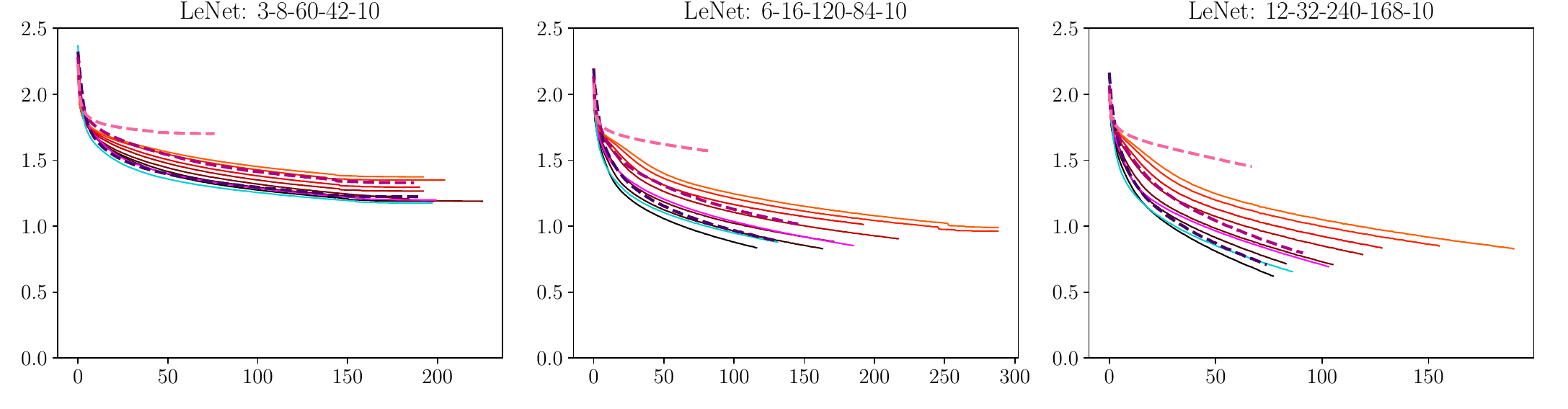}
		\subcaption{Training loss (negative log-likelihood).} 
		\label{fig:training:lenet:tr_nll}
	\end{subfigure}

	~~\\
	
	\begin{subfigure}{1.\linewidth}
		\includegraphics[width=1.\linewidth]{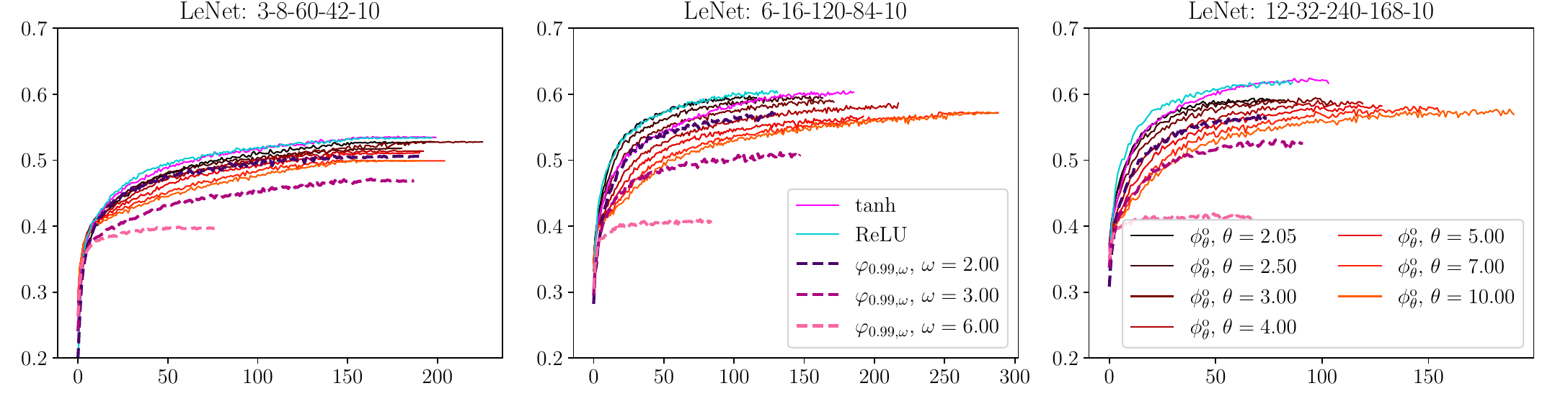}
		\subcaption{Test accuracy.} 
		\label{fig:training:lenet:ts_acc}
	\end{subfigure}
	
	\caption{Training curves for LeNet with 3 different numbers of neurons per layer.} 
	\label{fig:training:lenet}
\end{figure}

\paragraph{Multilayer perceptron.}
We provide here the results only for our odd activation functions $\phi_{\theta}^{\acto}$.
The results with the positive ones $\phi_{\theta}^{\actp}$ are provided in Appendix
\ref{app:expe:additional}. 
The results are very different compared to the previous experiment: training narrow and deep neural networks is much more difficult with the
positive activation functions than with the odd ones.

We have trained a family of multilayer perceptrons on MNIST. They have a constant width $n_l \in \{3, 10\}$ and a depth 
$L \in \{3, 10, 30\}$. So, extreme cases such as a narrow and deep
neural network ($n_l = 3$, $L = 30$) have been tested. 

A series of results are presented in Figure~\ref{fig:training:perc}. 
The training curves have been averaged over 5 experiments.
In terms of training, the strength of our activation functions $\phi_{\theta}$ is more visible in the case of narrow 
neural networks ($n_l = 3$): in general, the loss decreases faster and attains better optima with $\phi = \phi_{\theta}$ 
than with $\phi = \tanh$ or $\relu$. 

We also notice that training a narrow and deep neural network with a $\relu$ activation function is challenging.
This result is consistent with several observations we have made in Section~\ref{sec:prop:realistic}: 
in narrow $\relu$ networks, the sequence of correlations $(C_{pq}^l)_l$ fails to converge to $1$ 
(Fig.~\ref{fig:corr_main}), and the pre-activations are far from being Gaussian (Fig.~\ref{fig:prop:relu_tanh}).

Finally, the case of $\phi_{2.05}$ put aside, the results regarding the activation functions $\phi_{\theta}$ are consistent between 
the two runs. This is not the case with $\tanh$ and $\relu$.

\begin{figure}[h!]
    \includegraphics[width=1.\linewidth]{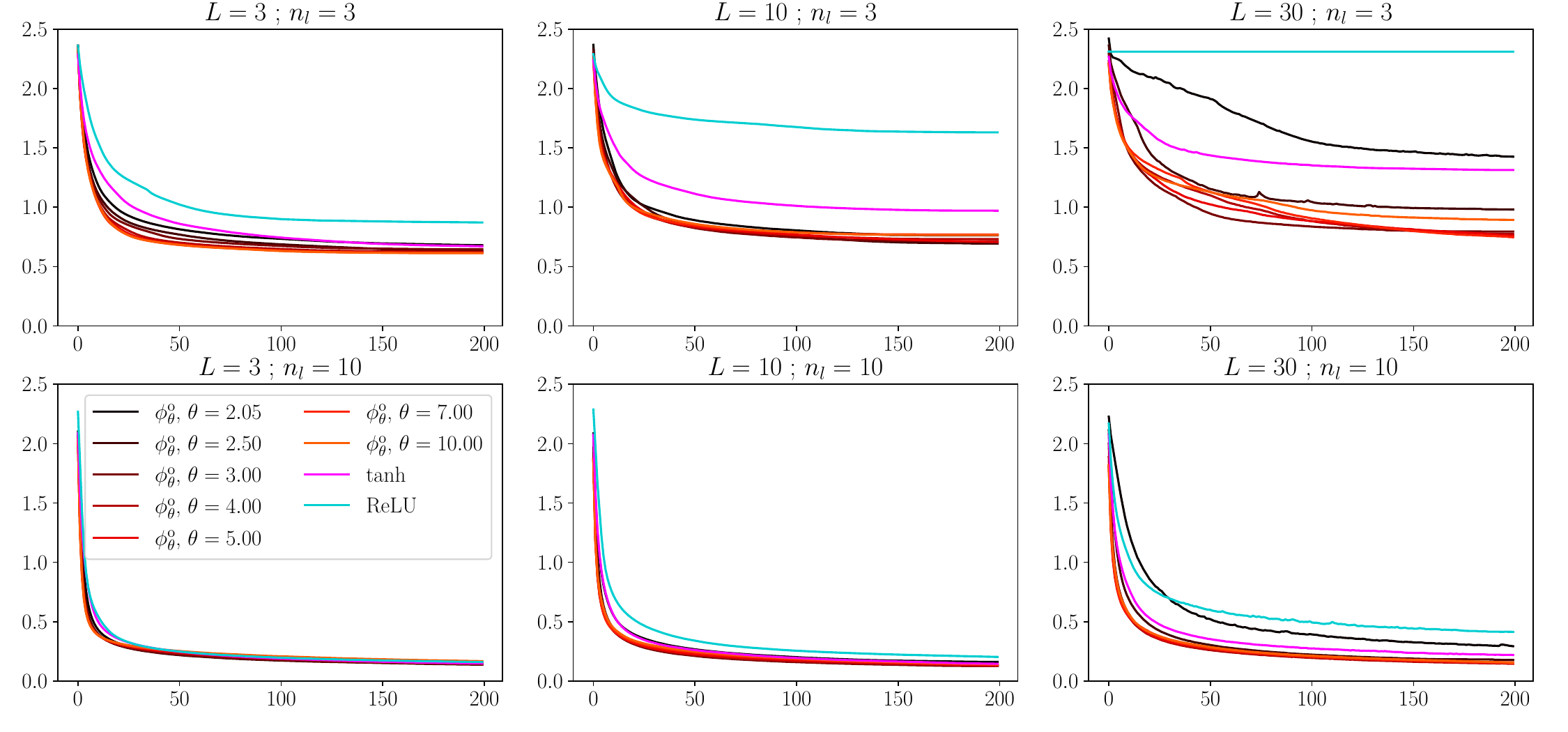}
	\caption{Training loss for a multilayer perceptron, narrow ($n_l \in \{3, 10\}$) and of various depths ($L \in \{3, 10, 30\}$).
    Training curves averaged over 5 experiments.} 
	\label{fig:training:perc}
\end{figure}

\section{Discussion} \label{sec:discussion}

\paragraph{Generality of Constraints~\ref{constr_gaussian} to~\ref{constr_3}.} 
To ensure that pre-activations follow a Gaussian $\mathcal{N}(0, 1)$ distribution and that
the weights in each layer are i.i.d. at initialization, the four constraints outlined in
Section~\ref{sec:propagating} must be satisfied. Relaxing these constraints requires either breaking
symmetries or addressing more complex problems.
Let us consider two examples. 

Example 1: instead of making all the pre-activations Gaussian, one may want to 
impose some other distribution. In this case, the problem to solve would be more difficult: we would have to decompose 
a non-Gaussian random variable $Z$ into a weighted sum $\frac{1}{\sqrt{n}} \sum_{j = 1}^{n} W_j \phi(X_j)$. In this 
case, we cannot use Proposition~\ref{prop:gaussian_sum}, and we would have to make $W_j \phi(X_j)$ belong to a family 
of random variables stable by multiplication by a constant and by sum, for arbitrary $n$. If we aim for a non-Gaussian 
$Z$, this task is much harder. For a study of infinitely wide neural networks going beyond Gaussian pre-activations, 
see \cite{peluchetti2020stable}.

Example 2: instead of assuming that all the weights of a given layer are i.i.d., one may want to initialize them with 
different distributions or to introduce a dependence structure between them. If the goal remains to obtain Gaussian 
pre-activations, this kind of generalization should be feasible without drastically changing the constraints we are 
proposing.

\paragraph{Hypothesis of independent pre-activations.} 
In the constraints, we have assumed that the inputs of any layer are independent. 
This assumption is discussed in Remark~\ref{rem:indep_gaussian} and in Appendix~\ref{app:indep_preact}.
According to the experimental results presented in Appendix~\ref{app:indep_preact}, 
we can build specific cases where the dependence between inputs breaks the Gaussianity of the outputted 
pre-activations, including with our pairs $(\phi_{\theta}^{\actop}, \mathrm{P}_{\theta})$.

However, the results are different with $\phi_{\theta}^{\acto}$ and $\phi_{\theta}^{\actp}$.
The difference between Figures \ref{fig:testing_prop:3} and \ref{fig:testing_prop:1}
is striking: with $\phi_{\theta}^{\acto}$, the dependence between pre-activations 
tends to damage the Gaussianity after a certain number of layers,
but with $\phi_{\theta}^{\actp}$, the Gaussianity of
the pre-activations is remarkably well-preserved during propagation.
The same kind of result is presented in Appendix \ref{app:indep_preact},
Figure \ref{fig:indep_preactiv}.

\paragraph{Other families of initialization distributions and activation functions.} 
Given the constraints we have derived, we have made a choice to obtain our family of initialization distributions 
and activation functions: we have decided that the weights should be sampled from a symmetric Weibull distribution 
$\mathcal{W}(\theta, 1)$.
We have made this choice because Weibull distributions meet immediately Constraint~\ref{constr_2}, 
and we can modulate their Generalized Weibull Tail parameter easily (see Remark~\ref{rem:weibull_gwt}).

One may propose another family of initialization distributions, as long as it meets the constraints. However, such a 
family should be selected wisely: if one chooses a distribution with compact support, or $\mathrm{GWT}(\theta)$ with 
$\theta \gg 2$, then the related activation function $\phi$ is likely to be almost linear. 
Intuitively, if we want $W \phi(X)$
to be $\mathcal{N}(0, 1)$ with $X \sim \mathcal{N}(0, 1)$ and very light-tailed $W$, 
then $\phi$ must ``reproduce'' the tail of its input $X$, so $\phi$ must be approximately linear around infinity.

\paragraph{Preserving a characteristic during propagation or imposing stable fixed points?}
Previous works have used both ideas: \cite{glorot2010understanding} aim at preserving the variance of the 
pre-activations, while \cite{poole2016exponential,schoenholz2016deep} aim at imposing a specific stable fixed point
for the correlation map $\mathcal{C}$. 
According to the results we have obtained in Section~\ref{sec:expe:ks}, it is possible to \emph{approximately} preserve 
the distribution of the pre-activations when passing through \emph{one} layer. However, according 
to the results of Section~\ref{sec:expe:multilayer}, a drift can appear after several layers. 
Therefore, when testing an initialization setup in the real world, with numerical errors and approximations, it is
necessary to check the stable fixed points of the monitored characteristic. However, this is not easy to do in 
practice: without the Gaussian hypothesis, it can be difficult to find the possible limits of $(c_{ab}^l)_l$.

\paragraph{Gaussian pre-activations and Neural Tangent Kernels.} 
With our pair of activation functions and 
initialization procedure, we have been able to obtain non-asymptotic Edge of Chaos, removing the infinite-width 
assumption.
Since this infinite-width assumption is also fundamental in works on the Neural Tangent Kernels (NTKs)  \citep{jacot2018neural}, would it be 
possible to obtain the same kind of results within the NTK setup? 
We do not think so. On the one hand, the infinite-width limit is used to end up with an NTK that is \emph{constant} 
during training, in order to provide exact equations of evolution of the trained neural network.
On the other hand, in our setup, we only ensure Gaussian pre-activations, and this does not imply that the NTK 
would be constant during training.
Nevertheless, with Gaussian pre-activations, we could expect an improvement in the convergence rate of the neural 
network towards a stacked Gaussian process, as the widths of the layers tend to infinity. 
Thus, the NTK regime would be easier to achieve.

\paragraph{A broadly useful technique.}
Our proposed setup can be readily adopted by practitioners interested in testing the Gaussian hypothesis 
on problems beyond those considered in this work. The code used here is available on GitHub: 
\url{https://github.com/p-wol/gaussian-preact}. One could use this code to train, evaluate, 
and compare neural networks built using our family of activation functions and initialization distributions 
against standard choices (such as tanh or ReLU with Gaussian EOC initialization), 
to assess the impact of enforcing Gaussian pre-activations.

\paragraph{Taking into account the generalization performance.}
In the EOC framework and ours, a common principle could be discussed and improved. Generalization 
performance is not taken into account at any step of the reasoning. As we have seen in the training experiments, 
the main difficulty in our setup is overfitting: as the training loss decreases without obstacles, 
the test loss ends up being worse than with $\relu$ and $\tanh$ activation functions.
To improve these generalization results, one can introduce a separation between the training set and a 
validation set into the framework.

\paragraph{Precise characterization of the pre-activations distributions $\mathcal{D}^l$.} 
Finally, finding an accurate and useful characterization of the  of the pre-activations distributions $\mathcal{D}^l$ remains an unsolved problem.
In the EOC line of work, the problem has been simplified by using the Gaussian 
hypothesis. However, as shown in the present work, such a simplification is too coarse and leads to inconsistent results.
Nevertheless, our approach has its own drawbacks. Namely, we can ensure Gaussian pre-activations only when using 
specific initialization distributions and activation functions.
Thus, we still lack a characterization of the distributions $\mathcal{D}^l$ that is applicable to widely-used 
networks without rough approximations, and that can be easily used to achieve practical goals, such as finding an optimal 
initialization scheme. From this perspective, we expect that the problem statement of
Section~\ref{sec:prop:commut} will lead to fruitful future research.

\paragraph{Is it desirable to have Gaussian pre-activations?}
We have presented several results that should help to answer this tough question, 
but it remains difficult to answer it definitively:
\begin{itemize}
	\item there is a paradox regarding the $\relu$ activation. 
	On the one hand, when $\relu$ is used in narrow and deep perceptrons, 
	the pre-activations are far from being Gaussian, the sequence of correlations does not converge to $1$, and training is 
	difficult and unstable. 
	On the other hand, in LeNet-type networks with $\relu$, training is easy, and the resulting networks generalize well;
	\item our activation functions $(\phi_{\theta}^{\acto})_{\theta}$ perform quite differently depending on the setup:
	with $\phi_{2.05}$, LeNet can achieve good training losses, but the training of narrow and deep perceptrons may fail.
	With $\phi_{10.00}$, we observe opposite results;
	\item when training narrow networks, the training curves are very different with $\phi_{\theta}^{\acto}$
	and $\phi_{\theta}^{\actp}$. They are difficult to train when using the functions $\phi_{\theta}^{\actp}$,
	as well as $\relu$ (whose pre-activations are far from being Gaussian). But training
	is feasible and leads often to relatively good results when using $\phi_{\theta}^{\acto}$.
\end{itemize}
Therefore, a temporary answer we can give is: with Gaussian pre-activations at initialization 
and odd activations functions, a neural network is likely to be trainable.

\paragraph{Is the ``Gaussian pre-activations hypothesis'' a myth or a reality?}
Our findings reveal that, when using the $\relu$ activation function in several practical scenarios, 
it is largely a myth. In contrast, with the $\tanh$ activation function, 
the hypothesis holds true depending on the network's width: 
it becomes a reality with a sufficiently large yet reasonable number of neurons per layer. 
To extend this reality to networks with any layer width, 
we established a set of constraints that neural network designs must satisfy and proposed solutions 
to meet these constraints. Consequently, we developed a family of activation functions 
$(\phi_{\theta}^{\actp})_{\theta}$ and initialization distributions $\mathrm{P}_{\theta}$ 
that fulfill these requirements, providing robust foundations for the Gaussian hypothesis 
and validating it across all tested cases.

\section*{Acknowledgements}

The project leading to this work has received funding from the European Research Council
(ERC) under the European Union's Horizon 2020 research and innovation program (grant
agreement No 834175) and from the French National Research Agency (ANR-21-JSTM-0001) in the framework of the 
``Investissements d'avenir'' program (ANR-15-IDEX-02).
This work was granted access to the HPC resources of IDRIS under the allocation
2024-AD011013762R2 made by GENCI.
We thank Thomas Dupic for proposing the trick involving the Laplace transform used in 
Appendix~\ref{app:proof_prop_counter}.
We would like to thank an anonymous reviewer for pointing out an error in a previous version of the paper,
and proposing an example that inspired Example~\ref{exa:indep_preactiv} (see App.~\ref{app:indep_preact}).

\bibliographystyle{tmlr}
\bibliography{ImposeGaussianPreactivations_TMLR}

\newpage

\appendix

\section{Activation function with infinite number of stable fixed points for $\mathcal{V}$} 

\subsection{Proof that $\mathcal{V}$ admits an infinite number of fixed points when using $\phi = \varphi_{\delta, 
\omega}$} \label{app:proof_prop_counter}

\begin{customprop}{\ref{prop:counter}}
	For any $\delta \in (0, 1]$ and $\omega > 0$, let us pose the activation function $\phi = \varphi_{\delta, 
		\omega}$. 
	We consider the sequence $(v^l)_l$ defined by:
	\begin{align}
	\forall l \geq 0, \quad v^{l + 1} &= \sigma_w^2 \int \varphi_{\delta, \omega}\left(\sqrt{v^l}z\right)^2 \, 
	\mathcal{D}z + \sigma_b^2 , \label{eqn:prop:counter_dem} \\
	v^0 &\in \mathbb{R}^+_* . \nonumber
	\end{align}
	Then there exists $\sigma_w > 0$, $\sigma_b \geq 0$, and a strictly increasing sequence of stable fixed 
	points $(v_k^*)_{k \in \mathbb{Z}}$ of the recurrence Equation~\eqref{eqn:prop:counter_dem}.
\end{customprop}

\begin{proof}
Let us define:
\begin{align}
\tilde{\mathcal{V}}(v) &:= \frac{1}{v}\mathcal{V}(v | \sigma_w = 1, \sigma_b = 0) , \nonumber \\
\text{so we have:} \qquad \mathcal{V}(v | \sigma_w, \sigma_b) &= \sigma_w^2 v \tilde{\mathcal{V}}(v) + 
\sigma_b^2 . \label{eqn:v_v_tilde}
\end{align}
In the following, we use a simplified notation: $\mathcal{V}(v) = \mathcal{V}(v | \sigma_w, \sigma_b)$.

Our goal is to find a sequence $(v_k^*)_{k \in \mathbb{Z}}$, $\sigma_w$ and $\sigma_b$ such that:
\begin{align*}
\forall k \in \mathbb{Z}, \quad \mathcal{V}(v_k^*) = v_k^* \quad \text{ and } 
\quad \mathcal{V}'(v_k^*) \in (-1, 1) ,
\end{align*}
which would ensure that all $v_k^*$ are stable fixed points of $\mathcal{V}$. 
In order to understand how to build the sequence $(v_k^*)_{k \in \mathbb{Z}}$, let us consider $v > 0$. 
Let $\sigma_b^2 = 0$ and $\sigma_w^2 = 1 / \tilde{\mathcal{V}}(v)$. So we have:
\begin{align*}
\mathcal{V}(v) = v .
\end{align*}
So, any $v > 0$ can possibly be a fixed point if we tune $\sigma_w$ accordingly. We just have to find a 
$v > 0$ such that: $\mathcal{V}'(v) \in (-1, 1)$, with:
\begin{align}
\mathcal{V}'(v) &= \frac{1}{\tilde{\mathcal{V}}(v)} \left( \tilde{\mathcal{V}}(v) + 
v\tilde{\mathcal{V}}'(v) \right)  
= 1 + \tilde{\mathcal{V}}_{\mathrm{e}}'(\ln(v)) , \label{eqn:v_prime_v_e}
\end{align}
where $\tilde{\mathcal{V}}_{\mathrm{e}} : r \mapsto \ln(\tilde{\mathcal{V}}(\exp(r)))$. 
So, knowing that $\tilde{\mathcal{V}}_{\mathrm{e}}$ is $\mathcal{C}^1$ and periodic, it is sufficient 
to prove that it is not constant to ensure that
we can extract one $v^*_0$ such that $\mathcal{V}'(v^*_0) \in (-1, 1)$. 
Then, by periodicity of $\tilde{\mathcal{V}}_{\mathrm{e}}$, we can build a sequence of stable fixed points 
$(v_k^*)_{k \in \mathbb{Z}}$.

We have: 
\begin{align*}
\tilde{\mathcal{V}}(v) &= \frac{1}{v} \int_{-\infty}^{\infty} \varphi_{\delta, \omega}(\sqrt{v}z)^2 \, 
\mathcal{D}z 
= \int_{-\infty}^{\infty} z^2 \exp\left( 2 \frac{\delta}{\omega} \sin( \omega \ln|\sqrt{v}z|) \right) \, 
\mathcal{D}z \\
&= 2 \int_0^{\infty} z^2 \exp\left( 2 \frac{\delta}{\omega} \sin( \omega \ln(\sqrt{v}z)) \right) \, 
\mathcal{D}z 
= 2 \int_0^{\infty} z^2 \exp\left( 2 \frac{\delta}{\omega} \sin\left( \frac{\omega}{2} \ln v + \omega \ln 
z\right) \right) \, \mathcal{D}z .
\end{align*}

\begin{lemma} \label{lem:non_constant}
	$\tilde{\mathcal{V}}$ is not constant.
\end{lemma}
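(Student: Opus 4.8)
The plan is to establish the stronger and cleaner statement that $\tilde{\mathcal{V}}$ itself is non-constant as a function of $v>0$; since the exponent $\tfrac{2\delta}{\omega}\sin(\cdot)$ always lies in $[-\tfrac{2\delta}{\omega},\tfrac{2\delta}{\omega}]$ and $\int z^2\,\mathcal{D}z=1$, we have $\tilde{\mathcal{V}}(v)\in[e^{-2\delta/\omega},e^{2\delta/\omega}]\subset(0,\infty)$, so $\tilde{\mathcal{V}}$ non-constant immediately gives that $\tilde{\mathcal{V}}_{\mathrm e}(r)=\ln\tilde{\mathcal{V}}(\exp(r))$ is non-constant as well (passing to $\ln$ is harmless). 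The method is to read off the Fourier coefficients of the $\tfrac{4\pi}{\omega}$-periodic map $r\mapsto\tilde{\mathcal{V}}(e^r)$ and exhibit one that does not vanish.

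First I would set $g(s):=\exp\!\big(\tfrac{2\delta}{\omega}\sin s\big)$, a $2\pi$-periodic, real-analytic function (the exponential of a trigonometric polynomial), so that with $r=\ln v$ the expression already derived in the excerpt reads $\tilde{\mathcal{V}}(e^r)=\tfrac{2}{\sqrt{2\pi}}\int_0^\infty z^2 e^{-z^2/2}\,g\!\big(\tfrac{\omega}{2}r+\omega\ln z\big)\,\mathrm{d}z$. Because $g$ is real-analytic, its Fourier coefficients $\hat g_n:=\tfrac1{2\pi}\int_0^{2\pi}g(s)e^{-ins}\,\mathrm{d}s$ decay faster than any polynomial, hence $\sum_{n}|\hat g_n|<\infty$; inserting $g(s)=\sum_{n\in\mathbb{Z}}\hat g_n e^{ins}$ and interchanging sum and integral — legitimate by Fubini, since $|z^{in\omega}|=1$ on $(0,\infty)$ and $\int_0^\infty z^2 e^{-z^2/2}\,\mathrm{d}z<\infty$ — yields $\tilde{\mathcal{V}}(e^r)=\tfrac{2}{\sqrt{2\pi}}\sum_{n\in\mathbb{Z}}\hat g_n\,e^{in\omega r/2}\int_0^\infty z^{\,2+in\omega}e^{-z^2/2}\,\mathrm{d}z$, which is exactly the Fourier expansion of $r\mapsto\tilde{\mathcal{V}}(e^r)$.

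The Laplace/Mellin-transform step then evaluates the remaining integral: the substitution $w=z^2/2$ turns it into a Gamma integral, giving $\int_0^\infty z^{\,2+in\omega}e^{-z^2/2}\,\mathrm{d}z=\sqrt{2}\,\,2^{in\omega/2}\,\Gamma\!\big(\tfrac32+\tfrac{in\omega}{2}\big)$, and this is nonzero for every $n\in\mathbb{Z}$ because the Gamma function has no zeros and $\tfrac32+\tfrac{in\omega}{2}$ is never a non-positive integer. Hence the $n$-th Fourier coefficient of $r\mapsto\tilde{\mathcal{V}}(e^r)$ is a nonzero multiple of $\hat g_n$. To conclude it suffices to note that $\hat g_n\neq0$ for at least one $n\neq0$: otherwise $g$, being continuous, would be constant, contradicting $g(\pi/2)=e^{2\delta/\omega}\neq e^{-2\delta/\omega}=g(-\pi/2)$ (recall $\delta>0$). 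Therefore $r\mapsto\tilde{\mathcal{V}}(e^r)$ has a nonzero oscillating Fourier mode, so it is non-constant, and so is $\tilde{\mathcal{V}}$.

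I expect no genuine obstacle: every step is routine once the Fourier/Gamma viewpoint is adopted. The only points worth writing carefully are the bookkeeping for the interchange of summation and integration, and the verification that $\tilde{\mathcal{V}}$ is finite and bounded below by a positive constant (so that the reduction to $\tilde{\mathcal{V}}$ in place of $\tilde{\mathcal{V}}_{\mathrm e}$ is valid). The conceptual content is simply that averaging over the Gaussian variable $z$ multiplies the $n$-th harmonic of $g$ by the nonvanishing factor $\sqrt{2}\,2^{in\omega/2}\Gamma(\tfrac32+\tfrac{in\omega}{2})$, and therefore cannot annihilate the oscillations of $g$.
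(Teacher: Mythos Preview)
Your argument is correct and complete. The interchange of sum and integral is justified exactly as you say, the Gamma evaluation is right, and the conclusion that some oscillating Fourier mode survives is airtight.

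The paper takes a different route: it rewrites $\tilde{\mathcal{V}}(v)$ as $\frac{1}{\sqrt{2\pi}}\,v^{-3/2}\,\mathcal{L}\big[\varphi_{\delta,\omega}(\sqrt{z})\big]\!\big(\tfrac{1}{2v}\big)$, assumes $\tilde{\mathcal{V}}\equiv c$ for a contradiction, and then invokes injectivity of the Laplace transform to force $\varphi_{\delta,\omega}(\sqrt{z})=c\sqrt{z}$ almost everywhere, which is impossible for $\delta\in(0,1]$. So the paper's proof is a short contradiction argument resting on a cited uniqueness theorem, whereas yours is a direct constructive computation of the Fourier coefficients of $r\mapsto\tilde{\mathcal{V}}(e^r)$ as nonvanishing Gamma factors times the $\hat g_n$. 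Your approach is a bit longer but entirely self-contained (no external Laplace-inversion theorem needed) and yields strictly more: the full Fourier expansion of $\tilde{\mathcal{V}}(e^r)$, not merely its non-constancy. One cosmetic remark: the lemma already asserts that $\tilde{\mathcal{V}}$ is non-constant, so the opening sentence about proving a ``stronger'' statement is unnecessary.
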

\begin{proof}
	\begin{align*}
	\tilde{\mathcal{V}}(v) &= \frac{1}{\sqrt{2 \pi}} \frac{2}{v^{3/2}} \int_0^{\infty} z^2 \exp\left( 2 
	\frac{\delta}{\omega} \sin( \omega \ln z) \right) \exp\left( - \frac{z^2}{2 v} \right) \, \mathrm{d}z \\
	&= \frac{1}{\sqrt{2 \pi}} \frac{2}{v^{3/2}} \int_0^{\infty} \frac{z}{2 \sqrt{z}} \exp\left( 2 
	\frac{\delta}{\omega} \sin( \omega \ln \sqrt{z}) \right) \exp\left( - \frac{z}{2 v} \right) \, 
	\mathrm{d}z \\
	&= \frac{1}{\sqrt{2 \pi}} v^{-3/2} \mathcal{L} \left[\varphi_{\delta, 
		\omega}(\sqrt{z})\right]\left(\frac{1}{2 v}\right) ,
	\end{align*}
	where $\mathcal{L}$ is the Laplace transform.
	
	Let us suppose that $\tilde{\mathcal{V}}$ is constant: $\forall v > 0, \tilde{\mathcal{V}}(v) = c$. So, if 
	we pose $v \leftarrow \frac{1}{2 v}$, we have:
	\begin{align*}
	\forall v > 0, \quad c = \frac{2}{\sqrt{\pi}} v^{3/2} \mathcal{L} \left[\varphi_{\delta, 
		\omega}(\sqrt{z})\right](v),
	\end{align*}
	that is:
	\begin{align*}
	\mathcal{L} \left[\varphi_{\delta, \omega}(\sqrt{z})\right](v) = \frac{c \sqrt{\pi}}{2} v^{-3/2} .
	\end{align*}
	
	Since the function $z \mapsto \varphi_{\delta, \omega}(\sqrt{z})$ is continuous on $\mathbb{R}^+$, then,
	almost everywhere \citep[see Thm. 22.2,][]{billingsley2008probability}:
	\begin{align*}
	\varphi_{\delta, \omega}(\sqrt{z}) &= \frac{c \sqrt{\pi}}{2} \mathcal{L}^{-1}[v^{-3/2}](z) = c \sqrt{z},
	\end{align*}
	which is impossible for $\delta \in (0, 1]$. Hence the result.
\end{proof}

The function $\tilde{\mathcal{V}}_{\mathrm{e}} : r \mapsto \ln(\tilde{\mathcal{V}}(\exp(r)))$ is 
continuous and $\frac{4 \pi}{\omega}$-periodic:
\begin{align*}
\tilde{\mathcal{V}}_{\mathrm{e}}(r) = \ln\left[2 \int_0^{\infty} z^2 \exp\left( 2 
\frac{\delta}{\omega} 
\sin\left( \frac{\omega}{2} r + \omega \ln z\right) \right) \, \mathcal{D}z \right] ,
\end{align*}
so $\tilde{\mathcal{V}}_{\mathrm{e}}$ is lower and upper bounded and reach 
its bounds (and, by Lemma~\ref{lem:non_constant}, these bounds are different). We define:
\begin{align*}
\tilde{\mathcal{V}}_{\mathrm{e}}^+ &= \max \tilde{\mathcal{V}}_{\mathrm{e}} &
r_0^+ &= \inf \left\{ r > 0 : \tilde{\mathcal{V}}_{\mathrm{e}}(r)
= \tilde{\mathcal{V}}_{\mathrm{e}}^+ \right\}, \\
\tilde{\mathcal{V}}_{\mathrm{e}}^- &= \min \tilde{\mathcal{V}}_{\mathrm{e}} &
r_0^- &= \inf \left\{ r > r_0^+ : \tilde{\mathcal{V}}_{\mathrm{e}}(r)
= \tilde{\mathcal{V}}_{\mathrm{e}}^- \right\} .
\end{align*}
By continuity, $\tilde{\mathcal{V}}_{\mathrm{e}}(r_0^+) = \tilde{\mathcal{V}}_{\mathrm{e}}^+$ and $ 
\tilde{\mathcal{V}}_{\mathrm{e}}(r_0^-) = \tilde{\mathcal{V}}_{\mathrm{e}}^-$.
Since $\tilde{\mathcal{V}}^+ > \tilde{\mathcal{V}}^-$ and $\tilde{\mathcal{V}}_{\mathrm{e}}$ is 
$\mathcal{C}^1$, then there exists $r_0^* \in (r_0^+, r_0^-)$ such that 
$\tilde{\mathcal{V}}_{\mathrm{e}}'(r_0^*) \in (-2, 0)$. 

Since $\tilde{\mathcal{V}}_{\mathrm{e}}$ is $\frac{4 \pi}{\omega}$-periodic, we can define a sequence 
$(r_k^*)_{k \in \mathbb{Z}}$ such that:
\begin{align*}
r_k^* &:= r_0^* + \frac{4 k \pi}{\omega} \\
\tilde{\mathcal{V}}_{\mathrm{e}}(r_k^*) &= \tilde{\mathcal{V}}_{\mathrm{e}}(r_0^*) =: 
\tilde{\mathcal{V}}_{\mathrm{e}}^0 \\
\tilde{\mathcal{V}}_{\mathrm{e}}'(r_k^*) &= \tilde{\mathcal{V}}_{\mathrm{e}}'(r_0^*) \in (-2, 0) .
\end{align*}

So, by using Eqn.~\eqref{eqn:v_v_tilde} and Eqn.~\eqref{eqn:v_prime_v_e} with $\sigma_b^2 = 0$ and 
$\sigma_w^2 = 1/\exp(\tilde{\mathcal{V}}_{\mathrm{e}}^0)$:
\begin{align*}
v_k^* &:= \exp(r_k), \\
\mathcal{V}(v_k^*) &= \frac{1}{\exp(\tilde{\mathcal{V}}_{\mathrm{e}}^0)} v_k^* 
\tilde{\mathcal{V}}(v_k^*) = v_k^* \\
\mathcal{V}'(v_k^*) &= 1 + \tilde{\mathcal{V}}_{\mathrm{e}}'(r_0^*) \in (-1, 1) .
\end{align*}
Thus, $(v_k^*)_{k \in \mathbb{Z}}$ is a sequence of stable fixed points of $\mathcal{V}(\cdot | \sigma_w, 
\sigma_b )$ for well-chosen $\sigma_w$ and $\sigma_b$.
\end{proof}

\subsection{Practical computation of $\sigma_w^2$} \label{app:sigma_omega}

We propose a practical method to ensure that $\mathcal{V}(\cdot | \sigma_w, \sigma_b = 0)$ admits an infinite number of 
stable fixed points when using activation function $\phi = \varphi_{\delta, \omega}$.

In order to achieve this goal, we build $\sigma_w^2 = \sigma_{\omega}^2$ in the following way:
\begin{align*}
	\mathcal{V}_{\mathrm{low}} &:= 2 \int_{0}^{\infty} z^2 \exp\left(-2\frac{\delta}{\omega} \sin(\omega \ln 
	(z)) \right) \, \mathcal{D}z \\
	\mathcal{V}_{\mathrm{upp}} &:= 2 \int_{0}^{\infty} z^2 \exp\left(2 \frac{\delta}{\omega} \sin(\omega \ln 
	(z)) \right) \, \mathcal{D}z , \\
	\sigma_{\omega}^2 &:= \left[\frac{\mathcal{V}_{\mathrm{low}} + \mathcal{V}_{\mathrm{upp}}}{2}\right]^{-1} .
\end{align*}

In practice, for $\delta = 0.99$, we obtain $\sigma_{\omega}$ for $\omega \in \{2, 3, 6\}$:
\begin{align*}
	\sigma_{2} \approx 0.879, \quad \sigma_{3} \approx 0.945, \quad \sigma_{6} \approx 0.987 .
\end{align*}

\section{Discussion about the independence of the pre-activations} \label{app:indep_preact}

In Proposition~\ref{prop:gaussian_sum} and Constraint~\ref{constr_gaussian}, we assume that, for any layer $l$, 
its inputs $(Z^{l}_j)_j$ are independent.
This does not hold in full generality:
\begin{example} \label{exa:indep_preactiv}
	Let $X \sim \mathcal{N}(0, 1)$ be some random input of a two-layer neural network. We perform the following 
	operation:
	\begin{align*}
		Z = \frac{1}{\sqrt{2}} \left[W^2_1 \phi(W_1^1 X) + W^2_2 \phi(W_2^1 X)\right] ,
	\end{align*}
	where $(W_1^1, W_2^1, W_1^2, W_2^2)$ be i.i.d.\ random variables samples from some distribution $\mathrm{P}$, 
	and $\phi$ is some activation function. 
	
	Let $\mathrm{P} = \mathcal{R}$, the Rademacher distribution, i.e., if $W \sim \mathcal{R}$, then $W = \pm 1$ with 
	probability
	$1/2$. Let $\phi = \mathrm{Id}$. Then we have: $Y_1 := W_1^1 X \sim \mathcal{N}(0, 1)$ and 
	$Y_2 := W_2^1 X \sim \mathcal{N}(0, 1)$. But they are not independent: 
	\begin{align*}
		W^2_1 W_1^1 X + W^2_2 W_2^1 X &= (W_1' + W_2') X ,
	\end{align*}
	where $W^2_1 W_1^1$ and $W^2_2 W_2^1$ are two independent Rademacher random variables. So, 
	$Z = 0$ with probability $1/2$. So, $Z$ is not Gaussian.
\end{example}

In this example, we build a non-Gaussian random variable with a minimal neural network, in which we construct
two dependent random variables. So, we should pay attention to this phenomenon when propagating the pre-activations
in a neural network.

One should note that the structure of dependence of $W^2_1 W_1^1 X$ and $W^2_2 W_2^1 X$ does not involve 
their correlation (which is zero), and yet breaks the Gaussianity of their sum. So, in order to obtain 
a theoretical result about the distribution of $Z$, we should study finer aspects of the dependence 
structure.

But, on the practical side, we want to answer the question: to which extent does the relation of dependence between
the pre-activations $(Z^{l}_j)_j$ affect the Gaussianity of $(Z^{l + 1}_i)_i$?

In order to answer this question, we propose a series of experiments on a two-layer neural network. We consider 
a vector of inputs $\mathbf{X} \in \mathbb{R}^{n_0}$, where the $(X_j)_{1 \leq j \leq n_0}$ are 
$\mathcal{N}(0, 1)$ and i.i.d. The scalar outputted by the network is:
\begin{align*}
	Z = \frac{1}{\sqrt{n_1}} \mathbf{W}^2 \phi\left(\frac{1}{\sqrt{n_0}} \mathbf{W}^1 \mathbf{X}\right),
\end{align*}
where the weights $\mathbf{W}^1 \in \mathbb{R}^{n_1 \times n_0}$ and $\mathbf{W}^2 \in \mathbb{R}^{1 \times n_1}$ 
are i.i.d.

We test this setup with different initialization distributions $\mathrm{P}$ and activation functions $\phi$:
\begin{itemize}
	\item usual ones: $\phi = \tanh$ or $\relu$, $\mathrm{P} = \mathcal{N}(0, \sigma_w^2)$, 
	where $\sigma_w^2$ is such that the pair $(\sigma_w^2, \sigma_b^2 = 0.01)$ lies at the EOC;
	\item ours: $\phi = \phi_{\theta}$, $\mathrm{P} = \mathrm{P}_{\theta} = \mathcal{W}(\theta, 1)$.
\end{itemize}

We study three cases:
\begin{itemize}
	\item unfavorable case: $n_0 = 1$, various $n_1 \in [1, 10]$; \\
	the intermediary features $\mathbf{W}^1 \mathbf{X}$ are weakly ``mixed'', 
	so it is credible that they lead to an output that is far from being Gaussian;
	\item favorable case: $n_1 = 2$, various $n_0 \in [1, 10]$;%
	\footnote{The case $n_1 = 1$ is trivial: there is no sum of dependent random variables in 
	the second layer.}%
	\\
	the intermediary features $\mathbf{W}^1 \mathbf{X}$ are ``mixed'' with an increasing rate as $n_0$ increases;
	\item same-width case: $n_0 = n_1$.
\end{itemize}

\begin{figure}[h!]
	\includegraphics[width=1.\linewidth]{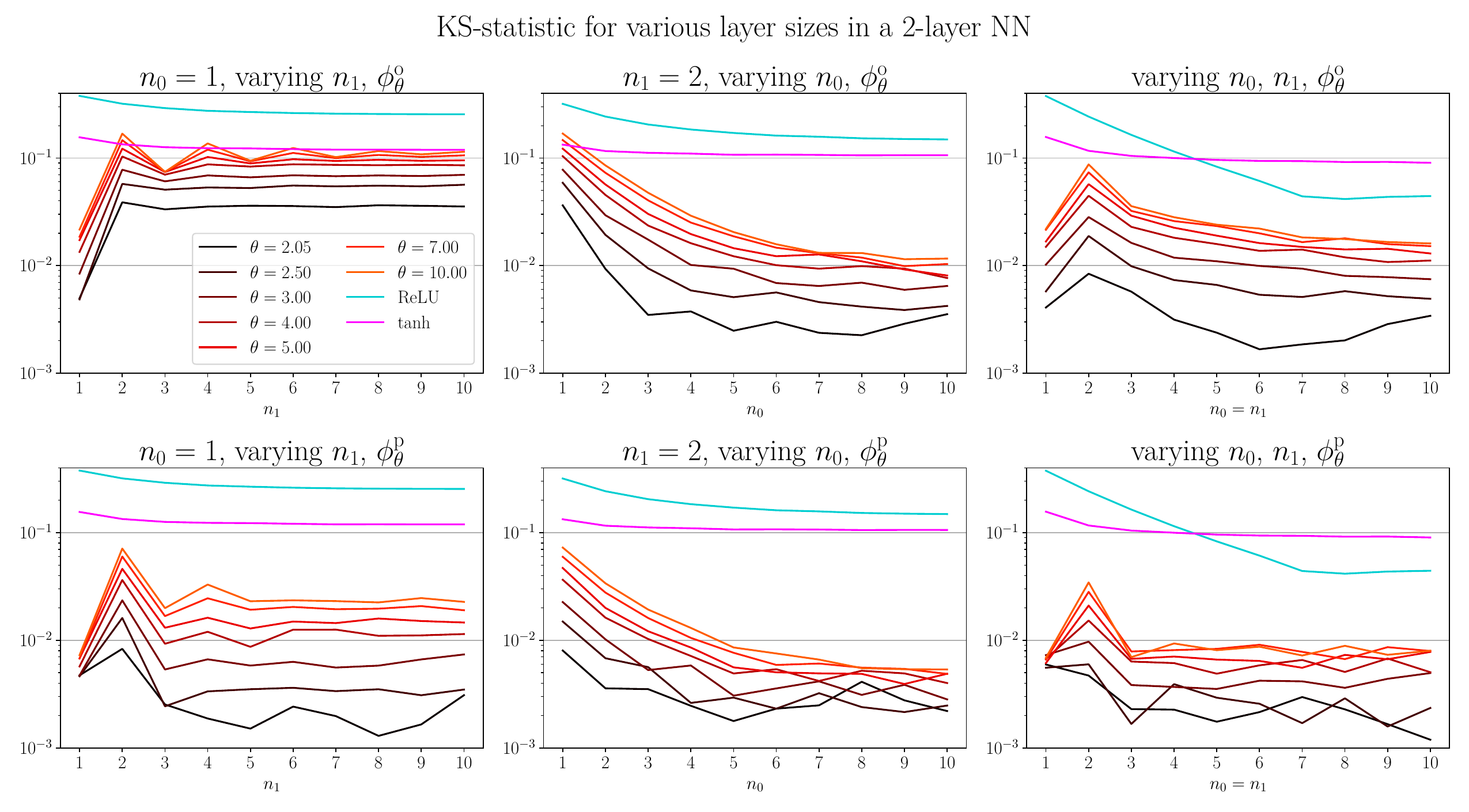}
	
	\caption{Evolution of the distance of the standardized distribution of $Z$ to the $\mathcal{N}(0, 1)$
			according to the Kolmogorov--Smirnov statistic.
		Weight initialization is $\mathcal{W}(\theta, 1)$ when using $\phi = \phi_{\theta}$ and is Gaussian according 
		to the EOC when using $\phi = \tanh$ or $\mathrm{ReLU}$.
		For each point, we have computed the KS-statistic over $200\, 000$ samples of 
		$(\mathbf{X}, \mathbf{W}^1, \mathbf{W}^2)$.
		In the first row, we have plotted the results for the \textbf{odd} activation functions $\phi^{\acto}_{\theta}$. 
		In the second row, we have plotted the results for the \textbf{positive} activation functions 
		$\phi^{\actp}_{\theta}$.} 
	\label{fig:indep_preactiv}
\end{figure}

According to Figure~\ref{fig:indep_preactiv}:
\begin{itemize}
	\item unfavorable case (1st graph, $n_0 = 1$): $Z$ is far from being Gaussian for $\relu$, $\tanh$, 
	and the $\phi^{\acto}_{\theta}$, but is much better for the $\phi^{\actp}_{\theta}$, especially 
	with a small $\theta$;
	\item favorable case (2nd graph, $n_1 = 2$): $Z$ is closer to be Gaussian with 
	our setups $(\mathrm{P}_{\theta}, \phi^{\actop}_{\theta})$ than with $\phi = \tanh$ or $\relu$, 
	especially with larger $n_1$;
	\item same-width case (3rd graph, $n = n_0 = n_1$): the larger the width $n$, 
	the closer $Z$ is to being Gaussian. For a fixed $n$, the distribution of $Z$ is closer to a Gaussian
	with smaller $\theta$. When we use $(\mathrm{P}_{\theta}, \phi^{\actop}_{\theta})$, 
	we are close to the performance of $\tanh$ or better.
\end{itemize}

The case $n_0 = n_1$ is the most realistic one: usually, the sizes of the layers of a neural network 
are of the same order of magnitude. In this case, our setup is better than or equivalent to the ones 
with $\tanh$ or $\relu$.

\paragraph{Mixing of inputs and interference phenomenon.}
We see in Figure~\ref{fig:indep_preactiv} that, in every graph, setups with small $\theta$ lead to better results 
than the ones with large $\theta$. 
This observation could be explained by Example~\ref{exa:indep_preactiv}. In this example, the inputs are weakly 
``mixed'': since $X_0$ is multiplied by a Rademacher random variable, it is possible to partially reconstruct 
$X_0$ after the first layer, and then build destructive interference 
(leading to an output $Z = 0$ half of the time).

So, if we want to avoid this ``interference'' phenomenon, we should use initialization distributions and 
activation functions such that every layer ``mixes'' strongly the inputs. 
Notably, initialization distributions should be far from being a combination of Dirac distributions. Typically, 
$\mathrm{P}_{\theta = 10}$ is close to the Rademacher distribution (see Fig.~\ref{fig:plot_act:dens_Weibull}).

Finally, this phenomenon can be greatly reduced by using our family $(\phi^{\actp}_{\theta})_{\theta}$ of
positive activation functions (see second row of Figure \ref{fig:indep_preactiv}).

\section{Constraints on the product of two random variables} \label{app:dem_product_main}

\begin{customprop}{\ref{thm:product_main}}[Density of a product of random variables at $0$]
Let $W, Y$ be two independent non-negative random variables and $Z = W Y$. Let $f_W, f_Y, f_Z$ be their respective 
density.
Assuming that $f_Y$ is continuous at $0$ with $f_Y(0) > 0$, we have:
\begin{align}
\text{if } \quad \lim_{w \rightarrow 0} \int_{w}^{\infty}  \frac{f_W(t)}{t} \, \mathrm{d}t = \infty, \quad \text{ 
	then } \quad \lim_{z \rightarrow 0} f_Z(z) = \infty . \label{dem:eqn1}
\end{align}
Moreover, if $f_Y$ is bounded:
\begin{align}
\text{if } \quad \int_{0}^{\infty}  \frac{f_W(t)}{t} \, \mathrm{d}t < \infty, \quad \text{ then } \quad 
f_Z(0) = f_Y(0) \int_{0}^{\infty} \frac{f_W(t)}{t} \, \mathrm{d}t . \label{dem:eqn2}
\end{align}
\end{customprop}

\begin{proof}
Let $z, z_0 > 0$:
\begin{align*}
f_Z(z) &= \int_{0}^{\infty} f_Y(t) \frac{1}{t} f_W\left( \frac{z}{t} \right) \, \mathrm{d}t \\
&\geq \int_{0}^{z_0} f_Y(t) \frac{1}{t} f_W\left( \frac{z}{t} \right) \, \mathrm{d}t 
\geq \inf_{[0, z_0]} f_Y \cdot \int_{0}^{z_0} \frac{1}{t} f_W\left( \frac{z}{t} \right) \, \mathrm{d}t 
\geq \inf_{[0, z_0]} f_Y \cdot \int_{z/z_0}^{\infty} \frac{f_W(t)}{t} \, \mathrm{d}t .
\end{align*}
Let us take $z_0 = \sqrt{z}$. We have:
\begin{align*}
f_Z(z) &\geq \inf_{[0, \sqrt{z}]} f_Y \cdot \int_{\sqrt{z}}^{\infty} \frac{f_W(t)}{t} \, \mathrm{d}t .
\end{align*}
Then we take the limit $z \rightarrow 0$, hence:
\begin{itemize}
	\item if $\int_{0}^{\infty} \frac{f_W(t)}{t} \, \mathrm{d}t = \infty$, then: $\lim_{z \rightarrow 0} f_Z(z) = 
	\infty$, which achieves~\eqref{dem:eqn1};
	\item if $\int_{0}^{\infty} \frac{f_W(t)}{t} \, \mathrm{d}t < \infty$, then: $f_Z(0) \geq f_Y(0) \int_{0}^{\infty} 
	\frac{f_W(t)}{t} \, \mathrm{d}t$, which achieves one half of~\eqref{dem:eqn2};
\end{itemize}

Let us prove the second half of~\eqref{dem:eqn2}. Let $z, z_0 > 0$:
\begin{align*}
f_Z(z) &= \int_{0}^{\infty} f_Y(t) \frac{1}{t} f_W\left( \frac{z}{t} \right) \, \mathrm{d}t \\
&= \int_{0}^{z_0} f_Y(t) \frac{1}{t} f_W\left( \frac{z}{t} \right) \, \mathrm{d}t + \int_{z_0}^{\infty} f_Y(t) 
\frac{1}{t} f_W\left( \frac{z}{t} \right) \, \mathrm{d}t \\
&\leq \sup_{[0, z_0]} f_Y \cdot \int_{z/z_0}^{\infty} \frac{f_W(t)}{t} \, \mathrm{d}t + \int_{1}^{\infty} f_Y\left(z_0 
t\right) \frac{1}{t} f_W\left( \frac{z}{z_0 t} \right) \, \mathrm{d}t
\end{align*}

Let $z_0 = \sqrt{z}$. We have:
\begin{align*}
f_Z(z) &\leq \sup_{[0, \sqrt{z}]} f_Y \cdot \int_{\sqrt{z}}^{\infty} \frac{f_W(t)}{t} \, \mathrm{d}t + 
\int_{1}^{\infty} f_Y\left(\sqrt{z} t\right) \frac{1}{t} f_W\left( \frac{\sqrt{z}}{t} \right) \, \mathrm{d}t ,
\end{align*}
where:
\begin{align*}
\int_{1}^{\infty} f_Y\left(\sqrt{z} t\right) \frac{1}{t} f_W\left( \frac{\sqrt{z}}{t} \right) \, \mathrm{d}t
\leq \|f_Y\|_{\infty} \int_{1}^{\infty} \frac{1}{t} f_W\left( \frac{\sqrt{z}}{t} \right) \, \mathrm{d}t 
\leq \|f_Y\|_{\infty} \int_{0}^{\sqrt{z}} \frac{f_W\left( t \right)}{t} \, \mathrm{d}t .
\end{align*}

According to the hypotheses, we have, as $z \rightarrow 0$:
\begin{align*}
\sup_{[0, \sqrt{z}]} f_Y &\rightarrow f_Y(0), &
\int_{\sqrt{z}}^{\infty} \frac{f_W(t)}{t} \, \mathrm{d}t &\rightarrow \int_{0}^{\infty} \frac{f_W(t)}{t} \, \mathrm{d}t ,
&
\int_{0}^{\sqrt{z}} \frac{f_W\left( t \right)}{t} \, \mathrm{d}t &\rightarrow 0 ,
\end{align*}
hence the result.
\end{proof}

\section{Activation functions with vertical tangent at $0$} \label{app:vertical}

In the following lemma, we show that if we want the activation $Y$ to have a density that is $0$ at $0$, then the
activation function $\phi$ should have a vertical tangent at $0$. $G$ plays the role of pre-activation.

\begin{lemma}\label{lem:infinity_at_zero}
	Let $\phi$ be a function transforming a Gaussian random variable $G \sim \mathcal{N}(0, 1)$ into a symmetrical 
	random variable $Y$ with a density $f_Y$ such that $f_Y(0) = 0$.
	That is, $Y = \phi(G)$. Then $\phi$ has a vertical tangent at $0$.
\end{lemma}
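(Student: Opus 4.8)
The plan is to exploit the quantile representation of $\phi$. Since the lemma is invoked in a setting where $\phi$ is a strictly increasing activation function (and the case of a decreasing $\phi$ reduces to this one by replacing $\phi$ with $-\phi$), I would first assume $\phi$ is a strictly increasing bijection onto its image, so that $F_Y(\phi(x)) = \mathbb{P}(\phi(G) \le \phi(x)) = \mathbb{P}(G \le x) = F_G(x)$; equivalently $\phi(x) = F_Y^{-1}(F_G(x))$ exactly as in Proposition~\ref{prop:invF}. Because $G$ and $Y$ are both symmetric and admit a density, each has $0$ as its unique median, and a strictly increasing map carries median to median, so $\phi(0) = 0$.

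Next I would compute the slope of $\phi$ away from $0$. Since $f_Y$ is continuous with $f_Y(0) = 0$ and strictly positive just off $0$, differentiating $F_Y(\phi(x)) = F_G(x)$ is valid on a punctured neighbourhood of $0$ and yields
\[
\phi'(x) = \frac{f_G(x)}{f_Y(\phi(x))}.
\]
Letting $x \to 0$, the numerator tends to $f_G(0) = 1/\sqrt{2\pi} > 0$, while $\phi(x) \to \phi(0) = 0$ together with the continuity of $f_Y$ at $0$ forces the denominator to $0^+$. Hence $\phi'(x) \to +\infty$ as $x \to 0$.

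Finally I would upgrade ``$\phi'(x) \to +\infty$'' to ``$\phi$ has a vertical tangent at $0$'': $\phi$ is continuous at $0$ and differentiable on a punctured neighbourhood, so the mean value theorem gives, for each small $x$, some $\xi_x$ strictly between $0$ and $x$ with $\frac{\phi(x) - \phi(0)}{x} = \phi'(\xi_x)$; as $x \to 0$ one has $\xi_x \to 0$, so the difference quotient tends to $+\infty$, which is the claimed vertical tangent at $0$.

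The only real subtlety is regularity: the argument uses that $\phi$ is strictly monotone near $0$ (so that the quantile identity holds and $\phi(0)=0$) and that $f_Y$ is continuous at $0$ (so that $F_Y$ is $C^1$ there with vanishing derivative). Without continuity of $f_Y$ I would instead combine the estimate $F_Y(y) - F_Y(0) = \int_0^y f_Y = o(y)$ with $F_G(\phi^{-1}(y)) - F_G(0) = f_G(0)\,\phi^{-1}(y) + o(\phi^{-1}(y))$ to get $\phi^{-1}(y)/y \to 0$, equivalently $\phi(x)/x \to +\infty$, giving the same conclusion. I would state these mild hypotheses explicitly — they are satisfied by all the constructions in the paper — rather than pursue full generality.
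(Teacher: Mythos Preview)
Your proposal is correct and follows essentially the same approach as the paper: represent $\phi$ via the quantile identity $\phi(x)=F_Y^{-1}(F_G(x))$, differentiate to obtain $\phi'(x)=f_G(x)/f_Y(\phi(x))$, and use $f_Y(0)=0$ together with $f_G(0)>0$ to conclude. Your treatment is in fact more careful than the paper's, which simply writes $\phi'(0)=\infty$ formally; your limiting argument via the mean value theorem and your explicit discussion of the monotonicity and continuity hypotheses are welcome additions.
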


\begin{proof}
	We have:
	\begin{align*}
	\phi(x) = F_Y^{-1}(F_G(x)) ,
	\end{align*}
	where $F_G$ and $F_Y$ are the respective CDFs of $G$ and $Y$.
	
	Thus:
	\begin{align*}
	\phi'(x) = F_G'(x) \frac{1}{F_Y'(F_Y^{-1}(F_G(x)))}
	\end{align*}
	
	Therefore:
	\begin{align*}
	\phi'(0) &= F_G'(0) \frac{1}{F_Y'(F_Y^{-1}(F_G(0)))}
	= \frac{1}{\sqrt{2 \pi}} \frac{1}{F_Y'(F_Y^{-1}(1/2))}
	= \frac{1}{\sqrt{2 \pi}} \frac{1}{F_Y'(0)}
	= \infty .
	\end{align*}
\end{proof}

\section{The Mellin transform} \label{app:mellin}

\subsection{Generalities}

We assume that $G = W Y \sim \mathcal{N}(0, 1)$. Let us consider the random variables $|W|$, $|Y|$ and $|G| = |W| \cdot 
|Y|$.
Let $f_{|W|}$, $f_{|Y|}$ and $f_{|G|}$ be their densities. Under integrability conditions, 
we can express the density $f_{|G|}$ of the product $|G| = |W| |Y|$ with the product-convolution operator $\dot{*}$:
\begin{align*}
f_{|G|}(z) = (f_{|W|} \dot{*} f_{|Y|})(z), \quad \text{where} \quad 
(f_{|W|} \dot{*} f_{|Y|})(z) = \int_{0}^{\infty} f_{|W|}\left(\frac{z}{t}\right) f_{|Y|}(t) \frac{1}{t} \, \mathrm{d} 
t .
\end{align*}
We can also express the CDF of $|G|$ this way:
\begin{align}
F_{|G|}(z) = \int_{0}^{\infty} F_{|W|}\left(\frac{z}{t}\right) f_{|Y|}(t)  \, \mathrm{d} t . \label{eqn:mellin:cdf}
\end{align}
Then, we can use the following property of the Mellin transform $\mathcal{M}$:
\begin{align*}
\mathcal{M}f_{|G|} = (\mathcal{M} f_{|W|}) \cdot (\mathcal{M} f_{|Y|}) , \quad 
\text{where} \quad (\mathcal{M}f)(t) = \int_{0}^{\infty} x^{t - 1} f(x) \, \mathrm{d} x .
\end{align*}
In short, $\mathcal{M}$ transforms a product-convolution into a product in the same manner as the Fourier transform 
$\mathcal{F}$ transforms a convolution into a product. We have then:
\begin{align*}
f_{|Y|}(y) &:= \mathcal{M}^{-1}\left[\frac{\mathcal{M} f_{|G|}}{\mathcal{M}f_{|W|}}\right](y) .
\end{align*}
Then, by symmetry, we can obtain $f_{Y}$ from $f_{|Y|}$.
However, while $\mathcal{M} f_{|G|}$ and $\mathcal{M}f_{|W|}$ are easy to compute, the inverse Mellin transform 
$\mathcal{M}^{-1}$ seems to be analytically untractable in this case:
\begin{align*}
(\mathcal{M} f_{|G|})(s) = \frac{2^{\frac{s}{2} - \frac{1}{2}} \Gamma( \frac{s}{2} )}{\sqrt{\pi}},& \qquad
(\mathcal{M} f_{|W|})(s) = \Gamma\left( \frac{s - 1}{\theta} + 1 \right), \\
\text{so: } \quad \frac{(\mathcal{M} f_{|G|})(s)}{(\mathcal{M}f_{|W|})(s)} &= \frac{1}{\sqrt{\pi}} \, 
\frac{2^{\frac{s}{2} - \frac{1}{2}} \Gamma( 
	\frac{s}{2} )}{\Gamma\left( \frac{s - 1}{\theta} + 1 \right)} .
\end{align*}

\subsection{Numerical inversion of the Mellin transform} \label{app:mellin:numerical}

\paragraph{Computation of $f_{|Y|}$ by numerical inverse Mellin transform.}
The Mellin transform of a function can be inverted by using Laguerre polynomials. Specifically, we use the method 
proposed by \cite{theocaris1977numerical} and slightly accelerated by the numerical procedure of 
\cite{gabutti1991numerical}:
\begin{align}
(\mathcal{M}^{-1}f)(z) = e^{-\frac{z}{2}} \sum_{k = 0}^{\infty} c_{k + 1} L_k\left(\frac{z}{2}\right) , 
\label{eqn:laguerre} \quad \text{with } c_k := \sum_{n = 1}^{k} {k - 1 \choose n - 1} (-1)^{n - 1} \frac{f(n)}{2^n 
\Gamma(n)} , 
\end{align}
where the $(L_k)_k$ are the Laguerre polynomials \citep[see Section 7.41,][]{gradshteyn2014table}.

\paragraph{Experiments.}

A common way of computing the inverse Mellin transform consists of using Equation~\eqref{eqn:laguerre}. Specifically, 
the sequence $(c_k)_k$ must be computed. 

In order to compute the density $f_{Y}$ of $\mathrm{Q}_{\theta}$ with $\theta = 2.05$, we have computed numerically 
$(c_k)_k$ for $k \in [1, 500]$. 
The results are plotted in Figure~\ref{fig:mellin_ck}. We have tested three methods to compute the $c_k$: 
\begin{itemize}
	\item floating-point operations using 64 bits floats;
	\item floating-point operations using 128 bits floats;
	\item SymPy: make the whole computation using SymPy, a Python library of symbolic computation (very slow).
\end{itemize}

In all three cases, instabilities appear before the sequence $(c_k)_k$ has fully converged to $0$. 
Moreover, the oscillations of $(c_k)_k$ around $0$ have an increasing wavelength, which indicates that we may have to 
go far beyond $k = 500$ to get enough coefficients $(c_k)_k$ to reconstruct the wanted inverse Mellin transform.

In Figure~\ref{fig:mellin_ck_inverse}, we have plotted two estimations of $f_{|Y|}$: the density obtained directly by 
using $(c_k)_{k \in [1, 300]}$, and the density obtained by using a sequence $(c_k)_{k \in [1, 20000]}$ where the 
values $(c_k)_{k \in [301, 20000]}$ have been extrapolated from $(c_k)_{k \in [1, 300]}$.%
\footnote{The extrapolation has been performed by modeling the graph of $(c_k)_k$ as the product of a decreasing 
	function and a cosine with decreasing frequency.}

The resulting estimations of the density $f_{|Y|}$ take negative values and seem to be noised. So, more work is needed 
to obtain smooth and proper densities, especially if we want them to meet Constraints~\ref{constr_2b} 
and~\ref{constr_3} (density at $0$ and decay rate at $\infty$).

\begin{figure}[h!]
	\includegraphics[width=1.\linewidth]{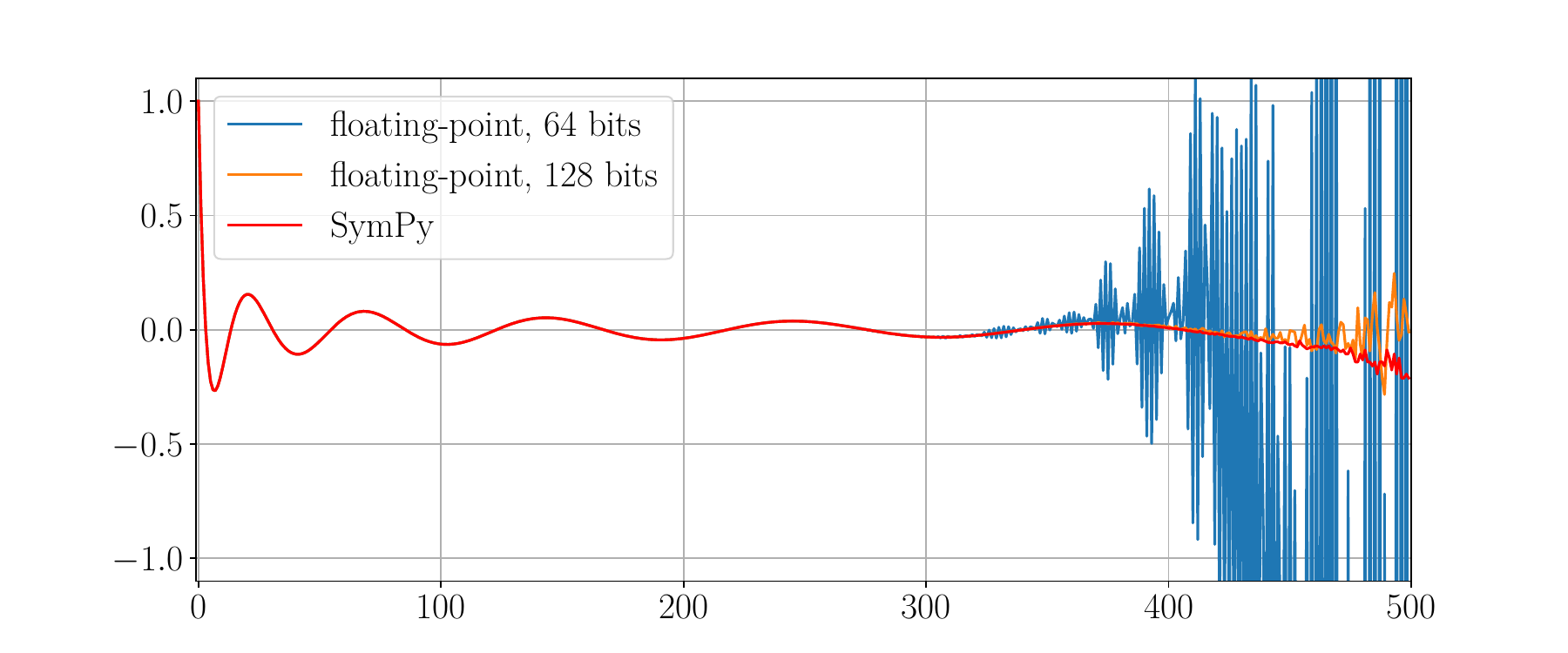}
	\caption{Evolution of various numerical computations of $c_k$ as $k$ grows.} \label{fig:mellin_ck}
\end{figure}

\begin{figure}[h!]
	\includegraphics[width=1.\linewidth]{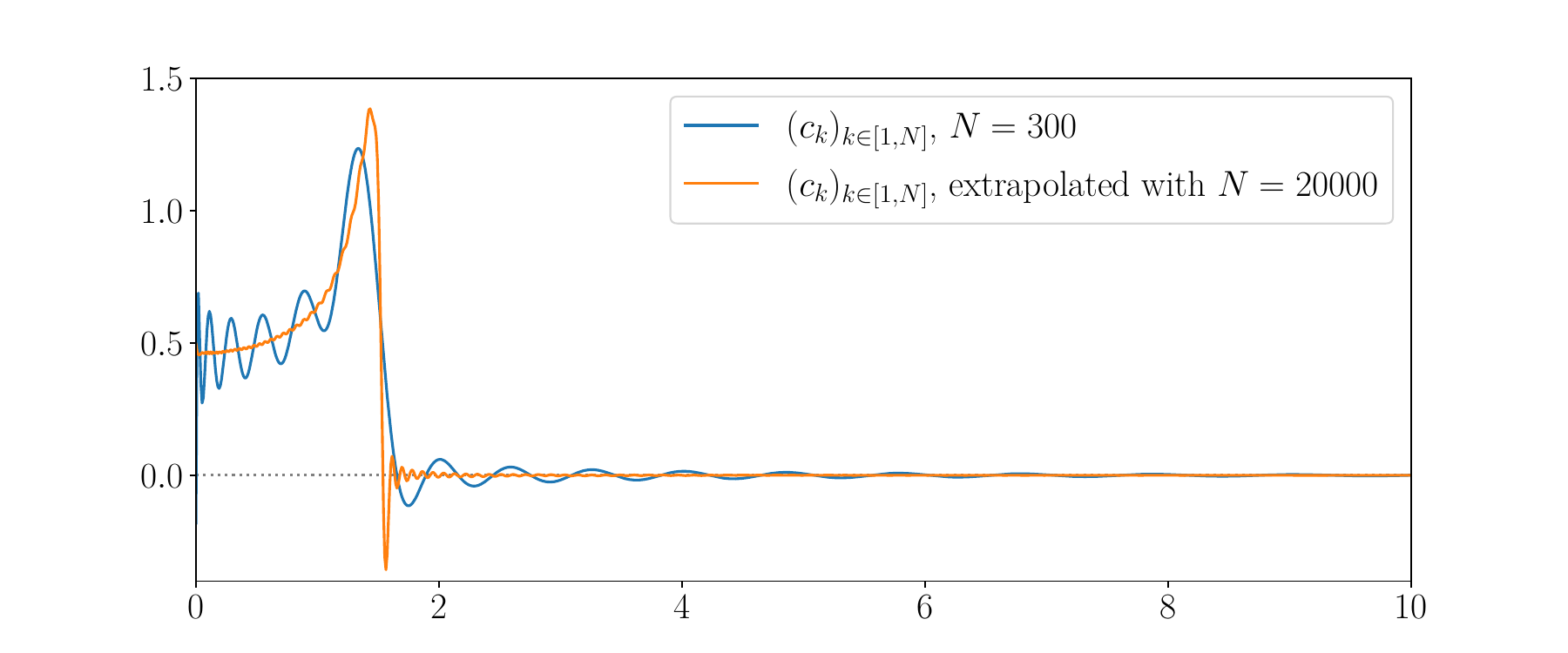}
	\caption{Numerical inverse Mellin transform with two different computations of $(c_k)_k$: direct computation of 
		$(c_k)_{k \in [1, N]}$ with $N = 300$; extrapolation of $(c_k)_{k \in [301, 20000]}$ from $(c_k)_{k \in [1, 
			300]}$.} \label{fig:mellin_ck_inverse}
\end{figure}

\paragraph{Conclusion.}
We observe that this computation of the inverse Mellin transform has several 
intrinsic problems:
\begin{itemize}
	\item the computation of the $c_k$ coefficients involves a sum of terms with alternating signs, which become larger 
	(in absolute value) as $k$ grows 
	and which are supposed to compensate such that $c_k \rightarrow 0$ as $k \rightarrow \infty$. Such a numerical 
	computation, involving both large and small terms, makes the resulting $c_k$ 
	very unstable as $k$ grows;
	\item when $\theta \approx 2$, the sequence $(c_k)_k$ tends extremely slowly to $0$;
	\item if we approximate $\mathcal{M}^{-1}f$ with the finite sum of the first $K$ terms of the series in 
	Equation~\eqref{eqn:laguerre}, we cannot guarantee the non-negativeness of the resulting function, 
	which is meant to be a density.
\end{itemize}
So, this method is unpractical to compute the density of a distribution in our case.

\section{Obtaining $f_{|Y|}$: experimental details} \label{app:numerical_inv}

We optimize the vector of parameters $\Lambda$ with respect to the following loss:
\begin{align*}
\ell(\Lambda) &:= \| \hat{F}_{\Lambda} - F_{|G|} \|_{\infty} \\
\hat{F}_{\Lambda}(z) &:= \int_{0}^{\infty} F_{|W|}\left(\frac{z}{t}\right) g_{\Lambda}(t)  \, \mathrm{d} t .
\end{align*}

\paragraph{Dataset.}
We build the dataset $\mathcal{Z}$ of size $d$:
\begin{align*}
	\mathcal{Z} = \left\{ 0, z_{\mathrm{max}} \frac{1}{d - 1}, z_{\mathrm{max}} \frac{2}{d - 1}, \cdots , 
	z_{\mathrm{max}} \right\} .
\end{align*}
In our setup, $d = 200$ and $z_{\mathrm{max}} = 5$.

\paragraph{Computing the loss.}
For each $z$ in $\mathcal{Z}$, we compute numerically $\hat{F}_{\Lambda}(z)$. Then, we are able to compute 
$\ell(\Lambda)$. We keep track of the computational graph with PyTorch, in order to backpropagate the gradient and
train the parameters $\Lambda$ by gradient descent.

\paragraph{Initialization of the parameters.}
We initialize $\Lambda = (\alpha, \gamma, \lambda_1, \lambda_2)$ in the following way:
$\alpha = 3; \gamma = 1; \lambda_1 = 1; \lambda_2 = 1$.

\paragraph{Optimizer.}
We use the Adam optimizer \citep{kingma2015adam} with the parameters:
learning rate $= 0.001$;
$\beta_1 = 0.9$;
$\beta_2 = 0.999$;
weight decay $= 0$.
We train $\Lambda$ for $100$ epochs.

\paragraph{Learning rate scheduler.}
We use a learning rate scheduler based on the reduction of the training loss.
If the training loss does not decrease at least by a factor $0.01$ for $20$ epochs, 
then the learning rate is multiplied by a factor $1/\sqrt[3]{10}$.
After a modification of the learning rate, we wait at least $20$ epochs before any modification.

\paragraph{Scheduler for $\theta'$.}
We recall that the definition of $g_{\Lambda}$ involves $\theta'$, defined by: $\frac{1}{\theta} + \frac{1}{\theta'} = 
\frac{1}{2}$. It is not a parameter to train. Empirically, we found that the following schedule improves the 
optimization process:
\begin{itemize}
	\item from epoch $0$ to epoch $49$, $\theta'$ increases linearly from $2$ to its theoretical value $(\frac{1}{2} - 
	\frac{1}{\theta})^{-1}$;
	\item at the beginning of epoch $50$, we reinitialize the optimizer and the learning rate scheduler;
	\item we finish the training normally, with $\theta' = (\frac{1}{2} - \frac{1}{\theta})^{-1}$.
\end{itemize}

\section{The Kolmogorov--Smirnov test} \label{app:ks_test}

\paragraph{Description.}
We describe here the Kolmogorov--Smirnov (KS) test \citep{kolmogoroff1941confidence,smirnov1948table}. Given a 
sequence $(Z_1', \cdots , Z_s')$
of $s$ i.i.d.\ random variables sampled from $\mathrm{P}'$:
\begin{enumerate}
	\item we build the empirical CDF $F_s$ of this sample:
	$F_s(z) = \frac{1}{s} \sum_{k = 1}^{s} \mathds{1}_{Z_k' \leq z}$;
	\item we compare $F_s$ to the CDF $F_G$ of $G \sim \mathcal{N}(0, 1)$ by using the $\mathcal{L}^{\infty}$ norm: $D_s = \| F_s - F_G \|_{\infty}$,
	where $D_s$ is the ``KS statistic'';
	\item under the null hypothesis, i.e.\ $\mathrm{P}' = \mathcal{N}(0, 1)$, we have:
	$\sqrt{s} D_s \overset{d}{\rightarrow} K$,
	where $K$ is the Kolmogorov distribution \citep{smirnov1948table}. We denote by $(K_{\alpha})_{\alpha}$ the 
	quantiles of $K$: 
	$\mathbb{P}(K \leq K_{\alpha}) = 1 - \alpha$, for all $\alpha \in [0, 1]$;
	\item finally, we reject the null hypothesis at level $\alpha$ if: 
	$\sqrt{s} D_s \leq K_{\alpha}$.
\end{enumerate}

\paragraph{Limitations.}
In the KS test presented above, the null hypothesis $\mathbb{H}_0$ is $\mathrm{P}' = \mathcal{N}(0, 1)$, which 
is exactly what we intend to demonstrate when using our family  $\{(\mathrm{P}_{\theta}, \phi^{\actop}_{\theta}) : 
\theta \in (2, \infty)\}$, while the alternative hypothesis $\mathbb{H}_1$ is $\mathrm{P}' \neq \mathcal{N}(0, 
1)$.
With this test design, we face a problem: it is impossible to claim that $\mathbb{H}_0$ holds.
More precisely, we have two possible outcomes: either $\mathbb{H}_0$ is rejected, or it is not. Since 
$\mathbb{H}_1$ is the complementary of $\mathbb{H}_0$, a reject of $\mathbb{H}_0$ means an accept of 
$\mathbb{H}_1$. 
But the converse does not hold: if $\mathbb{H}_0$ is not rejected, then it is impossible to conclude that 
$\mathbb{H}_0$ is true: the sample size $s$ may simply be too small, or the KS test may be inadequate for our 
case.

So, to be able to conclude that the pre-activations are Gaussian, we should have built an alternative test, 
where the null hypothesis $\mathbb{H}_0'$ is some \emph{misfit} between $\mathrm{P}'$ and $\mathcal{N}(0, 1)$. 
Or, at least, we should have computed the power of the current KS test.

However, our experimental results are sufficient to \emph{compare} the quality of fit between $\mathrm{P}'$ 
and $\mathcal{N}(0, 1)$ in the tested setups (see Figures~\ref{fig:ks_test} and~\ref{fig:testing_prop}).
It remains clear that, with $\{(\mathrm{P}_{\theta}, \phi_{\theta}) : \theta \in (2, \infty)\}$, the Gaussian 
pre-activations hypothesis is far more likely to hold than in setups involving $\mathrm{tanh}$, 
$\mathrm{ReLU}$ and Gaussian weights.

\section{Experiments} \label{app:experiments}

\subsection{Propagation of the correlations with MNIST} \label{app:expe:extra:MNIST}

Within the setup of Section~\ref{sec:prop:realistic}, we have plotted in Figure~\ref{fig:corr_suppl}
the correlations propagated in a multilayer perceptron with inputs sampled for MNIST ($10$ samples per class, that is, 
$100$ samples in total). 
The layers of the perceptron have $n_l = 10$ neurons each, and the activation function is $\phi = \relu$.

We observe that, in this narrow NN case ($n_l = 10$), some irregularities appear as information propagates into the 
network: some classes seem to (de)correlate in an inconsistent way with the others. 
Specifically, while most of the classes tend to correlate exactly ($C^* \approx 1$), 
the third class (corresponding to the digit ``$2$'') tends to a correlation $C^* \approx 0.7 \neq 1$. 
This result is contradictory to the EOC theory.

\begin{figure}[h!]
	\setlength\tabcolsep{5pt}
	\begin{tabular}{cccccc}
		& $\phi$ & input & $l = 10$ & $l = 30$ & $l = 50$ \\
		& \makecell[b]{$\mathrm{ReLU}$ \\ (MNIST)\\ \\ \\} &
		\includegraphics[page=1,width=.20\linewidth]{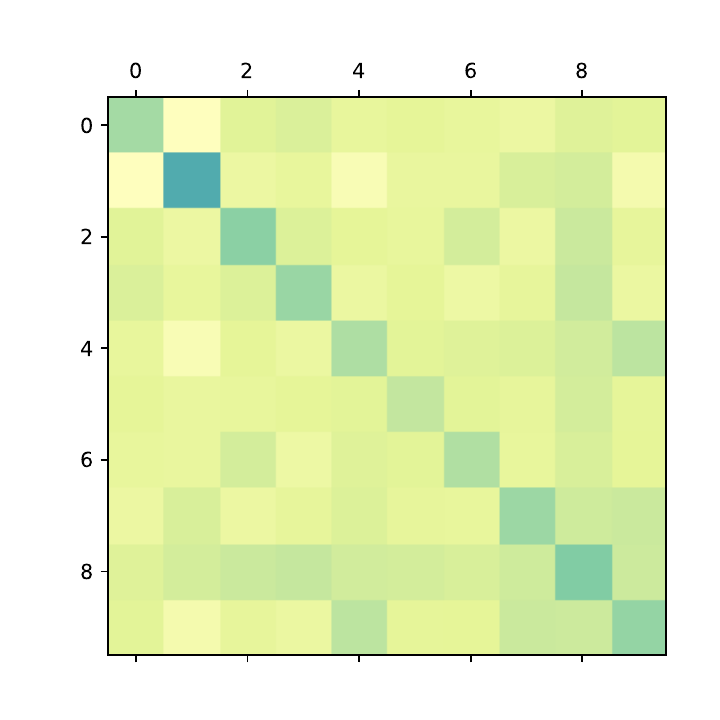}
		 &
		\includegraphics[page=11,width=.20\linewidth]{preactiv-init-mnist-01/StatsEcab_wth-10_act-relu_the-0.00_sampler-normal_.pdf}
		 &
		\includegraphics[page=31,width=.20\linewidth]{preactiv-init-mnist-01/StatsEcab_wth-10_act-relu_the-0.00_sampler-normal_.pdf}
		 &
		\includegraphics[page=51,width=.20\linewidth]{preactiv-init-mnist-01/StatsEcab_wth-10_act-relu_the-0.00_sampler-normal_.pdf}
	\end{tabular}
	\begin{landscape}
		\hspace*{8mm}
		\begin{subfigure}{1.\linewidth}
			\includegraphics[width=.742\linewidth]{ColorBar.pdf}
			\caption*{\hspace*{-55mm}Correlation}
		\end{subfigure}
		\vspace*{-5mm}
	\end{landscape}
	\caption{Propagation of correlations $c^l_{ab}$ in a multilayer perceptron with activation function $\phi = 
		\mathrm{ReLU}$ and inputs sampled from the MNIST dataset. The neural network is initialized at the EOC. 
		Each plot displays a $10 \times 10$ matrix $C_{pq}^l$ whose entries are the average correlation between the 
		pre-activations propagated by samples from classes $p,q\in \{0, \cdots , 9\}$, at the input and right after 
		layers $l \in \{10, 30, 50\}$. See also Figure~\ref{fig:corr_main} in Section~\ref{sec:prop:realistic} for 
		results on CIFAR-10.} \label{fig:corr_suppl}
\end{figure}

\subsection{Propagation of the correlations with $\phi = \phi_{\theta}^{\acto}$} \label{app:expe:extra:phi}

Within the setup of Section~\ref{sec:prop:realistic}, we have plotted in Figure~\ref{fig:corr_suppl_phi}
the correlations propagated in a multilayer perceptron with $\phi = \phi_{\theta}$. The weights have been sampled from 
$\mathcal{W}(\theta, 1)$.

In this setup, the results are consistent, and are consistent with the results for $\phi = \tanh$ (see 
Figure~\ref{fig:corr_main}): the sequence $(C_{pq}^l)_l$ converges to $1$, which was not the case for $\phi = \relu$ 
and $n_l = 10$.

\begin{figure}[p!]
	\setlength\tabcolsep{5pt}
	\begin{tabular}{cccccc}
		& $\phi$ & input & $l = 10$ & $l = 30$ & $l = 50$ \\
		\makecell[b]{\parbox[t]{2mm}{\multirow{2}{*}{\rotatebox[origin=c]{90}{$n_l = 10$ neurons per layer}}}\\ \\ \\ 
		\\ \\} & \makecell[b]{$\phi_{2.05}^{\acto}$ \\ \\ \\ \\} &
		\includegraphics[page=1,width=.20\linewidth]{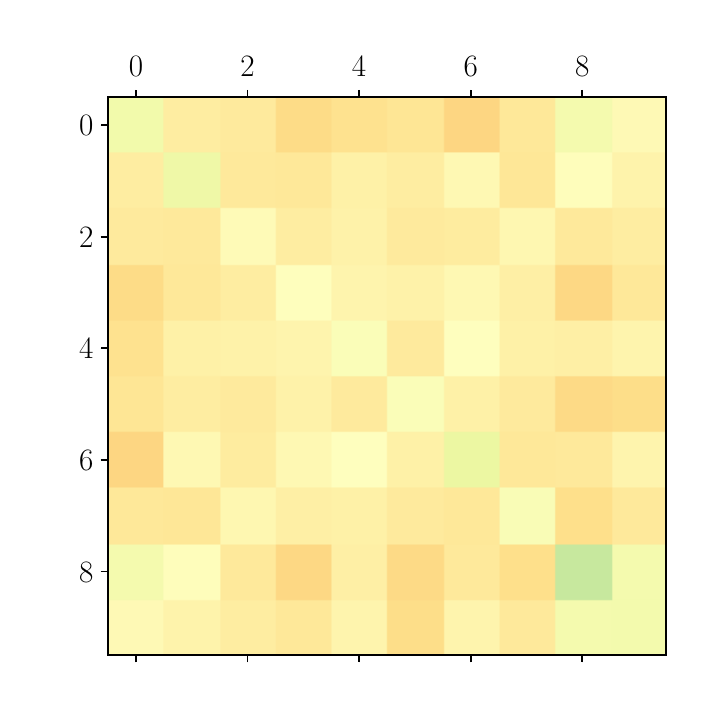}
		&
		\includegraphics[page=11,width=.20\linewidth]{preactiv-init-cifar10-02/StatsEcab_wth-10_act-weibull_the-2.05_sampler-weibull_.pdf}
		&
		\includegraphics[page=31,width=.20\linewidth]{preactiv-init-cifar10-02/StatsEcab_wth-10_act-weibull_the-2.05_sampler-weibull_.pdf}
		&
		\includegraphics[page=51,width=.20\linewidth]{preactiv-init-cifar10-02/StatsEcab_wth-10_act-weibull_the-2.05_sampler-weibull_.pdf}
		\\
		& \makecell[b]{$\phi_{7.00}^{\acto}$ \\ \\ \\ \\} &
		\includegraphics[page=1,width=.20\linewidth]{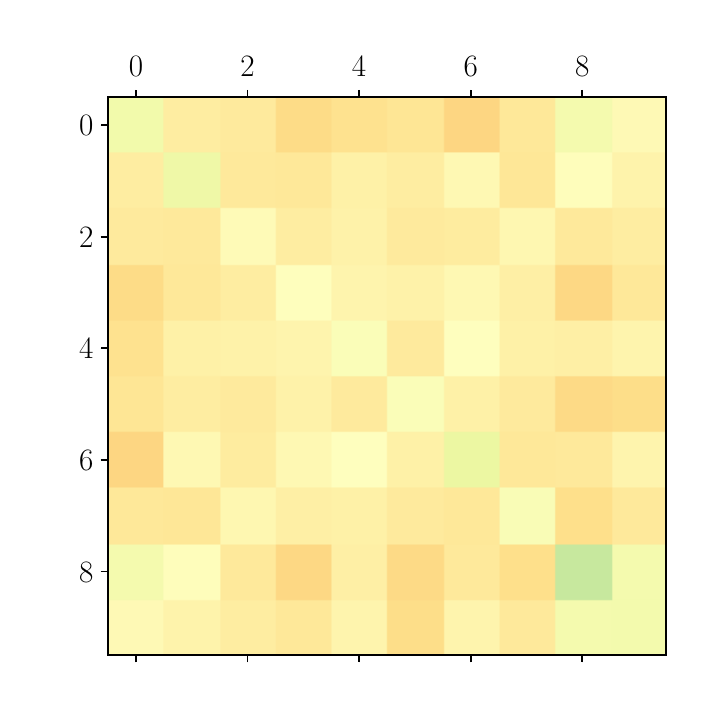}
		& 
		\includegraphics[page=11,width=.20\linewidth]{preactiv-init-cifar10-02/StatsEcab_wth-10_act-weibull_the-7.00_sampler-weibull_.pdf}
		&
		\includegraphics[page=31,width=.20\linewidth]{preactiv-init-cifar10-02/StatsEcab_wth-10_act-weibull_the-7.00_sampler-weibull_.pdf}
		&
		\includegraphics[page=51,width=.20\linewidth]{preactiv-init-cifar10-02/StatsEcab_wth-10_act-weibull_the-7.00_sampler-weibull_.pdf}
		\\
		\cmidrule{3-6}
		\makecell[b]{\parbox[t]{2mm}{\multirow{2}{*}{\rotatebox[origin=c]{90}{$n_l = 100$ neurons per layer}}}\\ \\ \\ 
		\\ \\} & \makecell[b]{$\phi_{2.05}^{\acto}$ \\ \\ \\ \\} &
		\includegraphics[page=1,width=.20\linewidth]{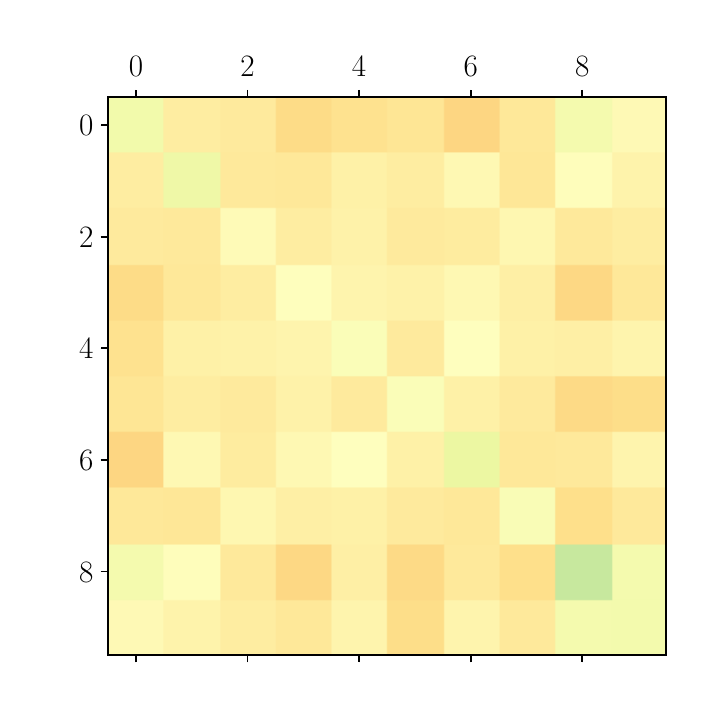}
		&
		\includegraphics[page=11,width=.20\linewidth]{preactiv-init-cifar10-02/StatsEcab_wth-100_act-weibull_the-2.05_sampler-weibull_.pdf}
		&
		\includegraphics[page=31,width=.20\linewidth]{preactiv-init-cifar10-02/StatsEcab_wth-100_act-weibull_the-2.05_sampler-weibull_.pdf}
		&
		\includegraphics[page=51,width=.20\linewidth]{preactiv-init-cifar10-02/StatsEcab_wth-100_act-weibull_the-2.05_sampler-weibull_.pdf}
		\\
		& \makecell[b]{$\phi_{7.00}^{\acto}$ \\ \\ \\ \\} &
		\includegraphics[page=1,width=.20\linewidth]{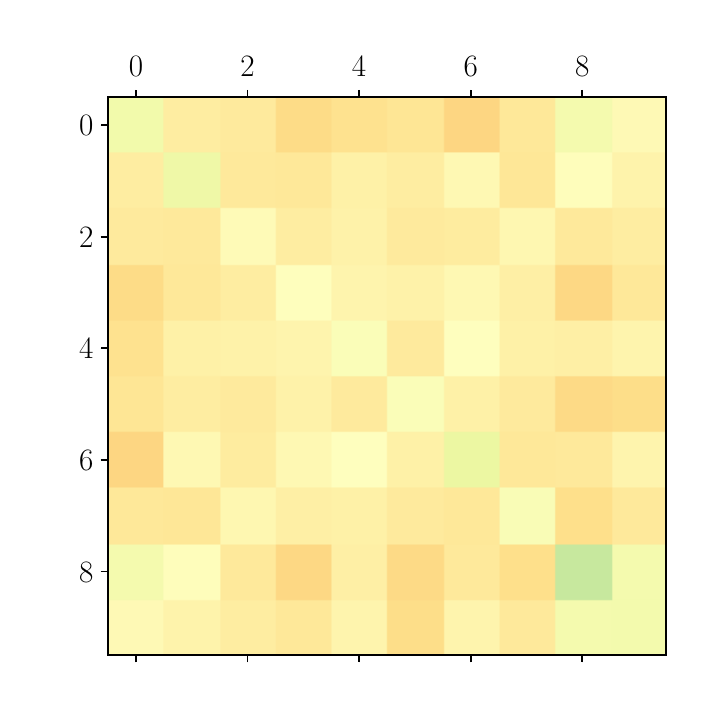}
		& 
		\includegraphics[page=11,width=.20\linewidth]{preactiv-init-cifar10-02/StatsEcab_wth-100_act-weibull_the-7.00_sampler-weibull_.pdf}
		&
		\includegraphics[page=31,width=.20\linewidth]{preactiv-init-cifar10-02/StatsEcab_wth-100_act-weibull_the-7.00_sampler-weibull_.pdf}
		&
		\includegraphics[page=51,width=.20\linewidth]{preactiv-init-cifar10-02/StatsEcab_wth-100_act-weibull_the-7.00_sampler-weibull_.pdf}
		
	\end{tabular}
	\begin{landscape}
		\hspace*{4mm}
		\begin{subfigure}{1.\linewidth}
			\includegraphics[width=.735\linewidth]{ColorBar.pdf}
			\caption*{\hspace*{-55mm}Correlation}
		\end{subfigure}
		\vspace*{-5mm}
	\end{landscape}
	\caption{Propagation of correlations $c^l_{ab}$ in a multilayer perceptron with activation function $\phi_{\theta}^{\acto}$
		with $\theta \in \{2.05, 7.00\}$ and inputs sampled from the CIFAR-10 dataset. 
		The weights are sampled from $\mathcal{W}(\theta, 1)$ and the biases are zero. 
		Each plot displays a $10 \times 10$ matrix $C_{pq}^l$ whose entries are the average correlation between the 
		pre-activations propagated by samples from classes $p,q\in \{0, \cdots , 9\}$, at the input and right after 
		layers 
		$l \in \{10, 30, 50\}$.} \label{fig:corr_suppl_phi}
\end{figure}

\subsection{Variance of the pre-activation when using $\phi = \phi_{\theta}^{\acto}$} \label{app:expe:ks}

As observed in Figure~\ref{fig:ks_test:no_std}, the product $W \phi_{\theta}(X)$ is above the KS threshold,
corresponding to $s = 10^7$ samples and a $p$-value of $0.05$. 
This is not necessarily the case in Figure~\ref{fig:ks_test:with_std}, where the samples are standardized. 
So, we suspect that the variance of $W \phi_{\theta}^{\acto}(X)$ is not exactly $1$.

Therefore, we have reported in 
Table~\ref{tab:ks_stddev} the empirical standard deviation of the product $W \phi(X)$ (i.e., $Z'$ with $n = 1$), 
computed with $s = 10^7$ samples, 
where $X \sim \mathcal{N}(0, 1)$, $W \sim 
\mathcal{W}(\theta, 1)$ if $\phi = \phi_{\theta}$ and $W \sim \mathcal{N}(0, 1)$ if $\phi = \tanh$ or $\relu$.
We observe that the standard deviation of $Z'$, which is expected to be $1$ with $\phi = \phi_{\theta}^{\acto}$, 
is actually a bit different. This would largely explain the differences between Figure~\ref{fig:ks_test:no_std}
and Figure~\ref{fig:ks_test:with_std}.

\begin{table}[h!]
	\caption{Empirical standard deviation of $Z'$ with $n = 1$.}
	\begin{center}
		\begin{tabular}{cccccccccc}
			\toprule
			&\multicolumn{7}{c}{$\phi^{\acto}_{\theta}$} & $\tanh$ & $\relu$\\
			
			$\theta$ & $2.05$ & $2.5$ & $3$ & $4$ & $5$ & $7$ & $10$ &  &  \\
			\midrule 
			$\bar{\sigma}$ &1.003 & 0.994 & 0.989 & 0.984 & 0.981 & 0.979 & 0.978 & 0.628 & 0.707 \\
			\bottomrule
		\end{tabular}
	\end{center}
	\label{tab:ks_stddev}
\end{table}

\subsection{Propagation of the pre-activations} \label{app:expe:prop_comp}

In this subsection, we show how the choice of the input data points affect the propagation of the pre-activations. 

Let $\mathcal{D}^l$ be the distribution of the pre-activation $Z^l_1$ after layer $l$. 
In Figure~\ref{fig:init:comparison_data_pts}, we have plotted the $\mathcal{L}^{\infty}$ distance between the CDFs of 
$\mathcal{D}^l$ and $\mathcal{N}(0, 1)$ for various input points, sampled from different 
classes to improve diversity between them. We have chosen the following setup: CIFAR-10 inputs, multilayer perceptron 
with $100$ layers 
and $100$ neurons per layer, Gaussian initialization at the EOC when using the activation function $\phi = \tanh$ or 
$\mathrm{ReLU}$, and symmetric Weibull initialization $\mathcal{W}(\theta, 1)$ 
when using $\phi = \phi^{\acto}_{\theta}$ or $\phi = \phi^{\actp}_{\theta}$. 

This setup has been selected to illustrate clearly the variability of $\mathcal{D}^l$ when using various data points. 
This variability can be observed with $10$ or $1000$ neurons per layer, though it is less striking.

\paragraph{Input normalization over the whole dataset.}
We perform the usual normalization over the whole dataset: 
\begin{align*}
	\hat{x}_{a;ij} &:= \frac{x_{a;ij} - \mu_i}{\sigma_i} , \\
	\mu_i &:= \frac{1}{N p_i} \sum_{\mathbf{x} \in \mathbb{D}} \sum_{j = 1}^{p_i} x_{ij} \\
	\sigma_i^2 &:= \frac{1}{N p_i - 1} \sum_{\mathbf{x} \in \mathbb{D}} \sum_{j = 1}^{p_i} (x_{ij} - \mu_i)^2 ,
\end{align*}
where $\hat{\mathbf{x}}$ is the normalized data point,  
$x_{a;ij}$ is the $j$-th component of the $i$-th channel of the input image $\mathbf{x}_a$, $p_i$ is the size 
of the $i$-th channel, and $N$ is the size of the dataset $\mathbb{D}$.

\paragraph{Results.}
In Figure \ref{fig:init:comparison_data_pts}, we observe that, 
when using the activation function $\phi = \tanh$, the shape of the curves is the same for all data 
points. This is not the case for $\mathrm{ReLU}$: for the input point ``bird'', the sequence 
$(\mathcal{D}^l)_l$ drifts away from 
$\mathcal{N}(0, 1)$ since the beginning, while for the input ``car'', $(\mathcal{D}^l)_l$ first becomes closer to 
$\mathcal{N}(0, 1)$, then drifts away. It is also the case for $\phi_{\theta}$ with $\theta = 10$: for the input 
``deer'', the distance remains high, while it starts low and increases very slowly (input ``dog''), or even decreases 
in the first place (input ``truck'').
But, in general, when the curves have converged after $100$ layers, it seems that the limit is the same whatever the 
starting data point, and depends only on the choice of the activation function.

The most striking result lies in the difference between Figures \ref{fig:init:comparison_data_pts}
and \ref{fig:init:comparison_data_pts_pos}. With $\phi = \phi^{\actp}_{\theta}$, 
the pre-activations tend \emph{always to be Gaussian}, whatever $\theta$ or the input data point.
This is not the case with $\relu$, $\tanh$ and some $\phi^{\actp}_{\theta}$ with large $\theta$. However, 
$\phi^{\actp}_{\theta}$ remain ``more Gaussian'' than $\relu$ or $\tanh$.

\begin{figure}[h!]
	\includegraphics[width=1.\linewidth]{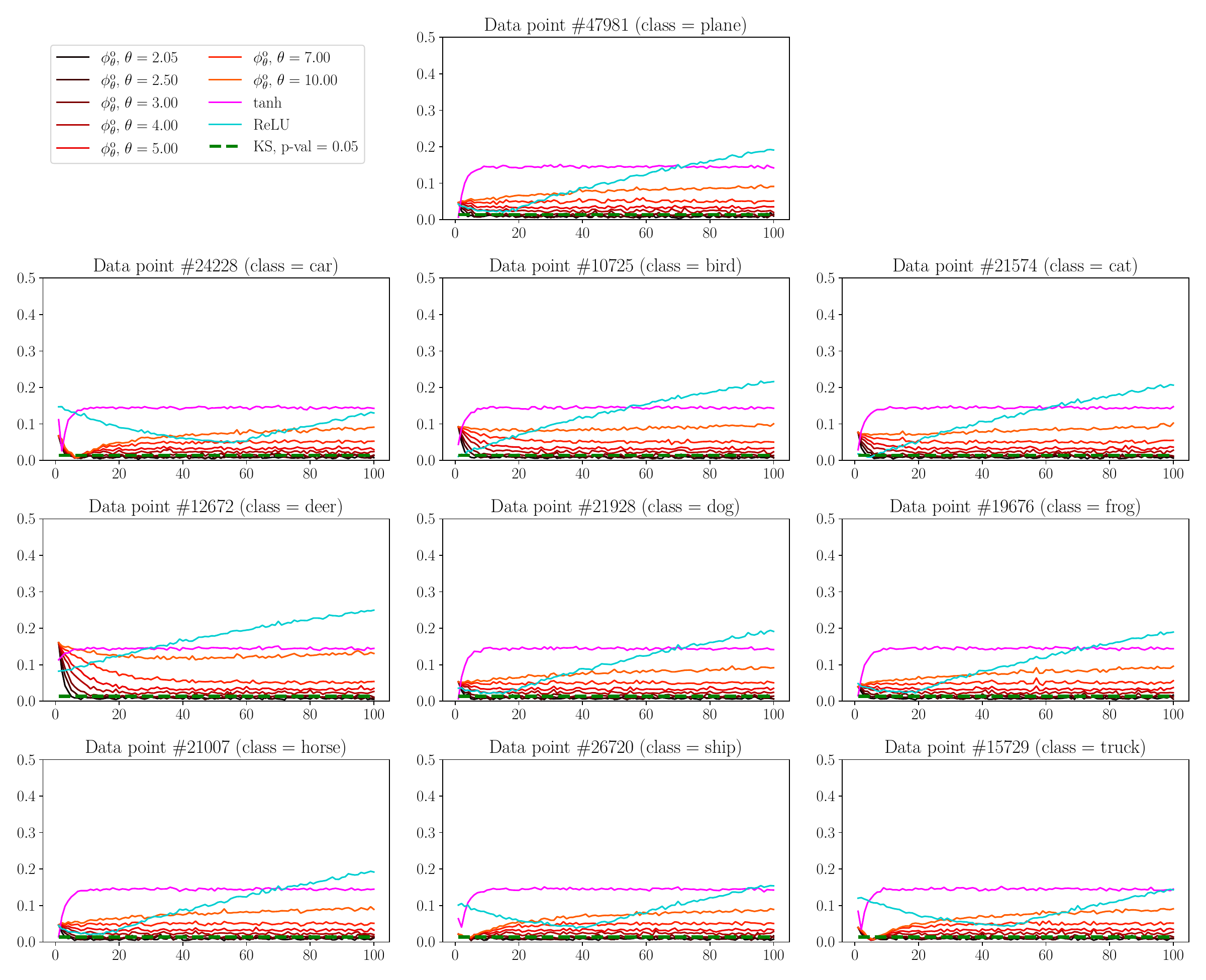}
	\caption{Activations functions $\tanh$, $\relu$ and $\phi^{\acto}_{\theta}$. \\
		Propagation of the distribution $\mathcal{D}^l$ of the pre-activations across the layers $l \in [1, 
		100]$ when inputting various data points of CIFAR-10. 
		Each curve represents the $\mathcal{L}^{\infty}$ distance between the CDF of $\mathcal{D}^l$ and the CDF of the 
		Gaussian $\mathcal{N}(0, 1)$.
		The dashed green line is the threshold of rejection of the Kolmogorov--Smirnov test with $p$-value $0.05$: 
		if a distribution $\mathcal{D}^l$ is represented by a point above this threshold, then the hypothesis 
		``$\mathcal{D}^l = \mathcal{N}(0, 1)$'' is rejected with $p$-value $0.05$.} \label{fig:init:comparison_data_pts}
\end{figure}

\begin{figure}[h!]
	\includegraphics[width=1.\linewidth]{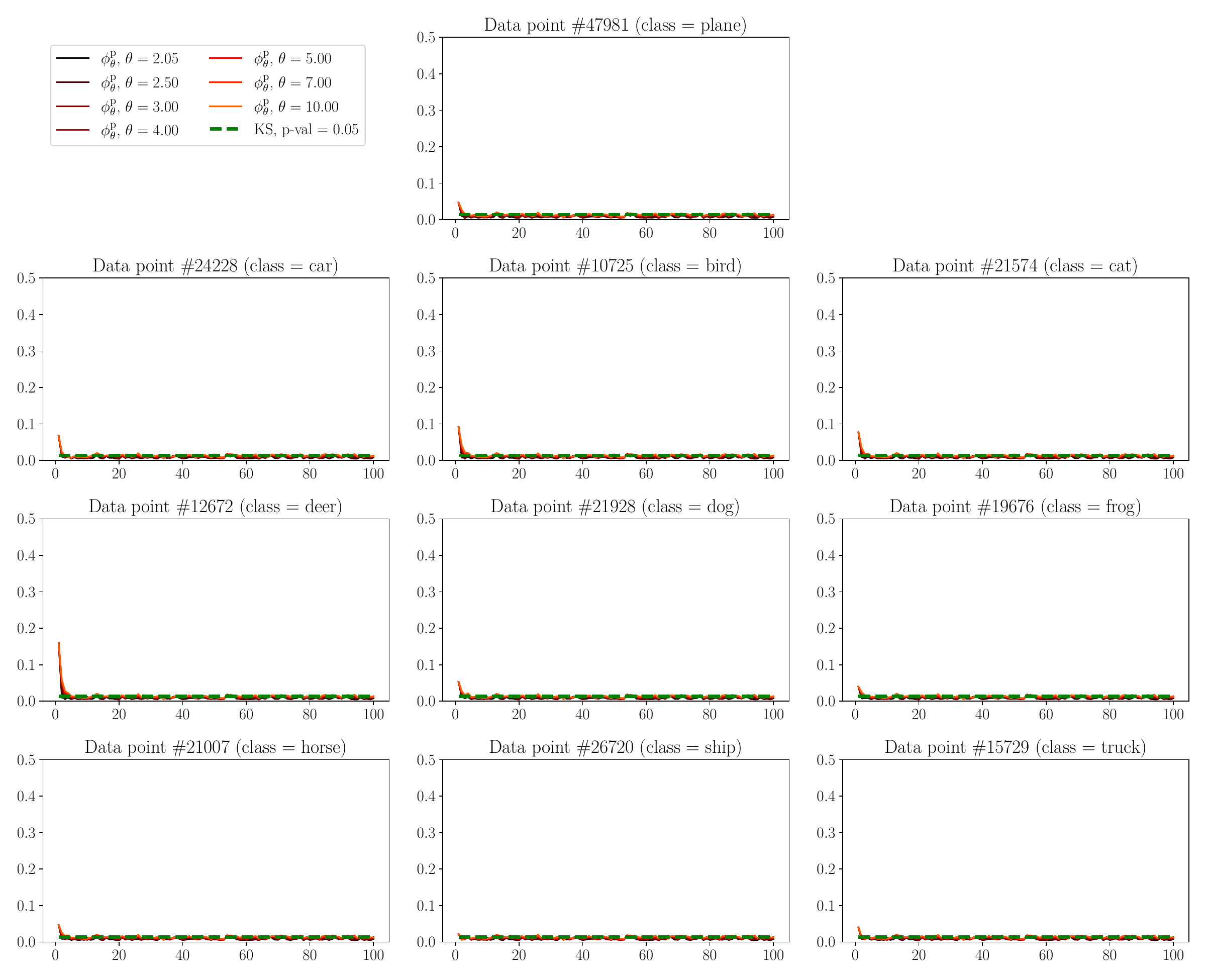}
	\caption{Activation functions $\phi^{\actp}_{\theta}$, same setup as in Figure~\ref{fig:init:comparison_data_pts}.} 
	\label{fig:init:comparison_data_pts_pos}
\end{figure}

\FloatBarrier

\subsection{Experimental details of the training procedure} \label{app:expe:training}

\paragraph{Training, validation, and test sets.}
For MNIST and CIFAR-10, we split randomly the initial training set into two sets: 
the training set, which will be actually used to train the neural network, and the validation set, 
which will be used to stop training when the network begins to overfit.

The sizes of the different sets are as follows:
\begin{itemize}
	\item MNIST: 50000 training samples; 10000 validation samples; 10000 test samples;
	\item CIFAR-10: 42000 training samples, 8000 validation samples; 10000 test samples.
\end{itemize}
The training sets are split into mini-batches with 200 samples each. 
No data augmentation is performed.

\paragraph{Loss.}
Given a classification task with $P$ classes, let $\mathbf{z}^L \in \mathbb{R}^P$ be the pre-activation outputted by 
the last layer of the neural network. First, we perform a $\mathrm{softmax}$ operation:
\begin{align*}
	y_p := \mathrm{softmax}(z^L_p) = \frac{\exp(z^L_p)}{\sum_{p' = 1}^{P} \exp(z^L_{p'})} .
\end{align*}
where the $(y_p)_p$ are the components of $\mathbf{y}\in \mathbb{R}^P$ and 
$(z^L_p)_p$ are the components of $\mathbf{z}^L$.
Then, we compute the negative log-likelihood loss.
For a target class $p \in \{1, \cdots, P\}$, we pose:
\begin{align*}
	\ell(\mathbf{y}, p) := - \log(y_p) .
\end{align*}

\paragraph{Optimizer.}
We use the Adam optimizer \citep{kingma2015adam} with the parameters:
learning rate $= 0.001$;
$\beta_1 = 0.9$;
$\beta_2 = 0.999$;
weight decay $= 0$.

\paragraph{Learning rate scheduler.}
We use a learning rate scheduler based on the reduction of the training loss.
If the training loss does not decrease at least by a factor $0.01$ for $10$ epochs, 
then the learning rate is multiplied by a factor $1/\sqrt{10}$.

\paragraph{Early stopping.}
We add an early stopping rule based on the reduction of the validation loss. 
If the validation loss does not decrease at least by a factor of $0.001$ for $30$ epochs, 
then we stop training.

\subsection{Additional training results: $\phi = \phi_{\theta}^{\actp}$} \label{app:expe:additional}

We provide here additional training experiments, with the same setup as in
Section \ref{sec:expe:training} but with our family of positive
activation functions $(\phi_{\theta}^{\actp})_{\theta}$.

\paragraph{Results with CIFAR-10 + LeNet.}
We have plotted in Figure \ref{fig:training:lenet_pos} the evolution of the training
loss and the test accuracy during training.
The test loss is slightly better than with the family of odd activation 
functions $(\phi_{\theta}^{\acto})_{\theta}$ (see Fig.~\ref{fig:training:lenet:ts_acc})
and the training loss is slightly worse (see Fig.~\ref{fig:training:lenet:tr_nll}).
Overall, the plots are very similar with both types of activations functions.

\begin{figure}[h!]
	\begin{subfigure}{1.\linewidth}
		\includegraphics[width=1.\linewidth]{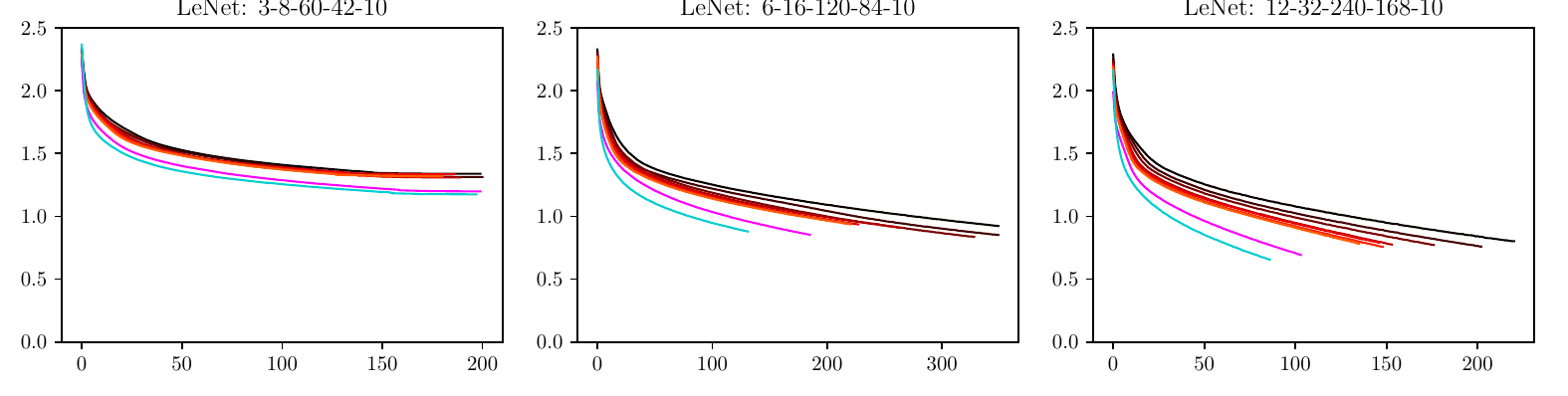}
		\subcaption{Training loss (negative log-likelihood).} 
		\label{fig:training:lenet:tr_nll_pos}
	\end{subfigure}
	
	~~\\
	
	\begin{subfigure}{1.\linewidth}
		\includegraphics[width=1.\linewidth]{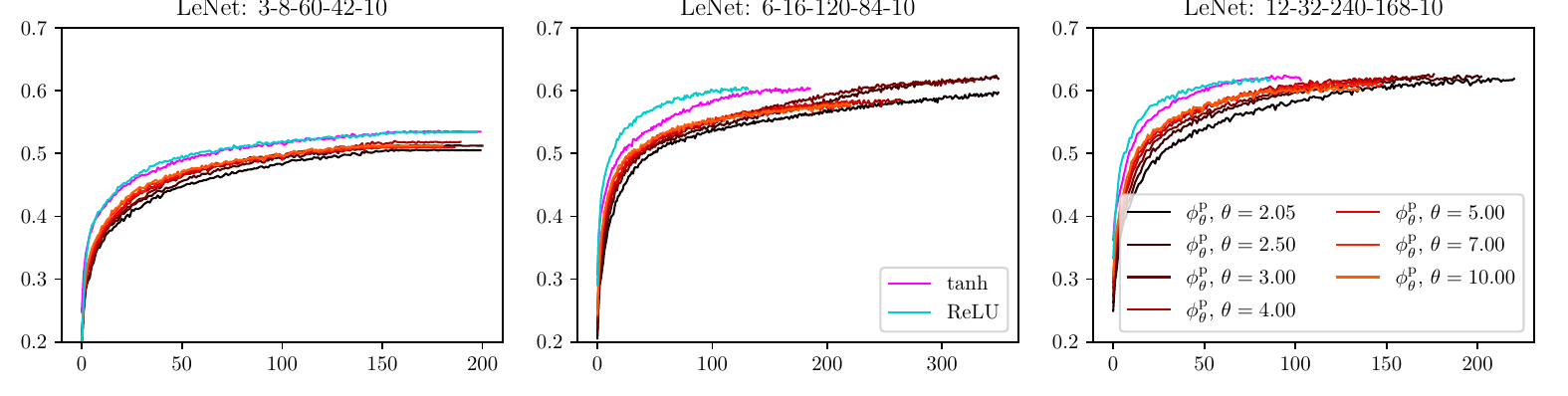}
		\subcaption{Test accuracy.} 
		\label{fig:training:lenet:ts_acc_pos}
	\end{subfigure}
	
	\caption{Training curves for CIFAR-10 + LeNet with 3 different numbers of neurons per layer
		and $\phi = \phi_{\theta}^{\actp}$.} 
	\label{fig:training:lenet_pos}
\end{figure}
\FloatBarrier

\paragraph{Results with MNIST + Multilayer perceptron.}
We have plotted in Figure \ref{fig:training:perc_pos} the results of the training
of a narrow multilayer perceptron, as in Figure \ref{fig:training:perc}.
In this extreme setup, the positive activation functions $\phi_{\theta}^{\actp}$
perform worse than the odd ones $\phi_{\theta}^{\acto}$. 
In particular, many trainings failed with a narrow and deep network ($L = 30$).

\begin{figure}[h!]
	\includegraphics[width=1.\linewidth]{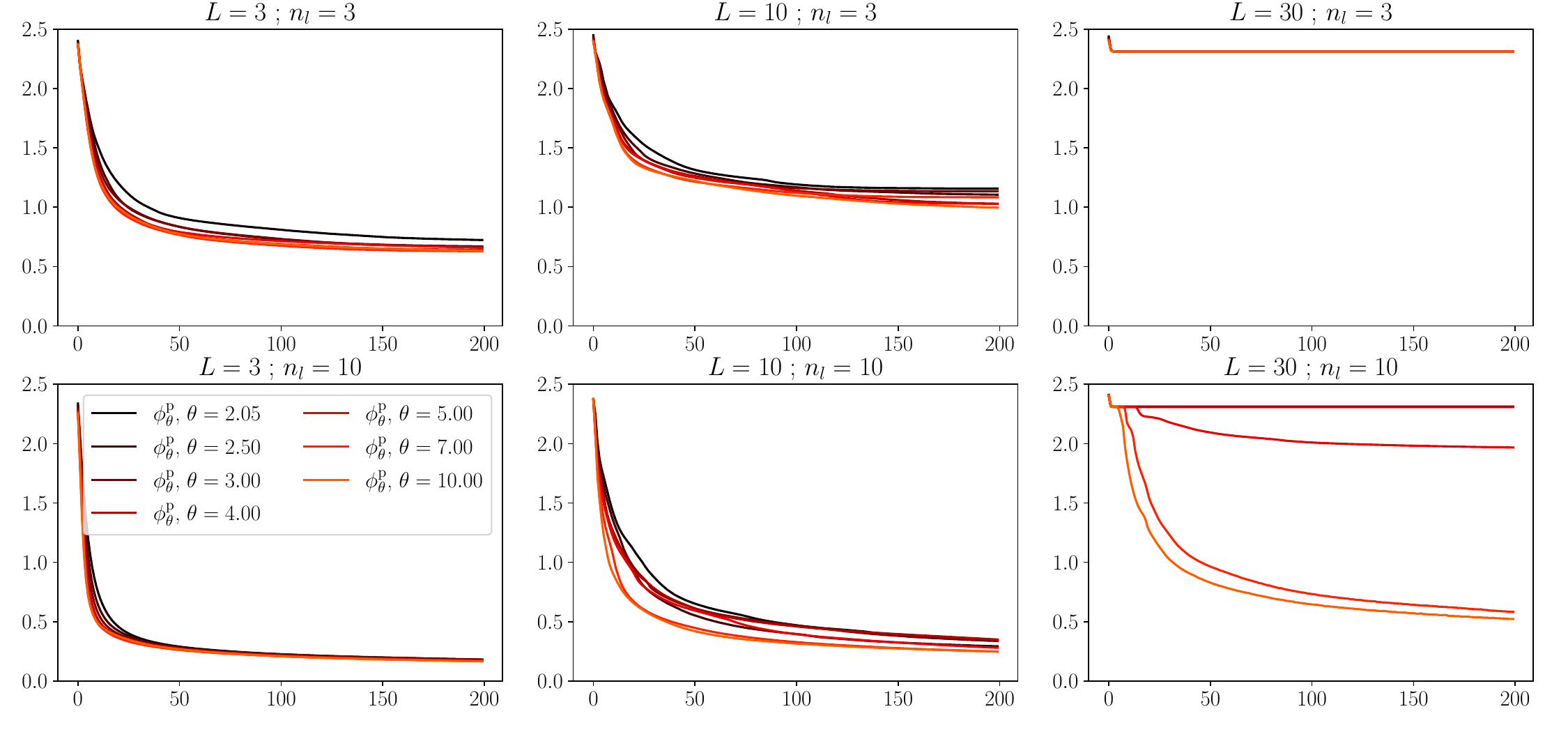}
	\caption{Training loss for a multilayer perceptron, narrow ($n_l \in \{3, 10\}$) and of various depths ($L \in \{3, 10, 30\}$).
		Training curves averaged over 5 experiments.} 
	\label{fig:training:perc_pos}
\end{figure}

\end{document}